\title{RACE Attention: A Strictly Linear-Time Attention Layer for Training on Outrageously Large Contexts}
\newcommand{\samethanks}[1][\value{footnote}]{\footnotemark[#1]}
\newcommand{\supercomma}{\textsuperscript{,}}
\date{}
\author{
  Sahil Joshi\thanks{Equal contribution.}~\supercomma\thanks{\scriptsize Department of Computer Science, Rice University, TX, USA. \texttt{sj157@rice.edu, as143@rice.edu, ac508@rice.edu, ask20@rice.edu, es100@rice.edu, et62@rice.edu}}%
  \and Agniva Chowdhury\samethanks[1]~\supercomma\samethanks[2]
  \and Amar Kanakamedala\samethanks[2]
  \and Ekam Singh\samethanks[2]
  \and Evan Tu\samethanks[2]
  \and Anshumali Shrivastava\samethanks[2]
}
\newcommand{\BlackBox}{\rule{1.5ex}{1.5ex}}  
\def\QED{~\rule[-1pt]{5pt}{5pt}\par\medskip}
\newenvironment{proof}{\par\noindent{\bf Proof\ }}{\hfill\BlackBox\\[2mm]}
\newtheorem{theorem}{Theorem}
\newtheorem{lemma}[theorem]{Lemma}
\newtheorem{corollary}[theorem]{Corollary}
\begin{document}

\maketitle

\begin{abstract}
\noindent
Softmax Attention has a quadratic time complexity in sequence length, which becomes prohibitive to run at long contexts, even with highly optimized GPU kernels. For example, FlashAttention-2/3 (exact, GPU-optimized implementations of Softmax Attention) cannot complete a single forward–backward pass of a single attention layer once the context exceeds $\sim\!4$ million tokens on an NVIDIA GH200 (96 GB). We introduce \textbf{R}epeated \textbf{A}rrays-of-\textbf{C}ount \textbf{E}stimators (RACE) Attention, a kernel-inspired alternative to Softmax Attention that is strictly linear in sequence length and embedding size. RACE Attention replaces the exponential kernel with a sharpened angular similarity, and approximates attention outputs via Gaussian random projections and \emph{soft} Locality-Sensitive Hashing (LSH), avoiding construction of the full attention matrix. Across language modeling, masked language modeling, and text/image classification, RACE Attention matches or outperforms strong baselines up to $64$K seqeuence length while reducing wall-clock time and memory usage. In addition, we conduct a controlled scaling study on a single attention layer and demonstrate processing of up to 12 million tokens on an NVIDIA GH200 GPU and 75 million tokens on an Intel Xeon\textsuperscript{\textregistered} Gold 5220R CPU in a single forward–backward pass, which is well beyond the capabilities of current state-of-the-art attention implementations. RACE Attention thus offers a practical and theoretically grounded mechanism for long-context training on today’s hardware. We release our code at \url{https://github.com/sahiljoshi515/RACE_Attention}.
\end{abstract}

\section{Introduction}
\label{sec:intro}
The Transformer \citep{vaswani2017attention, Dehghani2019Universal} is the backbone of modern sequence modeling across language, vision \citep{Parmar2018ImageTransformer}, and speech \citep{Luo2020SimplifiedSA}. We have seen remarkable improvements over the past few years in reasoning and understanding capabilities. Most of these are attributed to the increased parameters of the transformers along with the capability to process longer context windows than before. All this progress, however, rests on a computationally expensive primitive: Softmax Attention, whose time scales quadratically with context length. As models and contexts grow, spanning multi-document reasoning, long-form code, audio, and video, the quadratic barrier increasingly dictates who can train and deploy capable systems. Industrial labs mitigate the cost with large-scale distributed hardware; most practitioners cannot. There is a growing need for attention mechanisms that are \emph{accurate}, \emph{fast}, and \emph{memory-efficient}. To highlight the limits of Softmax Attention: even with FlashAttention-2/3~\citep{dao2023flashattention2, shah2024flashattention3}, the state-of-the-art GPU implementations, a single forward–backward pass of a single attention layer (1 batch, 4 heads, 128 embedding size) remains computationally and memory intensive and cannot process sequences beyond 
$\sim4$ million tokens on an NVIDIA GH200 (96 GB) GPU. Clearly, to achieve an outrageously long context where the target context size is hundreds of millions of tokens or beyond, fundamental rethinking of attention~\citep{bahdanau2014neural} will be required.

\noindent
\begin{figure}[H]
    \centering
    \includegraphics[width=1\linewidth]{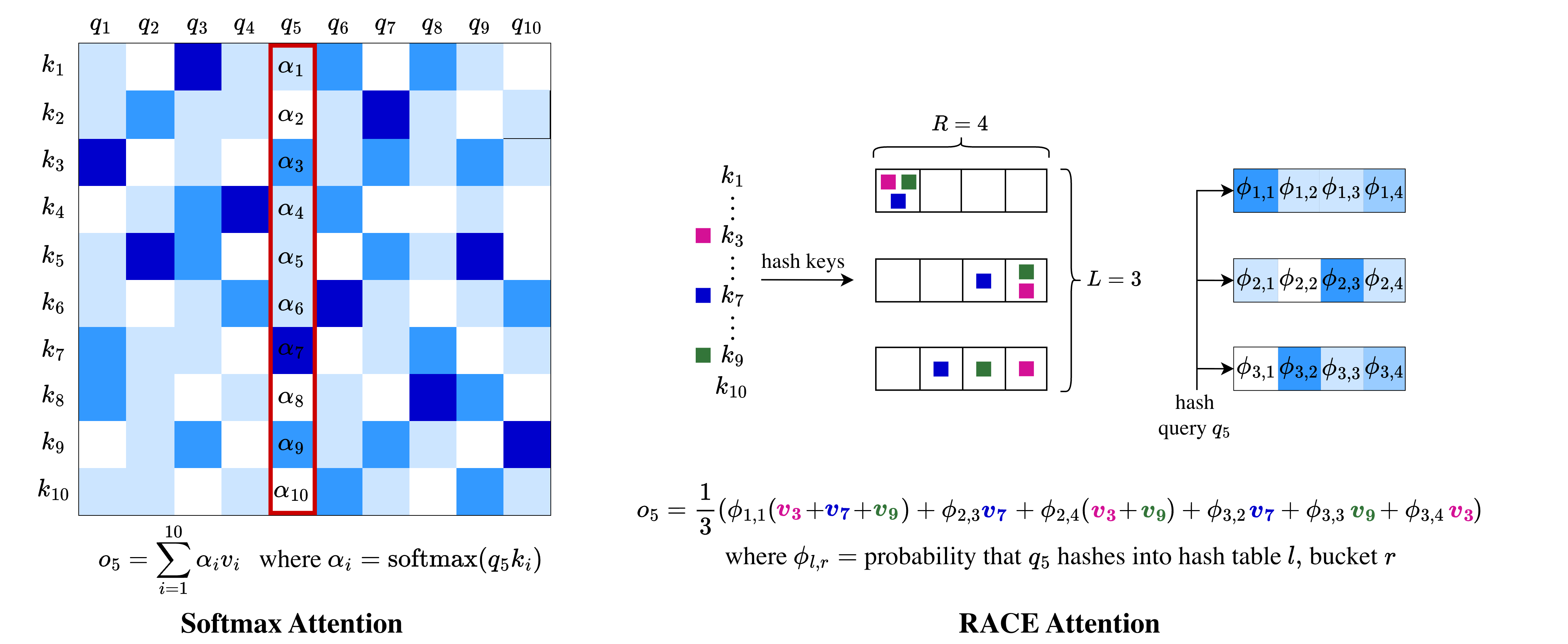}
    \vspace{-2mm}
    \caption{This figure demonstrates the difference between the linear complexity of RACE Attention and the quadratic complexity of Softmax Attention. Specifically, we highlight how the final representation $o_5$
  is computed under Softmax versus RACE. In Softmax, the entire fifth column of the attention score matrix is required. In contrast, RACE does not require the full matrix; instead, it aggregates statistics within LSH-mapped buckets, utilizing the appropriate collision probability $\alpha$ to compute $o_5$.}
\label{fig:softmax-vs-race}
\end{figure}
\noindent
\textbf{Linearized and Low-Rank Approximations to Quadratic Attention:} Due to the significance of the problem, a very large body of work attempts to accelerate attention by approximating softmax with linear approximations~\citep{zeng2021youonlysample} or clever kernel feature maps \citep{katharopoulos2020transformers, choromanski2020performer, peng2021rfa,  qin2022cosformer}. A work more closely related to ours is YOSO Attention~\citep{zeng2021youonlysample}, which also approximates a powered angular kernel using hard LSH. Despite this same kernel formulation, the underlying estimators differ substantially. RACE Attention leverages a smooth, differentiable relaxation of classical RACE-based hashing~\citep{ColemanS20-RACE-KDE} to approximate the powered angular kernel. More importantly, YOSO lacks formal theoretical guarantees on its approximation quality and a mechanism that enables causal language modeling. RACE addresses both of these limitations: Sections~\ref{sec:final-alg} and~\ref{sec:theory} develop a theoretically grounded estimator with quantifiable approximation error, and Algorithm~\ref{algo:soft-race-causal} presents a practical attention mechanism that naturally supports causal settings. We defer an in depth comparison between RACE and YOSO to Section~\ref{sec:rethinking-softmax}. Two notable lines of work in the direction of linearly approximating attention are Linear Attention~\citep{katharopoulos2020transformers} and Performer~\citep{choromanski2020performer}. Linear Attention replaces the softmax similarity with a simple positive kernel via a feature map, e.g. $\phi(x) = elu(x)+1$. This lets the attention be re-ordered into associative sums, achieving linear computation. Although such a kernel trick reduces computational complexity, it often degrades accuracy, as clearly demonstrated by our experiments in Section~\ref{sec:exp}.
Performer takes a different approach and cleverly leverages the classical idea of approximating the exponential of an inner product using Random Fourier Features~\citep{rahimi2007random}. However, this strategy comes with its own drawbacks. In particular, the method incurs a time complexity that is quadratic in the embedding size, which offsets many of the intended computational savings. Furthermore, it is well established that approximations based on Random Fourier Features require high-dimensional representations to achieve satisfactory accuracy~\citep{Backurs2017FineGrainedERM}. Our experimental results in fig.~\ref{fig:scaling} reinforce this limitation by showing that these methods exhibit poor scalability in practice.

\noindent
Another category of work replaces the full $N \times N$ attention matrix with a low-rank surrogate. Some methods learn length-wise projections for keys/values (e.g., \emph{Linformer}), while others use Nyström landmarks to approximate Softmax Attention matrix with a rank-$k$ decomposition (e.g., \emph{Nyströmformer}). These approaches reduce the leading cost from $O(N^2d)$ to $O(Nkd)$, 
at the cost of choosing (and occasionally increasing) $k$ to maintain accuracy~\citep{wang2020linformer,xiong2021nystromformer}. Moreover, these methods provide no support for autoregressive tasks. As shown in Section~\ref{sec:exp}, our method outperforms Linformer in accuracy despite Linformer having $13\%$ more parameters than the other methods. Beyond the empirical shortcomings, a deeper conceptual issue persists: existing approximation approaches lack a rigorous mathematical framework to characterize the trade-offs between efficiency and accuracy. For example, while Performer provides strong kernel-approximation guarantees, a general framework connecting efficiency knobs (e.g., feature count $m$) to downstream accuracy remains limited, and strong accuracy frequently entails large $m$ in practice. As a result, design decisions often appear ad hoc and fragile, leaving methods vulnerable to instability between tasks and settings. Taken together, these limitations explain why, despite the abundance of approximations, Softmax Attention continues to remain the most widely adopted and reliable formulation.
\\

\noindent
\textbf{Sparsity is Complementary:} We note that there is also a popular line of work ~\citep{beltagy2020longformer, zaheer2020big, kitaev2020reformer, Han2023HyperAttention, Petrick2022LocalitySensitive} that exploits structural information in natural language, with sparsity in attention being among the most widely studied. These approaches are complementary to our proposal, which focus on making the attention mechanism itself more efficient and mathematically grounded. In principle, our method can be integrated with structural priors such as sparsity to further improve scalability and accuracy. However, since our objective in this paper is to develop fundamentally efficient attention, we will not discuss this line of structural approaches further, instead we view combining them with our method as an important direction for future work.
\\

\noindent
\textbf{Key Idea:} Standard attention relies on the well-known softmax function, computing  
\begin{align}
O = \mathrm{softmax}\!\left(\tfrac{QK^\top}{\sqrt{d}}\right)V\,,
\label{eq:softmax}
\end{align}
where the softmax is applied row-wise so that attention weights are nonnegative and sum to one.
\noindent
In this paper, we leverage an alternative to the softmax---namely, a higher-degree monomial of an \emph{angular} kernel based on cosine geometry:  
\begin{align}
O = \left(1 - \left(\frac{\cos^{-1}\!(QK^\top)}{\pi}\right)\right)^{\gamma}V
\label{eq:arcos}
\end{align}
Eq.~\ref{eq:arcos} should be read as an \emph{informal} analogue to Eq.~\ref{eq:softmax}, where the angular kernel replaces the exponential. A more precise definition, 
with explicit cosine normalization and row-wise normalization, is given in Section~\ref{sec:rethinking-softmax}. We argue that, for sufficiently large values of $\gamma$, this formulation closely mimics the behavior of softmax and refer to it as \emph{Angular Attention} in the subsequent sections. Importantly, it admits a linear-time approximation algorithm. In particular, we leverage the connection (Section~\ref{sec:race-born}) between \textbf{R}epeated \textbf{A}rrays-of-\textbf{C}ount \textbf{E}stimators (RACE)~\citep{ColemanS20-RACE-KDE, Luo2018ACE} and the angular kernel to design our algorithm in Section~\ref{sec:final-alg}. We therefore refer to our proposed method as \emph{RACE Attention}.

\noindent
RACE Attention is a drop-in replacement for Softmax Attention. We evaluate it in Transformers on causal language modeling, masked language modeling, and text/image classification (Section~\ref{sec:exp} and Appendix~\ref{sec:more-exp}). By reframing similarity around a powered angular kernel and using differentiable LSH-based sketches, it provides a principled alternative that supports very long contexts on commodity hardware. The sketching view keeps constant factors small: each query mixes with only a fixed bank of $S=LR$ bucket summaries rather than all $N$ keys. Since we never materialize the full attention matrix, the working set stays compact and activation memory drops, enabling much longer sequences with reduced latency. In contrast, FlashAttention-2/3~\citep{dao2023flashattention2, shah2024flashattention3} reduce the memory footprint of attention via tiling, but still require computing all key-query interactions, preventing processing of much longer sequences at comparable speed, as shown in fig.~\ref{fig:scaling}. In addition to our novel findings about RACE Attention and rigorous supporting experimental evidence, we provide the following: \\
\textbf{I. Long-context scaling:} RACE scales to sequence lengths far beyond prior attention mechanisms, processing up to \emph{75M tokens} on CPU and \emph{12M tokens} on GPU in a single forward–backward pass of a single attention layer, using the same hyperparameters as in our accuracy evaluations.
\\
\textbf{II. Trainable RACE:} We introduce a differentiable sketch by replacing hard hashing with smooth soft assignments over hypercube corners, enabling end-to-end training. 
\\
\textbf{III. CPU/GPU pre-training:} Our approach supports both \emph{causal} (autoregressive) and \emph{non-causal} (bidirectional) pre-training on CPU and GPU, with custom OpenMP/CUDA kernels that compute causal prefix operations in a single streaming pass for linear-time, memory-efficient training. \\
\textbf{IV. Theoretical Insights:} Section~\ref{sec:theory} establishes approximation guarantees based on the LSH framework and analyzes how the parameters $L$ (number of hash tables) and $R$ (buckets per table) govern the variance–accuracy tradeoff.
\vspace{-10pt}
\section{Background}
\subsection{Locality-Sensitive Hashing (LSH)}
An LSH family $\mathcal H$ for a similarity $\mathrm{Sim}$ makes near pairs collide more often than far pairs. Formally, $\mathcal H$ is $(S_0,cS_0,p_1,p_2)$-sensitive if for all $x,y\in\mathbb{R}^D$,
\[
\begin{cases}
\mathrm{Sim}(x,y)\ge S_0 \;\Rightarrow\; \Pr_{h\sim\mathcal H}[h(x)=h(y)]\ge p_1,\\[2pt]
\mathrm{Sim}(x,y)\le cS_0 \;\Rightarrow\; \Pr_{h\sim\mathcal H}[h(x)=h(y)]\le p_2,
\end{cases}
\]
where $p_1 > p_2$ and $c < 1$. Such families enable sublinear-time approximate nearest-neighbor data structures. A convenient sufficient condition, satisfied by SimHash and WTA \citep{Charikar2002,Yagnik2011,ChenShrivastava2018}, is that the collision probability is a monotone function of similarity,
$\Pr_{h\sim\mathcal H}[h(x)=h(y)] = f(\mathrm{Sim}(x,y))$ with $f$ increasing.

\subsection{RACE Sketch}
\label{sec:race-born}
RACE \citep{ColemanS20-RACE-KDE, pmlr-v119-coleman20a} shows that any similarity expressible as a (non-negative) linear combination of LSH collision kernels can be sketched using ACE-style estimation~\citep{Luo2018ACE}. It provides an
\emph{unbiased} estimator of kernel-density sums for LSH collision kernels and their
\emph{powers}. In particular, RACE estimates
$\sum_{x\in D} k(x,q)^p$ by hashing items into counters and reading the counters
addressed by the query; averaging across $L$ independent rows reduces variance.

\begin{lemma}[Theorem 1 of {\citep{ColemanS20-RACE-KDE}}]
Given a dataset $D$, an LSH family $H$ with finite range $[1, R]$ and a parameter $p$, construct an LSH function $h(x) \to [1, R^p]$ by concatenating $p$ independent hashes from $H$. Let $A$ be an ACE array constructed using $h(x)$. Then for any query $q$,
\[\mathbb{E}\!\big[A[h(q)]\big]
\;=\;
\sum_{x\in D} k(x,q)^{p}
\]
\end{lemma}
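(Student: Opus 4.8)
The plan is to unwind the definition of the ACE array and then combine linearity of expectation with the independence of the concatenated hash coordinates. First I would fix the dataset $D$ and the query $q$, and treat the \emph{only} source of randomness as the draw of the composite hash $h$, which is built by concatenating $p$ independent copies $h_1,\dots,h_p\sim H$. By construction, the ACE array starts at zero and each $x\in D$ increments exactly the single counter indexed by $h(x)$, so $A[j]=\sum_{x\in D}\mathbf{1}\{h(x)=j\}$ for every bucket $j\in[1,R^p]$. Reading the counter addressed by the query therefore gives the collision count $A[h(q)]=\sum_{x\in D}\mathbf{1}\{h(x)=h(q)\}$.

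Second, I would push the expectation inside the sum by linearity, reducing the claim to a per-point statement. Since the expectation of an indicator is the probability of the underlying event,
\[
\mathbb{E}\big[A[h(q)]\big]=\sum_{x\in D}\Pr\big[h(x)=h(q)\big].
\]
The remaining task is to identify $\Pr[h(x)=h(q)]$ with $k(x,q)^p$, where $k(x,q)=\Pr_{h'\sim H}[h'(x)=h'(q)]$ is the single-hash collision kernel.

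Third---and this is the only step with any content---I would factor the collision event over the $p$ coordinates. The event $h(x)=h(q)$ holds if and only if $h_i(x)=h_i(q)$ for every $i\in\{1,\dots,p\}$; because the $h_i$ are mutually independent, the probability factorizes as $\prod_{i=1}^{p}\Pr[h_i(x)=h_i(q)]$, and because the $h_i$ are identically distributed as $H$, each factor equals $k(x,q)$, yielding $k(x,q)^p$. Substituting back into the displayed identity gives $\mathbb{E}[A[h(q)]]=\sum_{x\in D}k(x,q)^p$, as claimed.

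The argument is essentially bookkeeping, so I do not anticipate a genuine obstacle; the one place to be careful is to state explicitly that the expectation is taken over the random hash (not over the data, which is fixed), and to make precise that concatenating $p$ independent hashes is exactly the mechanism that converts the collision kernel $k$ into its $p$-th power. No concentration or tail bound is needed for this expectation statement itself---such tools would enter only when controlling the variance of the estimator across the $L$ independent rows.
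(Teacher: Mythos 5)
Your argument is correct. The paper does not actually prove this lemma---it is imported verbatim as Theorem~1 of the cited RACE reference \citep{ColemanS20-RACE-KDE}---but your three steps (unwinding the ACE counter as a sum of collision indicators, applying linearity of expectation, and factorizing the collision event over the $p$ independent coordinates to turn the collision kernel $k(x,q)$ into $k(x,q)^p$) are exactly the standard proof given in that reference. Your closing caveats are also the right ones: the expectation is over the random hash draw only, and no concentration argument is needed at this stage; variance control enters later via the $L$ independent repetitions, which is precisely how the paper uses this lemma.
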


\section{Introducing RACE Attention}
\begin{algorithm}[H]
\caption{RACE Attention (non-causal)}
\label{algo:soft-race}
\textbf{Input:} $Q,K,V\in\mathbb{R}^{N\times d}$; number of hash tables $L$; 
number of hyperplanes $P$; temperature $\beta>0$.\\
\textbf{Output:} $\widehat O \in \mathbb{R}^{N\times d}$.
\begin{algorithmic}[1]

\For{$\ell = 1,\dots,L$}
  \State Draw $W^{(\ell)} \in \mathbb{R}^{P\times d}$ with rows $w_p^{(\ell)} \stackrel{\text{i.i.d.}}{\sim}\mathcal N(0,I_d)$.
  \State Define the corner set $\mathcal V=\{\pm1\}^P$ ($R=2^P$) with $v_r\in\{\pm1\}^P$.
  \State Build $\Phi_Q^{(\ell)},\Phi_K^{(\ell)}\in\mathbb{R}^{N\times R}$ with rows
  \[
    [\phi^{(\ell)}(x)]_r=\frac{\exp\{\beta\,(\tanh(W^{(\ell)}x))^\top v_r\}}
    {\sum_{r'}\exp\{\beta\,(\tanh(W^{(\ell)}x))^\top v_{r'}\}},
    \quad x\in\{Q_i,K_j\}
  \]
  \State Per-table bucket statistics:
  \[
  A^{(\ell)} \;=\; (\Phi_K^{(\ell)})^\top \mathbf 1_N \in \mathbb{R}^{R}, 
  \qquad
  B^{(\ell)} \;=\; (\Phi_K^{(\ell)})^\top V \in \mathbb{R}^{R\times d}.
  \]
\EndFor

\State Compute average across tables: $\texttt{Num} \;=\; \frac{1}{L}\sum_{\ell=1}^L \Phi_Q^{(\ell)} B^{(\ell)}$ and $\texttt{Den} \;=\; \frac{1}{L}\sum_{\ell=1}^L \Phi_Q^{(\ell)} A^{(\ell)}$.

\State Return $\widehat{O} \gets \operatorname{diag}(\texttt{Den})^{-1}\,\texttt{Num}$

\end{algorithmic}
\end{algorithm}
\subsection{Softmax-Like Similarities that Admit Linear-Time Estimation}
\label{sec:rethinking-softmax}
\begin{wrapfigure}{r}{0.4\textwidth}
\centering
\includegraphics[width=0.35\textwidth]{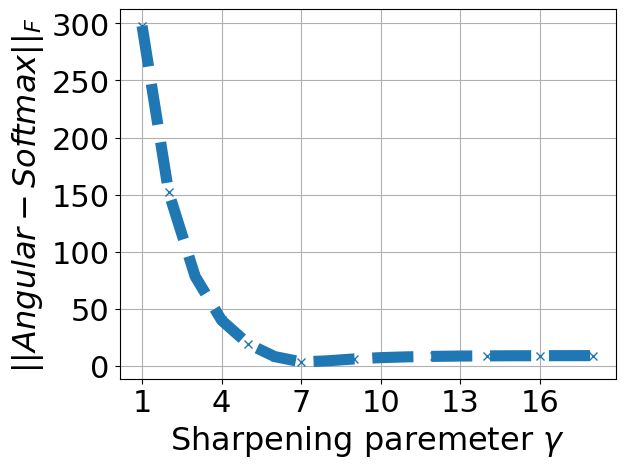}
\caption{Frobenius error between Angular and Softmax Attention vs. $\gamma$.}
\label{fig:frobenius}
\end{wrapfigure}
Given a sequence of $N$ tokens, a Transformer produces for each position $i$
a query $Q_i \in \mathbb{R}^d$, and for every position $j$ a key
$K_j \in \mathbb{R}^d$ and a value $V_j \in \mathbb{R}^d$.
The output at position $i$ is a weighted sum of the values,
where the weight on $V_j$ reflects the relevance of token $j$ to token $i$. In the standard formulation \citep{vaswani2017attention}, relevance is computed via the scaled dot product given by Eq.~\ref{eq:softmax}. This choice guarantees two useful properties of the attention weights: (i) non-negativity and (ii) they sum to one, so $O_i$ is a convex combination of the values. Equally important, the exponential introduces a strong non-linear mapping from similarity scores to attention weights, amplifying small score differences.  
This observation suggests a broader view: attention weights can be derived from any normalized highly non-linear (exponential like) similarity function. 
Let $\mathrm{sim}:\mathbb{R}^d \times \mathbb{R}^d \to \mathbb{R}_{\ge 0}$
be any non-negative similarity function. We can define normalized
\emph{similarity attention} as
\begin{flalign}
O_i \;=\;
\frac{\sum_{j=1}^{N} \mathrm{sim}(Q_i, K_j)\, V_j}{
      \sum_{j=1}^{N} \mathrm{sim}(Q_i, K_j)}\label{eq:oi}    
\end{flalign}
\noindent
Our quest is for finding non-linear (softmax-like) similarity kernels that admit accurate linear-time estimation, eliminating the quadratic cost of attention in both training and inference. Similar to YOSO~\citep{zeng2021youonlysample}, we argue that a good starting point is a well known LSHable~\citep{ColemanS20-RACE-KDE, Choromanski2017SROE} \emph{angular} similarity. It is well behaved and normalized, in particular, it depends only on the angle between the vectors
$Q_i$ and $K_j$ and is invariant to their norms:
$
\mathrm{sim}(Q_i,K_j)
= 1 - \cos^{-1}\!\left(\frac{Q_i^\top K_j}{\lVert Q_i\rVert\,\lVert K_j\rVert}\right) / \pi
$. However, unlike exponential in softmax, the raw angular kernel is relatively flat near high similarity values, reducing its ability to sharply discriminate between nearly aligned vectors. To increase contrast, we propose to exponentiate the angular kernel with a sharpening parameter $\gamma$, which accentuates differences among highly similar pairs. After sharpening the kernel, the similarity function becomes as follows:
\begin{equation}
\mathrm{sim}(Q_i,K_j)
= \left(1 - \cos^{-1}\!\left(\frac{Q_i^\top K_j}{\lVert Q_i\rVert\,\lVert K_j\rVert}\right) / \pi\right)^{\gamma}
\label{eq:exp-angular-sim}
\end{equation}

\noindent
In fig.~\ref{fig:heatmap}, we show that for sufficiently large $\gamma$, the angular kernel becomes almost indistinguishable from softmax kernel. This is expected because a higher degree monomial like $x^{12}$ behaves similarly to an exponential. Furthermore, in fig.~\ref{fig:frobenius} we plot the frobenius error between Angular and Softmax Attention. The error sharply decreases as $\gamma$ increases, demonstrating that softmax-level sharpness can be achieved with modest polynomial degree (\emph{e.g.}, $\gamma= 8$).

\begin{figure*}[t]
    \centering
    \includegraphics[width=1\textwidth]{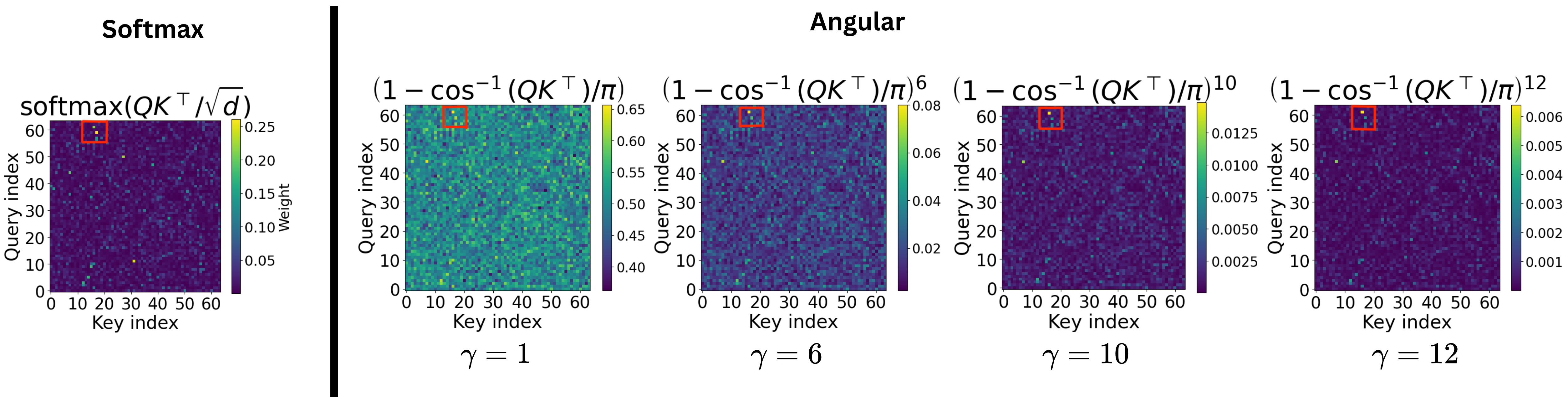}
    \caption{Comparison of Softmax and Angular kernels at different sharpening levels $\gamma$.
    As $\gamma$ (non-linearity) increases, Angular transitions from flat similarity scores
    to a sharper distribution, recovering behavior similar to the exponential in Softmax.}
    \label{fig:heatmap}
\end{figure*}
\noindent
In its current form, evaluating the attention with similarity given by Eq.~\ref{eq:exp-angular-sim} is no different from softmax. It naively still requires all $N^2$ query-key interactions. Fortunately, any constant exponentiation of angular kernel, belongs to a family, that admits efficient kernel density estimation using RACE sketches~\citep{ColemanS20-RACE-KDE}, and we use these sketches to approximate the kernel in linear time obtaining an algorithmically efficient alternative to Softmax Attention! 

\noindent
Finally, as outlined in Section~\ref{sec:intro}, YOSO~\citep{zeng2021youonlysample} and our method operate on the same kernel in Eq.~\ref{eq:exp-angular-sim}, but differ fundamentally in how attention is computed. YOSO employs hard LSH to generate Bernoulli collision indicators, yielding an unbiased estimator of the unnormalized attention numerator $\sum_{j=1}^N \mathrm{sim}(Q_i, K_j) V_j$ in Eq.~\ref{eq:oi}. It then applies a post-hoc $\ell_2$ normalization, which does not correspond to standard attention normalization. Since this hard formulation is non-differentiable, YOSO relies on surrogate lower-bound gradients derived from additional Bernoulli samples rather than directly differentiating the kernel. This introduces quadratic complexity in the embedding dimension $d$, resulting in poor scalability during end-to-end training (see fig.~\ref{fig:yoso}). To see how RACE constructs a smooth, differentiable LSH-based estimator that is linear in $d$ in contrast to YOSO, refer to Section~\ref{sec:final-alg}.

\subsection{The Final Algorithm}
\label{sec:final-alg}
\vspace{-5pt}
At a high level, RACE avoids explicitly approximating the $N{\times}N$ attention matrix, which would otherwise remain quadratic. Instead, RACE sketches the sufficient statistics needed to compute attention outputs directly in linear time. Fig.~\ref{fig:softmax-vs-race} illustrates this difference using $o_5$, the output embedding of token 5. In Softmax Attention, computing $o_5$ requires the full attention column, whereas RACE softly assigns queries and keys (Qs and Ks) into $R$ LSH-indexed bucket summaries under the same LSH scheme. The values are weighted by the key probabilities and then combined with the query probabilities. Averaging over $L$ independent hash tables further reduces variance, following standard sketching practice. Comprehensive visualizations are provided in Appendix (figs.~\ref{fig:complete-figure}, \ref{fig:intuition}).

\noindent
We next formalize the RACE Attention mechanism in Algorithm~\ref{algo:soft-race}. As described in Section~\ref{sec:race-born}, one final technical hurdle remains: the RACE algorithm is non-differentiable. We get around the non-differentiability of RACE sketches by replacing discrete bucket assignments with soft probabilities and using standard cross-entropy loss, preserving differentiability for end-to-end training. Algorithm~\ref{algo:soft-race} consists of three key stages:  
(i) \textbf{Soft bucketization:} Each query/key $x\in\mathbb{R}^d$ is randomly projected via $W^{(\ell)}$ hyperplanes and softly assigned to $R=2^P$ corners with distribution $\phi^{(\ell)}(x)$ (steps 2--4),  
(ii) \textbf{Bucket aggregation:} For each table $\ell$, we form per-bucket statistics by accumulating key weights and their weighted values, namely the mass vector $A^{(\ell)}\in\mathbb{R}^{R}$ and the value-sum matrix $B^{(\ell)}\in\mathbb{R}^{R\times d}$, so that $A^{(\ell)}[r]$ is the total (soft) mass in bucket $r$ and $B^{(\ell)}[r,:]$ is the corresponding sum of values. (step 5),  
(iii) \textbf{Global normalization:} The algorithm averages across $L$ tables to form $\texttt{Num}=\tfrac{1}{L}\sum_{\ell}\Phi_Q^{(\ell)}B^{(\ell)}$ and $\texttt{Den}=\tfrac{1}{L}\sum_{\ell}\Phi_Q^{(\ell)}A^{(\ell)}$, and reconstructs the final outputs as $\widehat{O}=\operatorname{diag}(\texttt{Den})^{-1}\,\texttt{Num}$ (steps 7--8). 
\noindent
A useful way to interpret Algorithm~\ref{algo:soft-race} is through the kernel perspective introduced in Section~\ref{sec:race-born}. In classical RACE~\citep{ColemanS20-RACE-KDE}, $P$ random hyperplanes generate a hash $h(x)=\operatorname{sign}(W^{(\ell)}x)$, and two vectors collide under this hash with probability
$\Pr[h(Q_i)=h(K_j)] = S_{ij}:= \mathrm{sim}(Q_i,K_j),$
which is exactly the $P$-powered angular kernel in Eq.~\ref{eq:exp-angular-sim} with $\gamma=P$. In soft RACE, we keep this geometric structure intact, but replace the hard sign map with a smooth approximation: we compute a ``soft'' sign vector $\tanh(W^{(\ell)}x)$ and evaluate its alignment with each of the $R=2^P$ corner sign patterns $v_r\in\{\pm 1\}^P$, assigning $x$ to buckets with nonzero probability via a softmax over these alignments. This turns the discrete collision event of classical RACE into a differentiable quantity while preserving its underlying angular dependence. In particular, vectors with small angular distance still assign most of their mass to the same buckets, mirroring the behavior of the $P$-powered angular kernel. As a result, the per-table quantity $\phi^{(\ell)}(Q_i)^\top \phi^{(\ell)}(K_j)$ serves as a smooth approximation to the $P$-powered angular similarity that our attention mechanism seeks to approximate. We formalize these kernel quantities in the Section~\ref{sec:theory}, where the RACE-based approximation $\widehat S_{ij}$ is introduced.

\noindent
\textbf{Computational Complexity:} 
The per-table runtime of Algorithm~\ref{algo:soft-race} can be decomposed according to its main steps: Step~2 (random projections) costs $\mathcal{O}(N d P)$, Step~3 (logits over $R=2^P$ corners) costs $\mathcal{O}(N P R)$, and Step~5 (bucket aggregation) costs $\mathcal{O}(N R d)$.  
The global accumulation in Step~7 adds $\mathcal{O}(N R d)$ \emph{per table}. Thus, the per-table runtime is 
$\mathcal{O}(N d P + N P R + N R d) = \mathcal{O}(N R d)$, with memory $\mathcal{O}(N R + R d)$.  
Across $L$ tables, this becomes $\mathcal{O}(L N R d)$ time and $\mathcal{O}(L(N R + R d))$ space.  
Compared to FlashAttention-2/3's $\mathcal{O}(N^2 d)$ time and $\mathcal{O}(Nd)$ space, RACE is more efficient since $R,L \ll N$ and $R,L \ll d$, even for moderate $N$ and $d$.

\subsection{Theoretical Analysis of Algorithm~\ref{algo:soft-race}}
\label{sec:theory}

Algorithm~\ref{algo:soft-race} is presented in terms of random projections, soft bucketization, and per-bucket aggregation. For analysis it is easier to take a \emph{kernel approximation} view of attention. Each hash table $\ell=1,\dots,L$ induces a randomized feature map $\phi^{(\ell)}:\mathbb{R}^d \to \mathbb{R}^R$, where $R=2^P$ is the number of hypercube corners, and defines the approximate kernel
$\widehat S^{(\ell)}_{ij}=\big(\phi^{(\ell)}(Q_i)\big)^\top \phi^{(\ell)}(K_j).$ Then, averaging across $L$ independent tables yields $\widehat S =\tfrac{1}{L}\sum_{\ell=1}^L \widehat S^{(\ell)}$. This view places soft RACE Attention in the language of kernel methods:
it replaces the angular kernel ($\gamma=P$)
in Eq.~\ref{eq:exp-angular-sim}
with the randomized sketch $\widehat S$ based on LSH-style features. Since $\phi^{(\ell)}(x)$ is a softmax distribution, the approximate kernel $\widehat S$ inherits concentration properties from the underlying random Gaussian projections. This allows us to analyze its deviation from the target angular kernel using standard tools from randomized numerical linear algebra (RandNLA) \citep{tropp2015matrix}. Our analysis requires the following two mild assumptions on the target kernel $S$: \\
\noindent
\textbf{(A1)} Row sums of $S$ are bounded away from zero \emph{i.e.,}
$s_{\min} := \min_i (S \mathbf{1})_i \ge C_1N$ for some constant $C_1>0$, which ensures stable normalization in attention.
\\
\noindent
\textbf{(A2)} Spectral norm of $S$ is bounded \emph{i.e.,} $\|S\|_2 \le C_2 N$, which follows from 
$S_{ij}\in[0,1]$.


\noindent
Several comments are necessary to better understand the above structural conditions. Condition (A1) rules out degenerate cases where a query has vanishing similarity with all keys, 
which would make the row-normalization in attention unstable. 
In practice this assumption is mild: with learned representations, attention rows rarely collapse to near-zero mass, so requiring $s_{\min} \ge C_1N$ simply rules out degenerate cases where a query is effectively isolated (assigns negligible weight to almost all keys). Condition (A2) is even less restrictive: since $S_{ij}\in[0,1]$, the worst case is attained by the all-ones matrix $J_N$, whose spectral norm is exactly $\|J_N\|_2 = N$. Thus bounding $\|S\|_2 \le C_2 N$ merely rules out pathological growth beyond this trivial maximum, and is always satisfied with $C_2=1$. We are now ready to state our main quality-of-approximation result:

\begin{theorem}\label{thm:output-beta}
Let $Q, K, V \in \mathbb{R}^{N \times d}$ be the query, key, and value matrices.  
For parameters $L$, $P$, and $\beta$, and under conditions (A1) and (A2),  
the estimator $\widehat{O}$ produced by Algorithm~\ref{algo:soft-race} satisfies
\[
\|\widehat{O} - O\|_\mathrm{rms} \;=\;
\mathcal{O}
  \Big(\tfrac{P}{\beta} + \sqrt{\tfrac{\log(N/\delta)}{L}}\Big)\,\|V\|_F
\]
with probability at least $1-\delta$. Here, $O\in\mathbb{R}^{N\times d}$ with the $i^{th}$ row $O_i$ is defined using Eqs.~\eqref{eq:oi} and \eqref{eq:exp-angular-sim} with $\gamma=P$, and $\|\widehat{O} - O\|_{\mathrm{rms}}:=\sqrt{\frac{1}{N}\sum_{i=1}^N\|\widehat{O}_i - O_i\|_2^2}$ denotes per-token root-mean-square (RMS) error between $O$ and $\widehat{O}$.
\end{theorem}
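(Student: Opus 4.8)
The plan is to adopt the kernel-approximation view from Section~\ref{sec:theory} and split the error into a deterministic \emph{bias} term, measuring how far the expected per-table sketch $\bar S := \mathbb{E}[\widehat S^{(\ell)}]$ sits from the target kernel $S$, and a stochastic \emph{variance} term, measuring the fluctuation of $\widehat S = \tfrac1L\sum_{\ell}\widehat S^{(\ell)}$ around $\bar S$. Writing the outputs as $O = \operatorname{diag}(S\mathbf{1})^{-1}SV$ and $\widehat O = \operatorname{diag}(\widehat S\mathbf{1})^{-1}\widehat S V$, the final step transfers an \emph{entrywise} kernel bound $\|\widehat S - S\|_{\max}\le\varepsilon$ into the claimed RMS output bound via a perturbation argument, using (A1) to keep the row-normalization denominators $\Theta(N)$ and (A2) to control the numerator. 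Here $\varepsilon = \mathcal{O}(P/\beta) + \mathcal{O}(\sqrt{\log(N/\delta)/L})$ collects the bias and variance contributions respectively.

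The key structural observation I would establish first is that the soft assignment factorizes across the $P$ hyperplanes. Since $v_r\in\{\pm1\}^P$ and the corner logits decompose coordinatewise, one obtains $[\phi^{(\ell)}(x)]_r = \prod_{p=1}^P \tfrac12\big(1 + (v_r)_p\,\tau_p(x)\big)$ with $\tau_p(x) = \tanh\!\big(\beta\tanh((W^{(\ell)}x)_p)\big)\in(-1,1)$, so summing the inner product over all corners collapses to $\widehat S^{(\ell)}_{ij} = \prod_{p=1}^P \tfrac12\big(1 + \tau_p(Q_i)\tau_p(K_j)\big)\in[0,1]$. Because the rows $w^{(\ell)}_p$ are independent, the expectation factorizes as well, giving $\bar S_{ij} = \prod_{p=1}^P a_p$ with $a_p = \tfrac12\big(1 + \mathbb{E}_w[\tau_p(Q_i)\tau_p(K_j)]\big)$, whereas the target satisfies $S_{ij} = (1-\theta_{ij}/\pi)^P = b^P$ with $b = \tfrac12\big(1+\mathbb{E}_w[\operatorname{sign}(w^\top Q_i)\operatorname{sign}(w^\top K_j)]\big)$, the last equality being exactly the SimHash collision probability $1-\theta_{ij}/\pi$. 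Since $\tanh(\beta\tanh(u))\to\operatorname{sign}(u)$ with the approximation error concentrated in the band $|u|\lesssim 1/\beta$ whose Gaussian measure is $\mathcal{O}(1/\beta)$, I would show $|a_p-b|=\mathcal{O}(1/\beta)$ uniformly; the elementary bound $\big|\prod_p a_p - \prod_p b\big|\le\sum_p|a_p-b|$ for factors in $[0,1]$ then yields $\|\bar S - S\|_{\max} = \mathcal{O}(P/\beta)$. For the variance term, since each $\widehat S^{(\ell)}_{ij}\in[0,1]$ and the $L$ tables are i.i.d., a Hoeffding bound gives $|\widehat S_{ij}-\bar S_{ij}|\le\sqrt{\log(2N^2/\delta)/(2L)}$ per pair, and a union bound over all $N^2$ entries makes this hold simultaneously with probability at least $1-\delta$, contributing $\mathcal{O}(\sqrt{\log(N/\delta)/L})$ and establishing $\|\widehat S - S\|_{\max}\le\varepsilon$ on that event.

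It remains to convert the entrywise kernel error into the output RMS error. Summing the entrywise bound gives $|\widehat d_i - d_i|\le N\varepsilon$ where $d_i=(S\mathbf{1})_i$, $\widehat d_i=(\widehat S\mathbf{1})_i$, and (A1) ($d_i\ge C_1 N$) forces $\widehat d_i\ge(C_1-\varepsilon)N=\Theta(N)$ once $\varepsilon$ is small. I would then use the exact per-row decomposition $\widehat O_i - O_i = \tfrac{1}{\widehat d_i}(\widehat n_i - n_i) + O_i\,\tfrac{d_i-\widehat d_i}{\widehat d_i}$, where $n_i=(SV)_i$ and $\widehat n_i=(\widehat S V)_i$. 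Cauchy--Schwarz gives $\|\widehat n_i - n_i\|_2\le\varepsilon\sum_j\|V_j\|_2\le\varepsilon\sqrt N\,\|V\|_F$, so the first term is $\mathcal{O}(\varepsilon\|V\|_F/\sqrt N)$ per row, while the second is at most $\mathcal{O}(\varepsilon)\|O_i\|_2$; squaring, averaging over $i$, and invoking $\sum_i\|O_i\|_2^2=\|O\|_F^2\le\|\operatorname{diag}(S\mathbf{1})^{-1}S\|_2^2\|V\|_F^2=\mathcal{O}(N)\|V\|_F^2$ (a row-stochastic matrix has spectral norm $\mathcal{O}(\sqrt N)$, consistent with (A2)) delivers $\|\widehat O - O\|_{\mathrm{rms}}=\mathcal{O}(\varepsilon)\|V\|_F$, which is the claim. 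I expect the main obstacle to be the bias step: the factorization is clean, but rigorously proving $|a_p-b|=\mathcal{O}(1/\beta)$ requires carefully integrating the double-$\tanh$ soft-sign error against the Gaussian near the origin and verifying that the per-coordinate error compounds no worse than linearly across the $P$ factors. The concentration and perturbation steps are, by comparison, routine given (A1) and (A2).
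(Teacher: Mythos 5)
Your proposal is correct and reaches the stated bound, but the technical execution differs from the paper's in three places worth noting. First, for the bias: the paper never computes $\widehat S^{(\ell)}_{ij}$ in closed form; it bounds $\big|\mathbb{E}[\phi^{(\ell)}(Q_i)^\top\phi^{(\ell)}(K_j)]-S_{ij}\big|$ by comparing the soft inner product to the hard collision indicator via the deterministic inequality $|p^\top q-\mathbf 1\{a=b\}|\le(1-p_a)+(1-q_b)$ (Lemma~\ref{lem:ineq}) and then bounding the expected mass off the dominant corner, $\mathbb{E}\big[1-\prod_t\sigma(2\beta|u_t|)\big]$. Your exact factorization $\widehat S^{(\ell)}_{ij}=\prod_{p}\tfrac12\big(1+\tau_p(Q_i)\tau_p(K_j)\big)$, with expectation splitting across independent hyperplanes and each factor compared to the single-hyperplane SimHash collision probability, is a cleaner and arguably sharper route to the same $\mathcal O(P/\beta)$ entrywise bias (the Gaussian integral you defer to — $\mathbb{E}\,|\tanh(\beta\tanh Z)-\operatorname{sign}(Z)|=\mathcal O(1/\beta)+\mathcal O(e^{-c\beta})$ — is exactly the computation in Lemma~\ref{prop:bias-to-angP-paper-explicit}, so it goes through). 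Second, for the variance: the paper applies matrix Bernstein to get a spectral-norm bound $\|\widehat S-\mathbb{E}\widehat S\|_2\lesssim N\sqrt{\log(N/\delta)/L}$, whereas you use entrywise Hoeffding plus a union bound over $N^2$ pairs; because the variance proxy in the paper is bounded crudely by $4N^2/L$, the two give the same order, and your route is more elementary (the paper itself resorts to the same entrywise Hoeffding argument to control the row sums in Lemma~\ref{lem:E-concentration}). Third, for the normalization step, the paper works with the spectral perturbation $\widehat A-A=\widehat D^{-1}\Delta+(\widehat D^{-1}-D^{-1})S$ and needs (A2) to control the $\|S\|_2\|E\|_2/s_{\min}^2$ term, while your row-wise decomposition only needs the convexity of each row of $A$ (so $\|O_i\|_2\le\max_j\|V_j\|_2$), making (A2) essentially superfluous in your version. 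The net effect is the same theorem; what your approach buys is an exact product formula for the sketch kernel and a fully entrywise argument, while the paper's spectral framing is the one that would generalize if one wanted bounds under weaker, non-uniform control of $\Delta$.
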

\noindent
The bound decomposes into a \emph{bias term} $\mathcal{O}(P/\beta)$ and a \emph{variance term} $\mathcal{O}(\sqrt{\log(N/\delta)/L})$. Larger $\beta$ reduces the bias, while increasing $L$ reduces the variance. The dependence on $P$ arises because powering the angular kernel by $P$ makes collisions sharper, but soft bucketization (finite $\beta$) smooths out these decisions and introduces an additional bias. To keep this bias small, $\beta$ should be scaled with $P$. In particular, as $\beta,L \to \infty$, the approximation error vanishes. In fact, taking $L=\Theta(\log N)$ prevents the variance from exploding.
Together, this kernel reinterpretation provides a precise RandNLA lens for analyzing \emph{RACE Attention}, with $L$, $P$, and $\beta$ jointly governing its accuracy-efficiency trade-offs. The proof of Theorem~\ref{thm:output-beta}, together with all intermediate lemmas, is deferred to Appendix~\ref{app:theory} due to space constraints.

\noindent
\textbf{Remark (Causal masking):} Our language modeling experiments (Tables~\ref{tab:wikitext} and~\ref{tab:ptb}) use causal RACE Attention, implemented efficiently in OpenMP/CUDA (Algorithm~\ref{algo:soft-race-causal}). Our theory, however, covers only the non-causal setting. Extending Theorem~\ref{thm:output-beta}’s bias–variance guarantees to the causal case remains an open problem, since the cumulative-sum constraint interacts non-trivially with the random-feature construction. A rigorous causal analysis is a key direction for future work.

\section{Experiments}

\label{sec:exp}
\begin{table*}[t]
    \centering
    \caption{Long-context classification performance on a 40GB A100 GPU. Train/Test denote per-epoch runtimes in seconds, and Acc. denotes accuracy.}
    \label{tab:seq_lengths}
    \footnotesize
    \resizebox{\textwidth}{!}{
    \begin{tabular}{lccc ccc ccc}
        \toprule
        \multicolumn{10}{c}{\textbf{Arxiv Long-Document Classification}} \\
        \midrule
        & \multicolumn{3}{c}{\textbf{16K}} 
        & \multicolumn{3}{c}{\textbf{32K}} 
        & \multicolumn{3}{c}{\textbf{64K}} \\
        \cmidrule(lr){2-4} \cmidrule(lr){5-7} \cmidrule(lr){8-10}
        \textbf{Method} 
        & \textbf{Train $\downarrow$} & \textbf{Test $\downarrow$} & \textbf{Acc. $\uparrow$}
        & \textbf{Train $\downarrow$} & \textbf{Test $\downarrow$} & \textbf{Acc. $\uparrow$}
        & \textbf{Train $\downarrow$} & \textbf{Test $\downarrow$} & \textbf{Acc. $\uparrow$} \\
        \midrule
        RACE (P=2,L=2) 
        & \textbf{80.5s} & 3.9s  & 70.3\%
        & \textbf{282s}  & 15s  & 89.4\%
        & \textbf{561s}  & 22s   & 97.14\% \\
        RACE (P=3,L=3)
        & 82.4s & 4.0s  & \textbf{71.3\%}
        & 289s  & 15.6s  & 90.6\%
        & 584s    & 22.5s    & 97.92\%\\
        RACE (P=4,L=4)
        & 84.7s & 4.1s  & 70.8\%
        & 300s  & 16s   & \textbf{91.1\%}
        & 594s   & 22.9s    & 97.4\% \\
        Linear
        & 83.8s & 4.0s  & 67.9\%
        &  286s & 15.9s  & 87.3\%
        & 591s  & 22.8s & 96.35\% \\
        Linformer-128
        & 86s   & \textbf{3.2s}  & 64.1\%
        &  296s & \textbf{10.7s} & 87.5\%
        & 616s  & \textbf{15.2s} & 97.4\% \\
        Performer-256
        & 128s  & 5.8s  & 68.9\%
        & 449s  & 24.6s & 86.5\%
        & 952s  & 35s    & 96.61\% \\
        FlashAttention2
        & 95.7s & 3.7s  & 69.8\%
        & 471s  & 20s   & 89.7\%
        & 1645s    & 47s & 97\% \\
        \bottomrule
    \end{tabular}}
\end{table*}

To ensure comparability and avoid cherry-picking, we adopt the standard evaluation suites used in prior efficient-attention work—Linear Attention, Linformer, and Random Feature Attention (RFA)~\citep{katharopoulos2020transformers,wang2020linformer,peng2021rfa}. We evaluate \textbf{text classification} on QNLI, SST-2, IMDB, Yahoo, and Arxiv~\citep{Rajpurkar2016SQuAD,Socher2013SST,Maas2011IMDB,zhang2015character,arxivtext}; \textbf{autoregressive language modeling} on WikiText-103 and PTB~\citep{Merity2017PointerSentinel,Marcus1993PTB}; \textbf{masked language modeling} on Tiny Stories~\citep{EldanLi2023TinyStories}; \textbf{image classification} on CIFAR-10, FashionMNIST, and Food-101 using ViT architecture~\citep{krizhevsky2009learning,xiao2017fashionmnist,bossard14,dosovitskiy2021an}; and \textbf{long-context reasoning} on Long Range Arena tasks (e.g., ListOps and Text Retrieval)~\citep{Tay2021LongRangeArena}. Together, these benchmarks span bidirectional and autoregressive modeling, as well as moderate- and long-context text and image classification. During training, we treat $\beta$, introduced in Algorithm~\ref{algo:soft-race}, as a trainable parameter. 
\noindent
Beyond standard benchmarks, we conduct extreme-length scaling experiments on a single attention layer, reaching sequences of tens of millions of tokens while using the same hyperparameters as in the accuracy evaluations. Finally, FlashAttention-2/3 are exact fused implementations of Softmax Attention, and we use the terms interchangeably when reporting accuracy and runtime.

\begin{figure*}[!t]
\centering

\begin{subfigure}[t]{0.49\textwidth}
  \centering
  \includegraphics[width=0.9\linewidth]{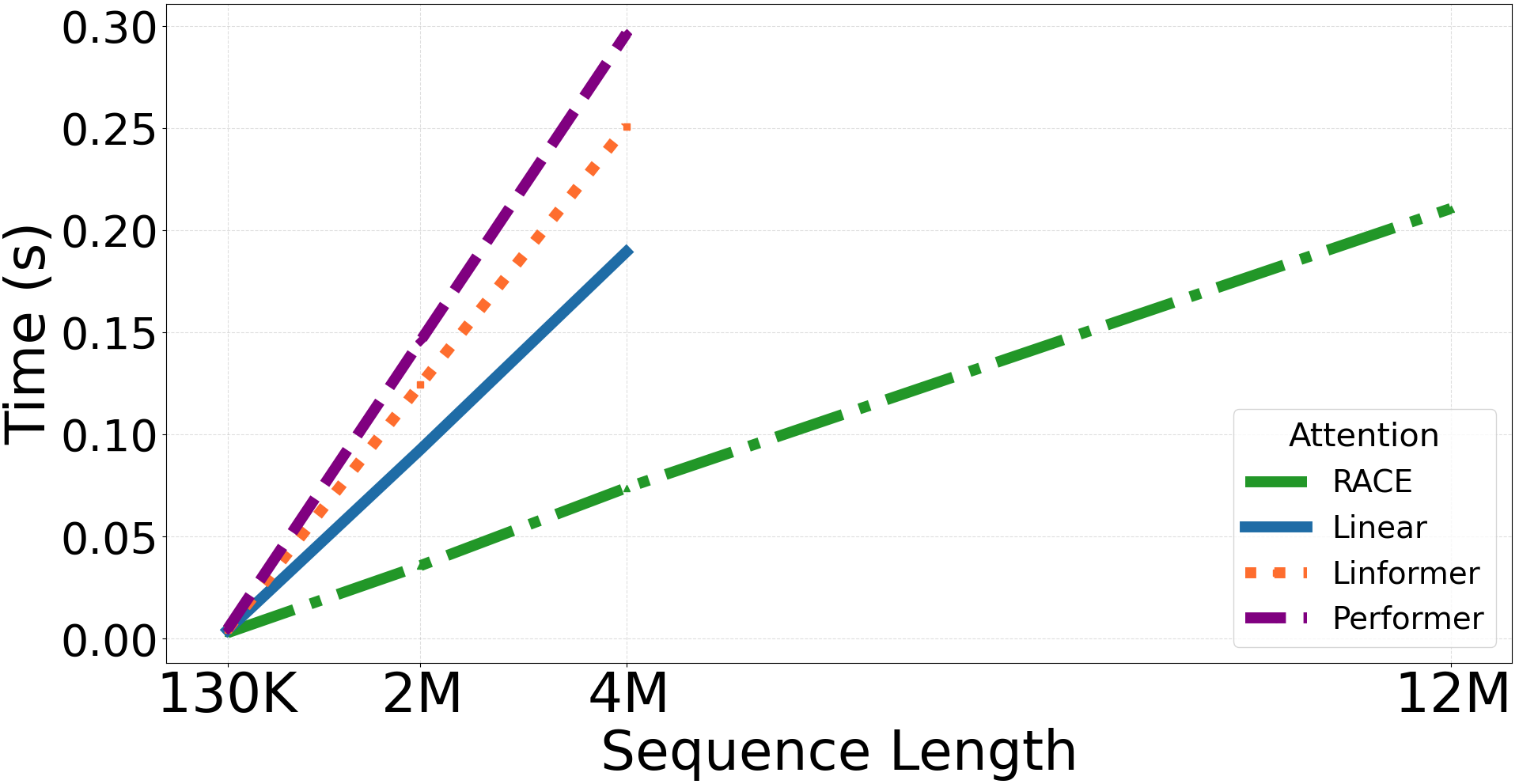}
  \caption{GPU: RACE ($P=2, L=2$)}
\end{subfigure}\hfill
\begin{subfigure}[t]{0.49\textwidth}
  \centering
  \includegraphics[width=0.9\linewidth]{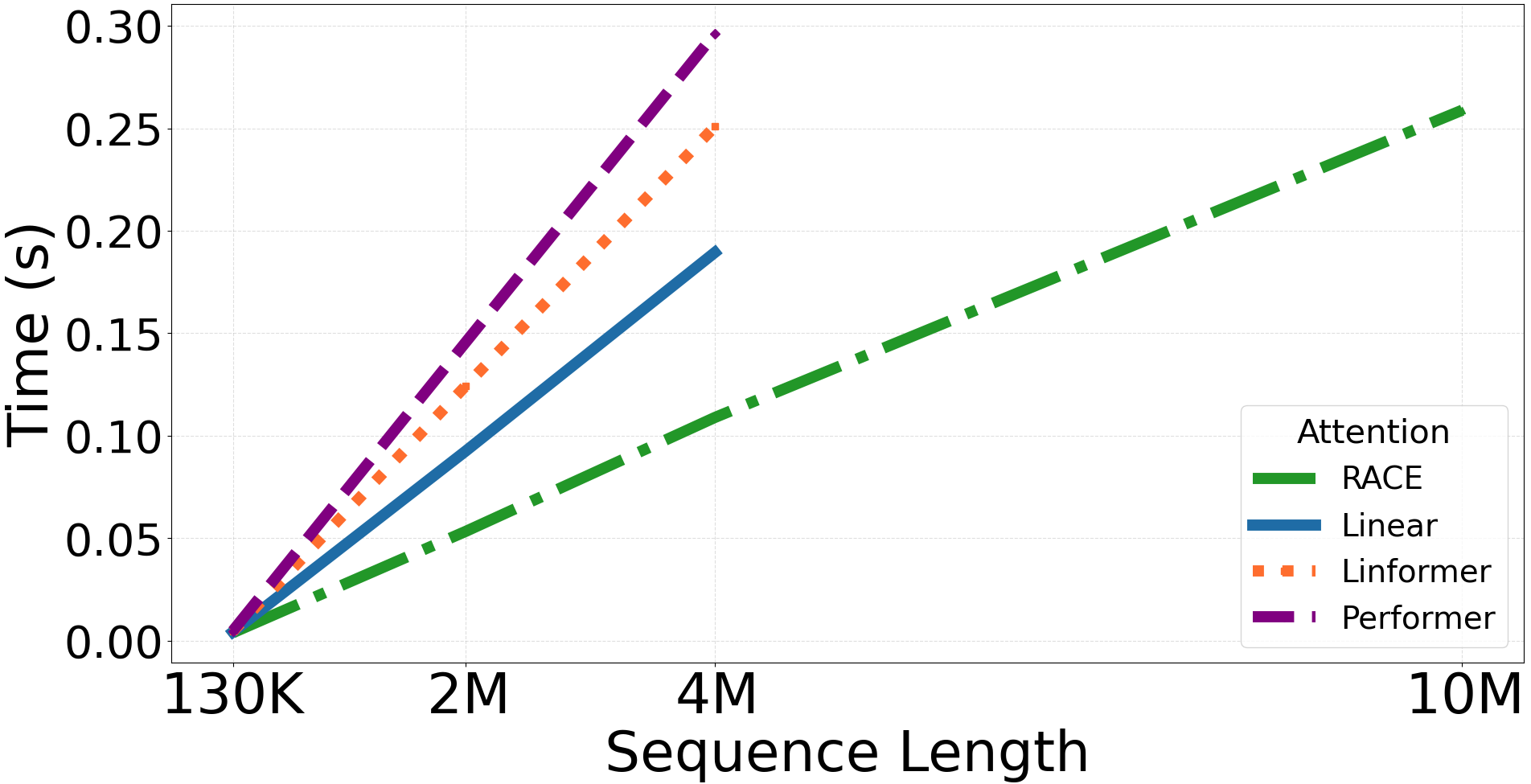}
  \caption{GPU: RACE ($P=3, L=3$)}
\end{subfigure}

\vspace{0.25em}

\begin{subfigure}[t]{0.49\textwidth}
  \centering
  \includegraphics[width=0.9\linewidth]{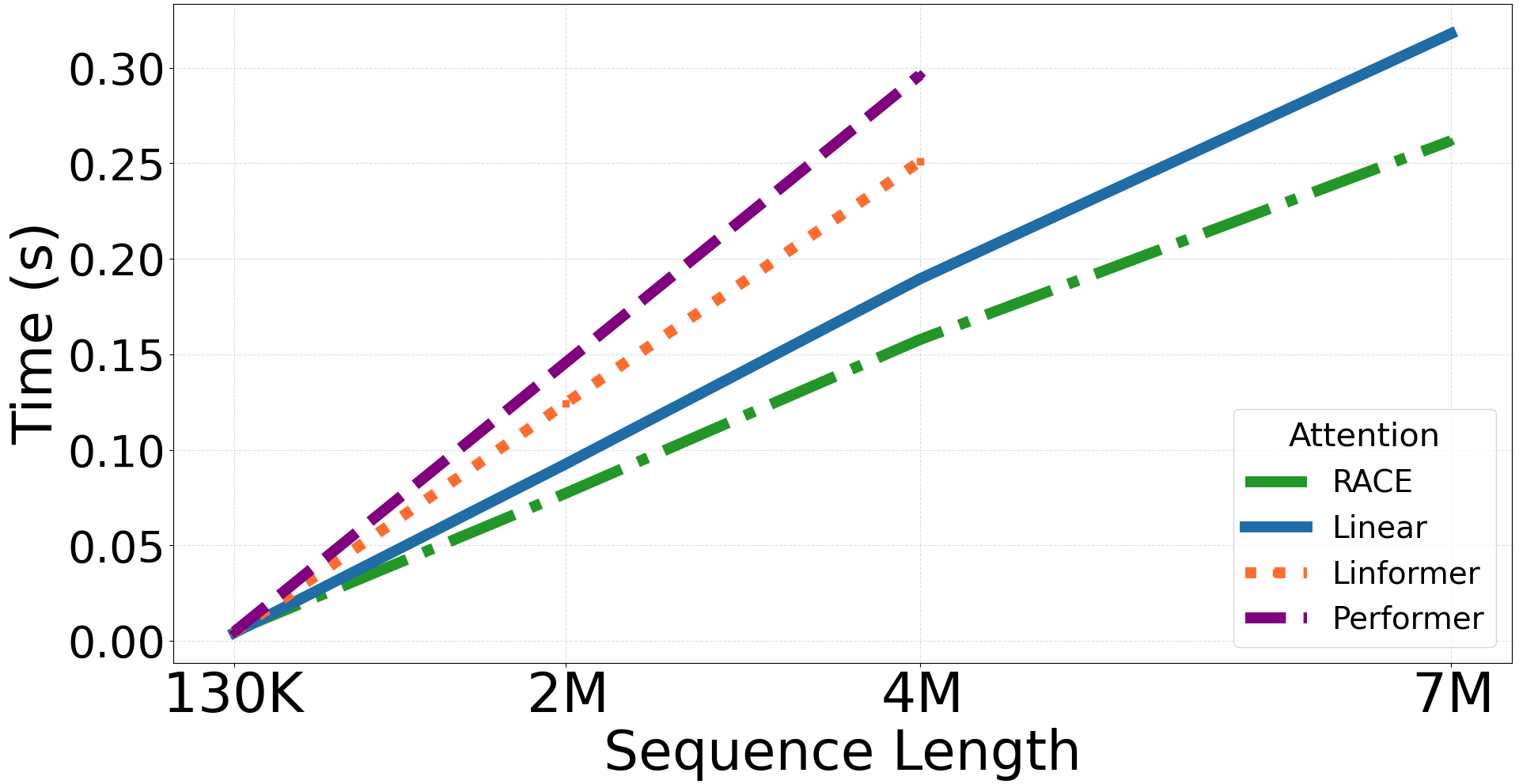}
  \caption{GPU: RACE ($P=4, L=4$)}
\end{subfigure}\hfill
\begin{subfigure}[t]{0.49\textwidth}
  \centering
  \includegraphics[width=0.9\linewidth]{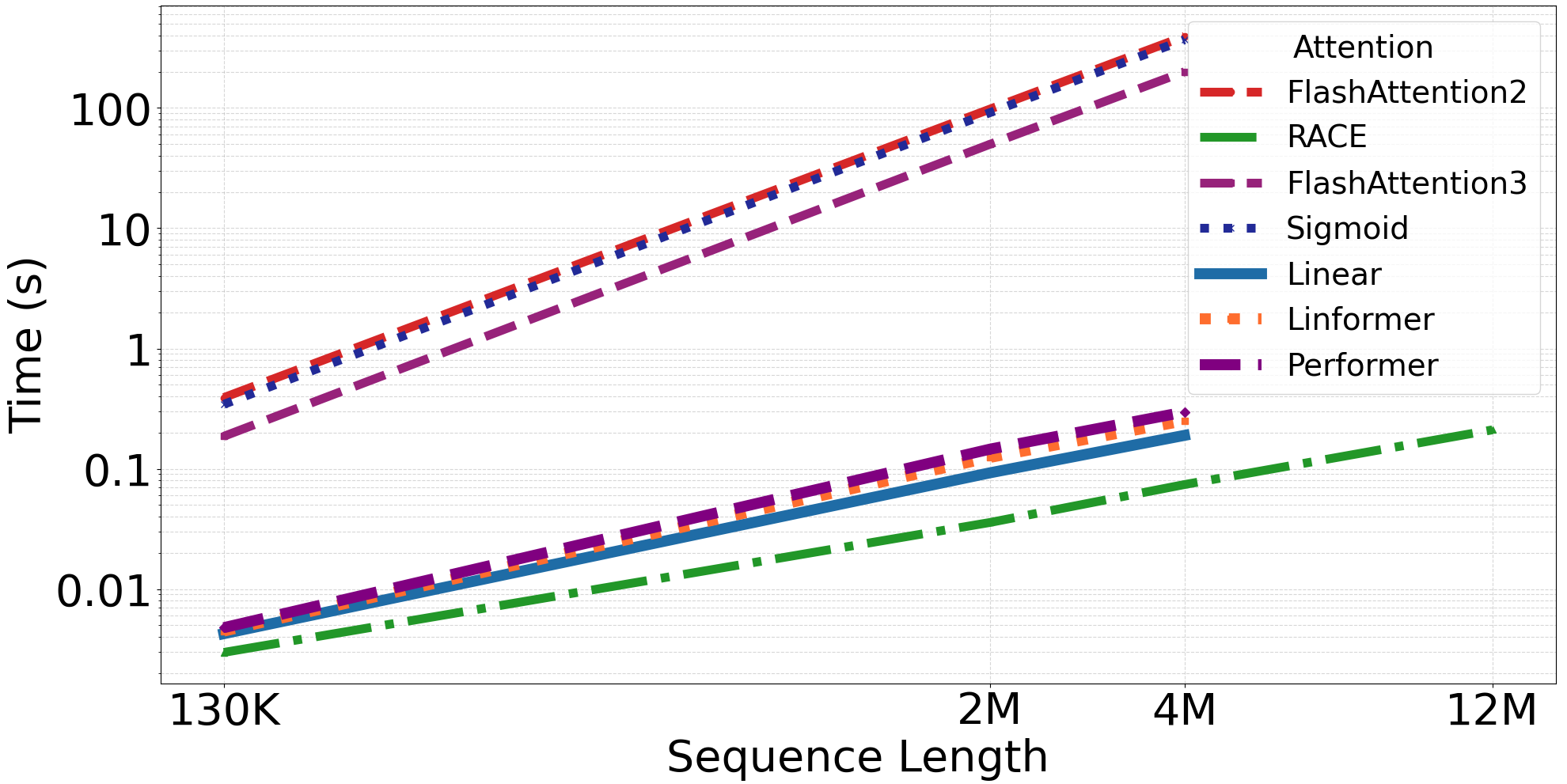}
  \caption{GPU scaling}
  \label{fig:gpu4}
\end{subfigure}

\vspace{0.25em}

\begin{subfigure}[t]{0.49\textwidth}
  \centering
  \includegraphics[width=0.9\linewidth]{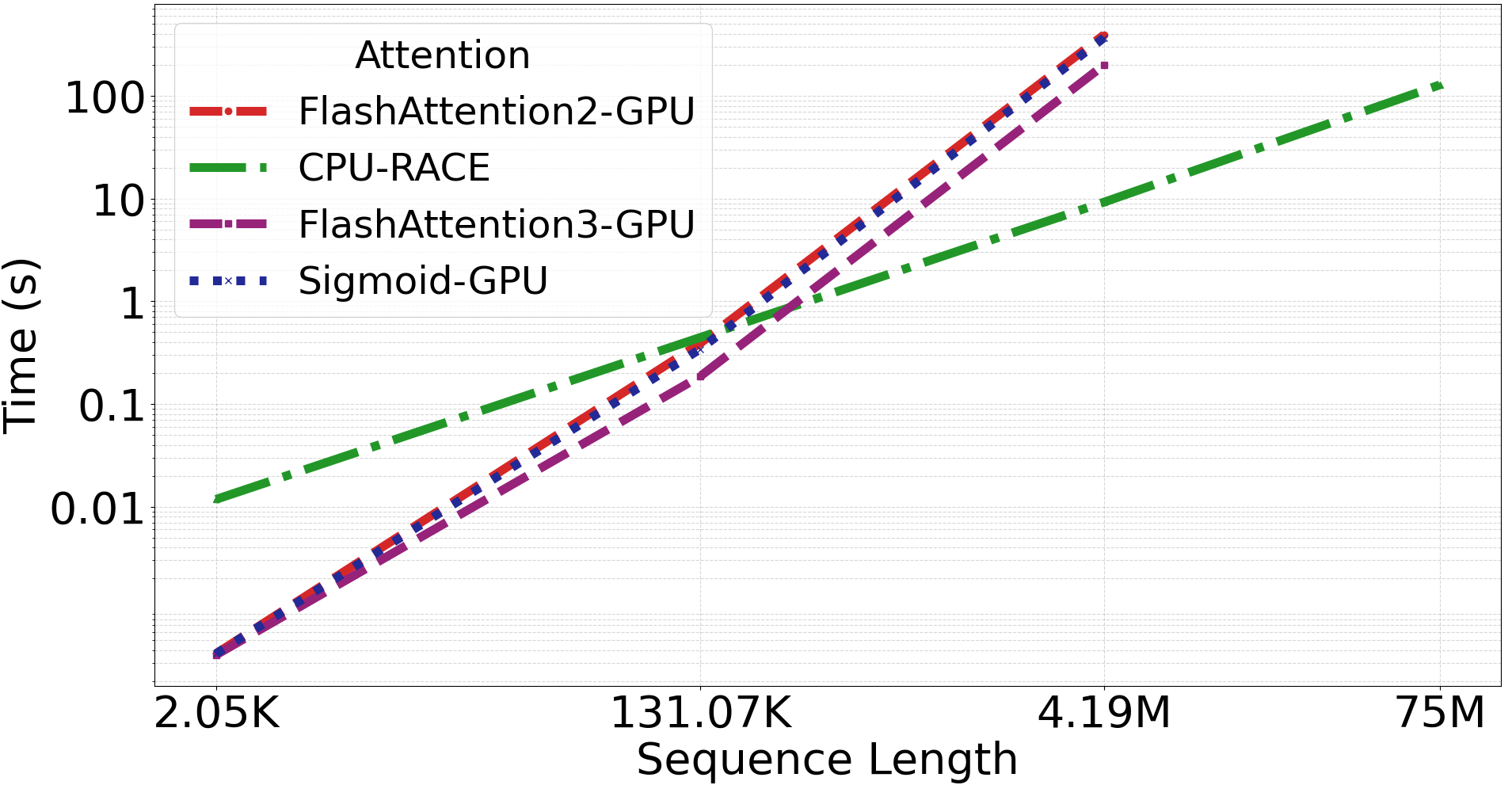}
  \caption{CPU-RACE vs.\ GPU-Baselines}
  \label{fig:flash-race}
\end{subfigure}\hfill
\begin{subfigure}[t]{0.49\textwidth}
  \centering
  \includegraphics[width=0.9\linewidth]{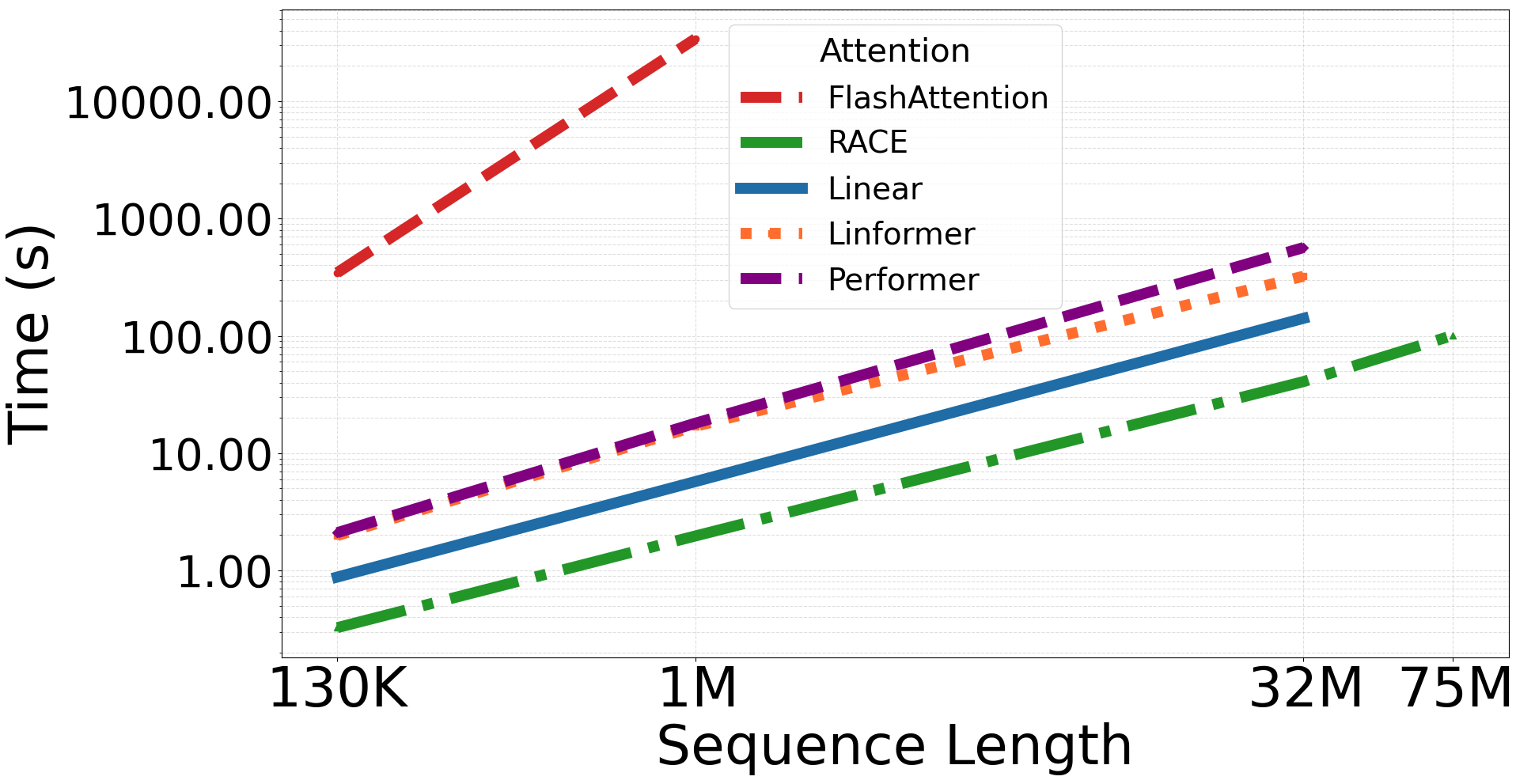}
  \caption{CPU scaling}
  \label{fig:cpu4}
\end{subfigure}

\vspace{-2mm}
\caption{
\textbf{Scaling stress-tests on GPU and CPU:}
All plots use logarithmic axes and report a single forward–backward pass (batch size~1, 4 attention heads, $d{=}128$) of a single attention layer. Linformer and Performer use the same hyperparameters as in Table~\ref{tab:merged-results}.
(a--d) compare the scaling behavior of RACE against attention baselines on GPU, while (e--f) show scaling for RACE ($P{=}3, L{=}3$) on CPU, highlighting its superior efficiency at long sequence lengths.
}
\label{fig:scaling}
\end{figure*}

\begin{table*}[t]
\centering
\scriptsize
\setlength{\tabcolsep}{4pt} 
\renewcommand{\arraystretch}{1.15}

\begin{minipage}[t]{0.53\textwidth}
\vspace{0pt}\centering
\captionsetup{type=table,justification=centering,singlelinecheck=false}
\caption{\textbf{Comparison of attention methods across diverse tasks.}}
\label{tab:merged-results}

\resizebox{\linewidth}{!}{%
\begin{tabular}{lccc}
\hline
\textbf{Method}
& \textbf{CIFAR-10 @1024}
& \textbf{QNLI @2048}
& \textbf{Tiny Stories @1024} \\
& Acc. (\%) $\uparrow$
& Acc. (\%) $\uparrow$
& PPL $\downarrow$ \\
\hline
RACE (P=2, L=2) & 63.7 & 60.7 & 4.2 \\
RACE (P=3, L=3) & 62.5 & 60.7 & 3.2 \\
RACE (P=4, L=4) & \textbf{65.9} & \underline{61.1} & \underline{2.6} \\
\hline
Linear & 60.0 & 60.7 & 7.0 \\
Linformer-128 & 63.7 & 60.6 & 3.7 \\
FlashAttention2 & 61.44 & \underline{61.1} & 2.7 \\
Angular ($\gamma$=8) & 61.69 & \textbf{61.7} & \textbf{2.5} \\
Performer-256 & 64.9 & 61.0 & 10.0 \\
Sigmoid & 57.2 & \underline{61.1} & 3.7 \\
\hline
\end{tabular}}
\end{minipage}
\hfill
\begin{minipage}[t]{0.44\textwidth}
\vspace{0pt}\centering
\captionsetup{type=table,justification=centering,singlelinecheck=false}
\caption{\textbf{Food-101 @ 16K (Image Classification)}}
\label{tab:food101}

\resizebox{\linewidth}{!}{%
\begin{tabular}{lccc}
\hline
\textbf{Method} & Train $\downarrow$ & Test $\downarrow$ & Acc.\ $\uparrow$ \\
\hline
RACE (P=2, L=2) & \textbf{891s} & \textbf{37s} & 42.4\% \\
RACE (P=3, L=3) & 950s & 40s & \textbf{43.5\%} \\
RACE (P=4, L=4) & 1042s & 42s & 40.3\% \\
Linear & 1166s & 44s & 41.4\% \\
Linformer-128 & 1250s & 49s & 20.2\% \\
Performer-256 & 2546s & 105s & 42.4\% \\
FlashAttention2 & 2600s & 95s & 42.1\% \\
\hline
\end{tabular}}
\end{minipage}

\end{table*}
\noindent
\textbf{Baselines:} We compare RACE against widely used baselines with publicly available implementations: FlashAttention2 \citep{dao2022flashattention}, Linear Attention \citep{katharopoulos2020transformers}, Performer \citep{choromanski2020performer}, Linformer \citep{wang2020linformer}, Sigmoid Attention \citep{ramapuram2025theoryanalysisbestpractices}, and YOSO~\citep{zeng2021youonlysample}. These span exact, kernel-linear, and low-rank approximations. All models are tuned per authors’ guidelines and trained under identical settings.
\\

\noindent
\textbf{Is RACE
Attention as Accurate as Transformers?} We report \textbf{text-classification}, \textbf{image-classification}, and \textbf{masked language modeling} results in Tables~\ref{tab:seq_lengths}, \ref{tab:merged-results}, and \ref{tab:food101}. 
\begin{wraptable}{r}{0.32\textwidth}
\centering
\caption{\textbf{WikiText-103 @ 1024}}
\label{tab:wikitext}
\begin{tabular}{lc}
\hline
Method & PPL $\downarrow$\\
\hline
RACE (P=2, L=2) & 23.9 \\
RACE (P=2, L=3) & 23.4 \\
RACE (P=3, L=3) & 21.9 \\
RACE (P=3, L=4) & 21.5 \\
RACE (P=4, L=4) & \underline{20.9} \\
FlashAttention2 & \underline{20.9} \\
Angular ($\gamma$=8) & \textbf{19.0} \\
\hline
\end{tabular}
\end{wraptable}
Specifically, in Table~\ref{tab:food101}, ``Train/Test" denote per-epoch runtimes in seconds, and ``Acc." denotes accuracy. Results for Linear Attention, Linformer, and Performer were collected using batch size 1 due to out-of-memory failures at batch size 8. In contrast, both RACE and FlashAttention2 remain memory-efficient at this sequence length and are evaluated with batch size 8. Additional experiments can be found in Appendix, refer to Tables~\ref{tab:fashion}, \ref{tab:tiny}, \ref{tab:long_range_arena}, and~\ref{tab:yahoo_imdb_sst2}. For \textbf{autoregressive language modeling}, RACE matches softmax-level perplexity on WikiText-103 and improves upon it on PTB (Tables~\ref{tab:wikitext} and~\ref{tab:ptb}). 
These results indicate that RACE preserves accuracy in the overlapping regime while delivering consistent gains on moderate- and long-context settings.
\noindent
Unless stated otherwise, all methods use the same Transformer backbone (layers, heads, embedding size, dropout) and training budget. 
We train with identical optimizers, schedulers, and batch sizes; full hyperparameters appear in Table~\ref{tab:exp-hparams}. 
Metrics are reported from the best-validation checkpoint. All accuracy experiments were conducted on a single NVIDIA A100 GPU.
\\
\noindent
\textbf{Can we reach 100 million context window on popular hardware?} Now, we evaluate how RACE Attention scales across common hardware relative to strong baselines. For RACE, we use sketch parameters $(P,L)$ chosen to match FlashAttention2's accuracy/perplexity on the same tasks in Table~\ref{tab:merged-results}. For each method, we measure the wall-clock time for a single forward–backward pass of a single attention layer with 1 batch, 4 heads, and embedding size 128, as a function of sequence length, stress-testing context lengths up to 100 million tokens. Since FlashAttention-2/3 are designed specifically for GPU hardware, we use PyTorch’s optimized \texttt{F.scaled\_dot\_product\_attention} implementation as the FlashAttention baseline when reporting CPU scaling results.
\\
\noindent
\textbf{How far can we scale attention on standard Intel Xeon® Gold 5220R CPU?}
RACE scales to sequences of up to \textbf{75 million tokens} for a single forward–backward pass of a single attention layer on CPU. By contrast, FlashAttention becomes prohibitively slow at $\sim\!2$ million tokens due to the quadratic time scaling in sequence length $N$ (see fig.~\ref{fig:cpu4}). It is worth noting that FlashAttention does not run out of memory on the CPU DRAM. RACE is more than $10000 \times$ faster than FlashAttention at context length of $\sim\!33$ million. RACE finishes comfortably under 10 seconds for a single forward-backward pass on this hardware while FlashAttention takes approximately $10^5$ seconds on the same hardware. Although the absolute runtime grows with $N$, RACE’s speedup over FlashAttention increases. At 75 million tokens, RACE finishes in about 100 seconds. This is expected because RACE is linear and FlashAttention is quadratic in $N$. The experiments also highlight that linear attentions' approximations are not only inaccurate but also significantly slower and have large memory overheads due to large hidden constants (see fig.~\ref{fig:scaling_cpu} in Appendix). They run about an order of magnitude slower than RACE Attention and even go out of memory at $\sim\!33$ million tokens.

\begin{figure*}[t]
    \centering
    \begin{subfigure}[t]{0.32\textwidth}
        \centering
        \includegraphics[width=\linewidth]{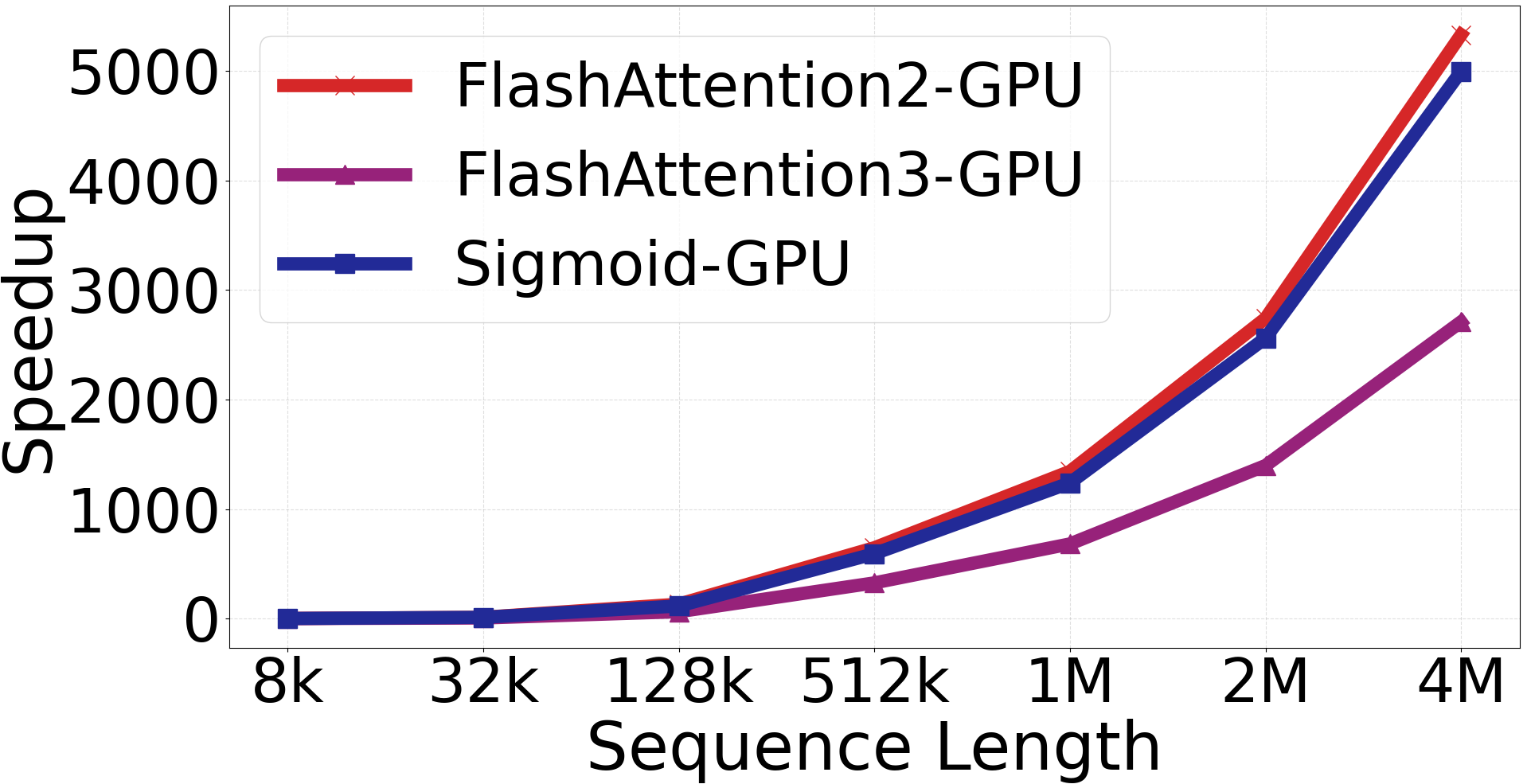}\\
        \caption{GPU-RACE Speedup}
        \label{fig:speedup_gpu}
    \end{subfigure}\hfill
    \begin{subfigure}[t]{0.32\textwidth}
        \centering
        \includegraphics[width=\linewidth]{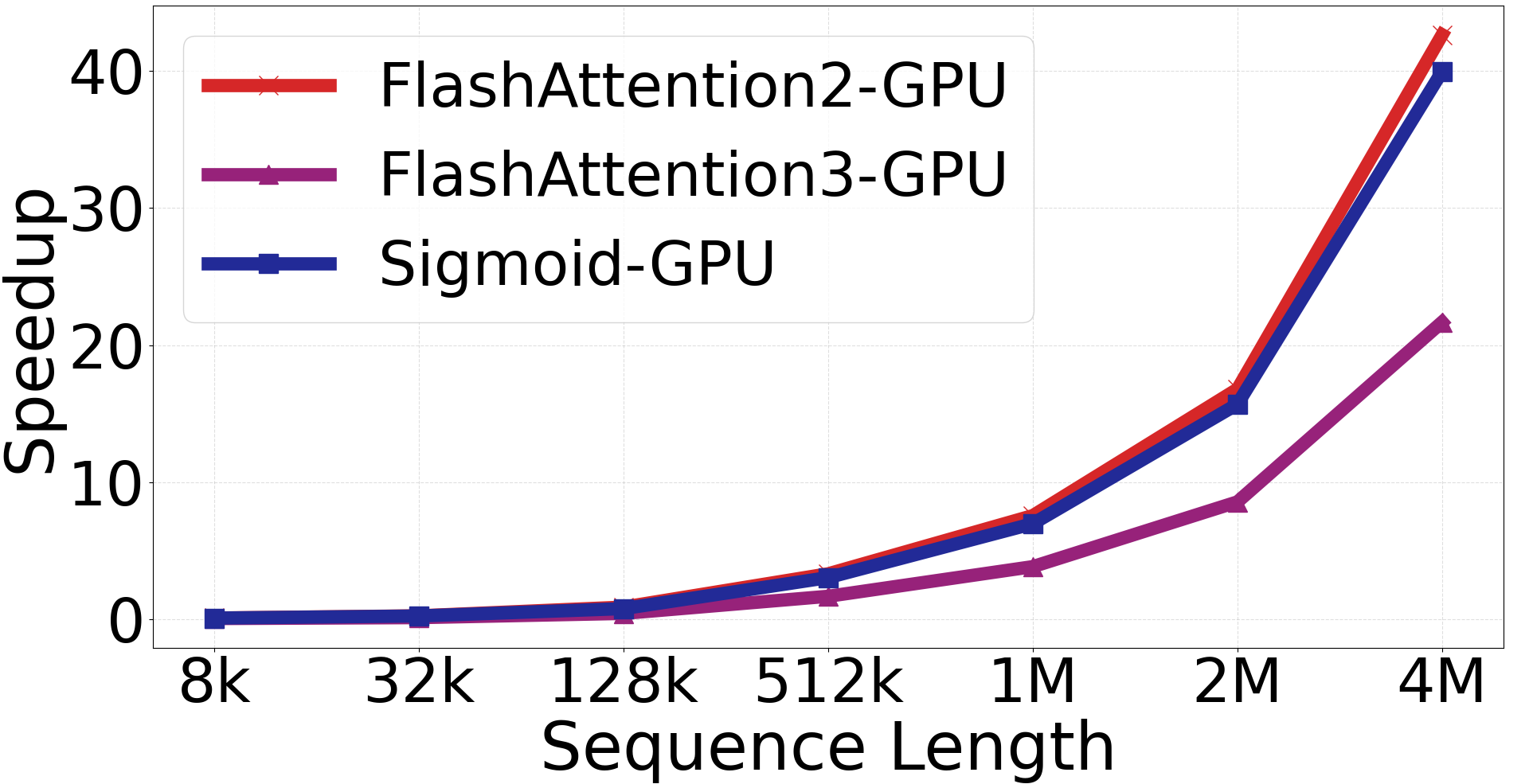}\\
        \caption{CPU-RACE Speedup}
        \label{fig:speedup_cpu}
    \end{subfigure}\hfill
    \begin{subfigure}[t]{0.32\textwidth}
        \centering
        \includegraphics[width=\linewidth]{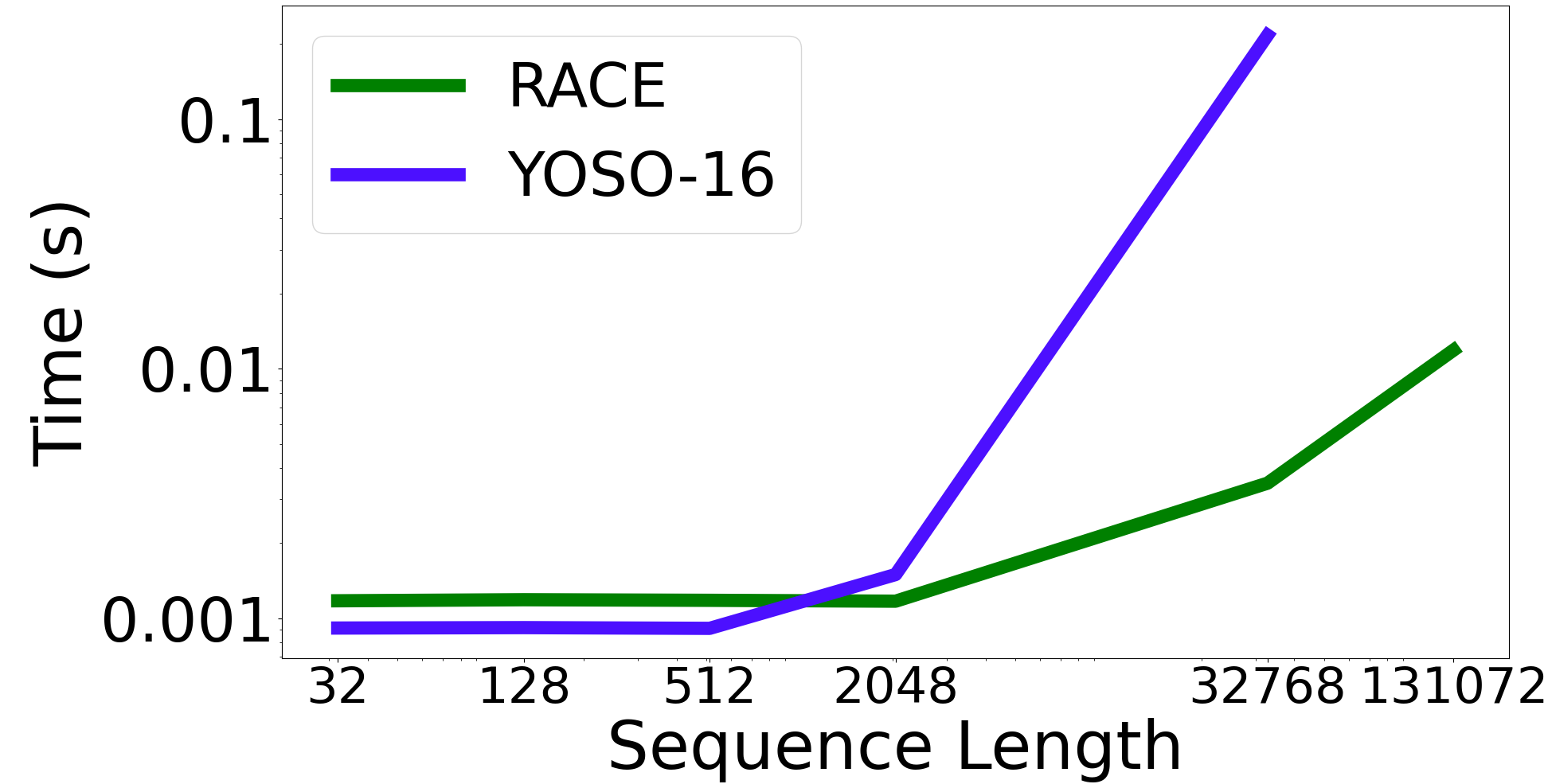}\\
        \caption{YOSO vs.\ RACE on GPU}
        \label{fig:yoso}
    \end{subfigure}
    \caption{Speedup of RACE relative to GPU-only attention baselines, with RACE executed on GPU in (a) and on CPU in (b). (c) YOSO (CUDA kernel) fails beyond 32K tokens due to memory constraints, whereas RACE continues to scale to much longer sequences.}
    \label{fig:speedup}
\end{figure*}
\noindent
\textbf{How far can we scale attention on the most powerful GH200 GPU?}
An NVIDIA GH200 has 96GB of memory. Here, we observe a similar trend. RACE scales up to \textbf{12 million} tokens for a single forward-backward pass of a single attention layer, whereas FlashAttention-2/3 and Sigmoid Attention becomes impractical around \(\sim\!4\) million tokens (see fig.~\ref{fig:scaling}). At \(\sim\!4\) million tokens RACE takes merely 0.1 seconds to finish, while FlashAttention2 needs about 550 seconds, making RACE about $5500\times$ faster on GPUs for processing 4 million tokens. Additionally, RACE is about $5000\times$ faster than Sigmoid and $2600\times$ faster than FlashAttention3 for processing 4 million tokens (see fig.~\ref{fig:speedup_gpu}). FlashAttention removes the quadratic memory cost of attention scores, but exact attention still requires storing token-level $Q,K,V,$ and $O$ activations and their gradients, incurring $\mathcal{O}(B H N d)$ memory. For large $N$, this linear footprint alone exceeds GPU HBM capacity. In contrast, RACE compresses queries and keys into $R$ bucket summaries per table, reducing activation memory to $\mathcal{O}(B H L (N R + R d))$, which scales more favorably on GPU even than the linear attention baselines that exhaust memory at approximately $\sim4$ million tokens (see fig.~\ref{fig:scaling}). As a result, RACE supports contexts up to $3.5\times$ longer than FlashAttention-2/3 and Sigmoid Attention.
\\
\noindent
\textbf{Right Algorithm beats Hardware Acceleration:}
While GPUs offer substantial speedups for a fixed algorithm, comparing
FlashAttention-2/3 and Sigmoid on a high-end GH200 GPU against RACE on a single CPU highlights 
the impact of \emph{algorithmic}
acceleration. Fig.~\ref{fig:flash-race} reports the runtime of a
single forward–backward pass of a single attention layer as the context length increases. For short to
moderate sequences ($N \lesssim 131\mathrm{K}$), the GPU’s massive parallelism
dominates and FlashAttention-2/3 remain faster. Beyond this scale, however, the asymptotic behavior becomes decisive: the GPU kernels begin to saturate, and RACE becomes faster. At a sequence length of $\sim\!4$ million tokens (the largest supported by FlashAttention-2/3 in our configuration), \emph{RACE on CPU} is roughly $40\times$ faster than FlashAttention-2 and Sigmoid Attention on GPU, and about $20\times$ faster than FlashAttention-3 (see fig.~\ref{fig:speedup_cpu}). These results show that, in the long-context regime, state-of-the-art GPU attention kernels are fundamentally constrained by their quadratic dependence on sequence length. In contrast, RACE’s algorithmic efficiency enables it to outperform even the most advanced GPU-accelerated methods by a wide margin, despite executing on substantially weaker hardware.

\section{Conclusion}

We introduce RACE Attention, a linear-time, memory-efficient alternative to Softmax Attention that approximates a sharpened angular kernel via RACE sketches. This formulation will enable substantially more efficient key–value caching during inference and suggests a clear path toward optimized CUDA kernels that preserve favorable scaling behavior and theoretical guarantees in autoregressive settings. We leave a full exploration of these directions to future work.

\section*{Acknowledgments}
We are grateful to Aditya Desai and Zhaozhuo Xu for their valuable insights and thought-provoking discussions. The work was supported by the National Science Foundation (NSF) CCF-2336612 and Rice Ken Kennedy Institute (K2I) Generative AI Cluster Funding.

\setlength{\bibsep}{6pt}
\bibliographystyle{abbrvnat}
{
\bibliography{main.bib}
}

\newpage

\section*{Appendix}

\subsection{Additional Notes on Experiments}
\label{sec:more-exp}
\begin{table}[H]
\centering
\caption{Experiment Setup and Hyperparameters}
\label{tab:exp-hparams}
\small
\setlength{\tabcolsep}{6pt}           
\renewcommand{\arraystretch}{1.15}    
\resizebox{\linewidth}{!}{\begin{tabular}{l l p{0.62\linewidth}}
\hline
\textbf{Dataset} & \textbf{Task} & \textbf{Hyperparameters} \\
\hline
PTB & Language Modeling &
$N$=128; layers=1; heads=2; $d$=128; batch=16; lr=$6e^{-4}$; $\beta$'s=(0.9, 0.999); $\epsilon$=$1e^{-8}$; wd=0.1; dropout = 0.3; epochs=70 \\
WikiText-103 & Language Modeling & $N$=1024; layers=8; heads=8; $d$=512; batch=16; lr=$6e^{-4}$; $\beta$'s=(0.9, 0.999); $\epsilon$=$1e^{-8}$; wd=0.1; dropout=0.1; epochs=100\\
IMDB & Text Classification &
$N$=512; layers=1; heads=2; $d$=128; batch=32; lr=$1e^{-5}$; wd=$5e^{-5}$; dropout=0.1; epochs=150 \\
Yahoo & Text Classification & N=256; layers=1; heads=2; d=128;  batch=32; lr=$1e^{-5}$; wd=$5e^{-5}$; dropout = 0.1; epochs=100\\
ListOps & Text Classification &
$N$=2000; layers=8; heads=8; $d$=512; batch=16; lr=$1e^{-5}$; wd=$1e^{-5}$; dropout=0.1; epochs=40 \\
Text Retrieval & Text Classification & $N$=8000; layers=4; heads=2; $d$=384; batch=1; lr=$2e^{-4}$; wd=$1e^{-2}$; dropout=0.1; epochs=20\\
Tiny Stories & Masked Language Modelling & $N$=512-1024; layers=6; heads=4; $d$=384; batch=32; lr=$6e^{-4}$; $\beta$'s=(0.9, 0.999); $\epsilon$=$1e^{-8}$; wd=0.1; dropout=0.1; epochs=150; stories=20000\\
QNLI & Text Classification & $N$=2048; layers=4; heads=8; $d$=384; batch=32; lr=$1e^{-5}$; wd=$5e^{-5}$; dropout=0.1; epochs=100\ \\
SST-2 & Text Classification & $N$=1024; layers=4; heads=8; $d$=384; batch=32; lr=$1e^{-5}$; wd=$5e^{-5}$; dropout=0.1; epochs=100\ \\
CIFAR-10 & Image Classification & $N$=1024; layers=2; heads=4; $d$=384; batch=32; lr=$6e^{-4}$; $\beta$'s=(0.9, 0.999); $\epsilon$=$1e^{-8}$; wd=0.1; dropout=0.1; epochs=75\ \\
FashionMNIST & Image Classification & $N$=784; layers=2; heads=4; $d$=384; batch=32; lr=$6e^{-4}$; $\beta$'s=(0.9, 0.999); $\epsilon$=$1e^{-8}$; wd=0.1; dropout=0.1; epochs=75\ \\
Food-101 & Image Classification & $N$=16384; layers=8; heads=8; $d$=512; batch=8; lr=$3e^{-4}$; wd=0.001; dropout=0.1; epochs=100; grad\_accum\_steps=4 \\
Arxiv & Text Classification & $N$=16K-64K; layers=4; heads=4; $d$=256; batch=2; lr=$3e^{-4}$; wd=0.01; dropout=0.1; epochs=100; grad\_accum\_steps=16 \\
\hline
\end{tabular}}
\end{table}
\paragraph{Description:}Unless otherwise specified, we adopt a \emph{linear warmup--decay learning-rate schedule}. Let \(T\) denote the total number of optimizer updates,
\[
T = \text{epochs} \times \text{len(train\_loader)}.
\]
The learning rate increases linearly from \(0\) to the base learning rate (reported in Table~\ref{tab:exp-hparams}) over the first \(0.01T\) updates, and then decays linearly to \(0\) over the remaining \(T - 0.01T\) updates. The learning-rate scheduler is stepped once per optimizer update. We use the AdamW optimizer with hyperparameters listed in Table~\ref{tab:exp-hparams}. For experiments with a learning rate below \(2 \times 10^{-4}\), we do not apply linear warmup or a learning-rate scheduler; instead, we train the model using a constant learning rate for the number of epochs specified in Table~\ref{tab:exp-hparams}.
\\

\noindent
\textbf{Data Pre-processing for Arxiv:} To evaluate long-context classification performance, we construct 16K-, 32K-, and 64K-token variants of the ArXiv dataset using a two-stage preprocessing pipeline. First, we tokenize all documents with a basic English tokenizer and retain only those whose raw token length exceeds a predefined minimum threshold, ensuring that each example contains sufficiently long context. Next, we perform streaming sequence packing. Documents are grouped by class and concatenated sequentially until a target sequence length of 16K, 32K, or 64K tokens is reached. To maintain high packing efficiency, only very small residual fragments are discarded. This procedure yields long, contiguous sequences that preserve the natural structure of individual documents while enabling fixed-length training. After balancing classes, the resulting dataset sizes are as follows: 16K (1,947 training examples, 209 test examples), 32K (3,650 training examples, 520 test examples), and 64K (3,882 training examples, 384 test examples).

\paragraph{Data Pre-processing for Food-101:} To evaluate long-context image classification on the Food-101 dataset with a 16K token context length, we reduce the dataset size by discarding 50 classes. We train the model on 20{,}000 samples and evaluate it on a held-out test set of 2{,}500 samples. Images are processed using a patch size of 4, an input resolution of \(512 \times 512\), and a stride of 4. All remaining hyperparameters are reported in Table~\ref{tab:exp-hparams}.

\begin{table*}[t]
\centering
\setlength{\tabcolsep}{6pt}
\renewcommand{\arraystretch}{1.12}
\small

\begin{minipage}[t]{0.32\textwidth}
\centering
\captionof{table}{\textbf{FashionMNIST @ 784}}
\label{tab:fashion}
\begin{tabular}{lc}
\hline 
Method & Accuracy \\
\hline
RACE (P=2, L=5) & \textbf{87.7\%} \\
RACE (P=3, L=5) & \underline{87.5\%} \\
RACE (P=4, L=4) & 86.6\% \\
RACE (P=4, L=5) & 85.7\% \\
Linformer-128   & \textbf{87.7\%} \\
FlashAttention2 & 87.2\% \\
Angular ($\gamma$=8) & 86.4\% \\
Linear          & 85.8\% \\
Performer-256   & 86.6\% \\
\hline
\end{tabular}
\end{minipage}
\hfill
\begin{minipage}[t]{0.32\textwidth}
\centering
\captionof{table}{\textbf{Tiny Stories @ 512}}
\label{tab:tiny}
\begin{tabular}{lc}
\hline
Method & Perplexity \\
\hline
RACE (P=3, L=4) & 3.9 \\
RACE (P=4, L=4) & 3.3 \\
RACE (P=5, L=4) & \textbf{2.7} \\
RACE (P=5, L=5) & 5.1 \\
Linear          & 6.0 \\
Angular ($\gamma$=8) & \underline{2.9} \\
FlashAttention2 & 3.1 \\
Linformer-128   & 4.6 \\
Performer-256   & 7.1 \\
\hline
\end{tabular}
\end{minipage}
\hfill
\begin{minipage}[t]{0.32\textwidth}
\centering
\captionof{table}{\textbf{Penn Tree Bank @ 128}}
\label{tab:ptb}
\begin{tabular}{lc}
\hline
Method & Test PPL \\
\hline
RACE (P=2, L=2) & 54.7 \\
RACE (P=3, L=3) & \underline{54.2} \\
RACE (P=4, L=4) & \textbf{53.4} \\
Angular ($\gamma$=8)  & 58.8 \\
Angular ($\gamma$=12) & 57.6 \\
Linear                & 73.2 \\
FlashAttention2       & 55.4 \\
\hline
\end{tabular}
\end{minipage}

\vspace{1.0em}

\hfill

\begin{minipage}[t]{0.47\textwidth}
\centering
\captionof{table}{Comparison of attention methods on Long Range Arena}
\label{tab:long_range_arena}
\resizebox{\linewidth}{!}{\begin{tabular}{lcc}
\hline
\textbf{Method}
& \textbf{ListOps @2000}
& \textbf{Text Retrieval @8000} \\
& Acc. (\%) $\uparrow$
& Acc. (\%) $\uparrow$ \\
\hline
RACE (P=2, L=2) & \underline{41.9\%} & 80.3\% \\
RACE (P=3, L=3) & 41.3\% & \underline{80.8\%} \\
RACE (P=4, L=4) & 41.6\% & \textbf{80.9\%} \\
\hline
Linformer-128   & 38.9\% & 76.1\% \\
FlashAttention2 & 41.4\% & 80.5\% \\
Angular ($\gamma$=8) & \textbf{42.2\%} & OOM \\
Linear          & 39.6\% & 80.6\% \\
Performer-256   & 40.2\% & \underline{80.8\%} \\
YOSO-16         & 41.0\% & 80.2\% \\
\hline
\end{tabular}}
\end{minipage}
\hfill
%
\begin{minipage}[t]{0.52\textwidth}
\centering
\captionof{table}{Comparison of attention methods on natural language understanding tasks}
\label{tab:yahoo_imdb_sst2}
\resizebox{\linewidth}{!}{\begin{tabular}{lccc}
\hline
\textbf{Method}
& \textbf{Yahoo @256}
& \textbf{IMDB @512}
& \textbf{SST-2 @1024} \\
& Acc. (\%) $\uparrow$
& Acc. (\%) $\uparrow$
& Acc. (\%) $\uparrow$ \\
\hline
RACE (P=2, L=2) & 66.9\% & 80.6\% & 76.7\% \\
RACE (P=3, L=3) & 66.6\% & \textbf{81.3\%} & 77.1\% \\
RACE (P=4, L=4) & \textbf{67.2\%} & \textbf{81.3\%} & \textbf{79.4\%} \\
\hline
Linformer-128   & 64.7\% & 78.2\% & 75.1\% \\
FlashAttention2 & \textbf{67.2\%} & 80.0\% & 78.5\% \\
Angular ($\gamma$=8) & \underline{67.0\%} & 79.6\% & 77.2\% \\
Linear          & 66.9\% & 80.9\% & 78.0\% \\
Performer-256   & 64.9\% & \underline{81.0\%} & 77.3\% \\
\hline
\end{tabular}}
\end{minipage}

\end{table*}

\begin{figure*}[t]
\centering

\begin{subfigure}[t]{0.32\textwidth}
  \centering
  \includegraphics[width=\linewidth]{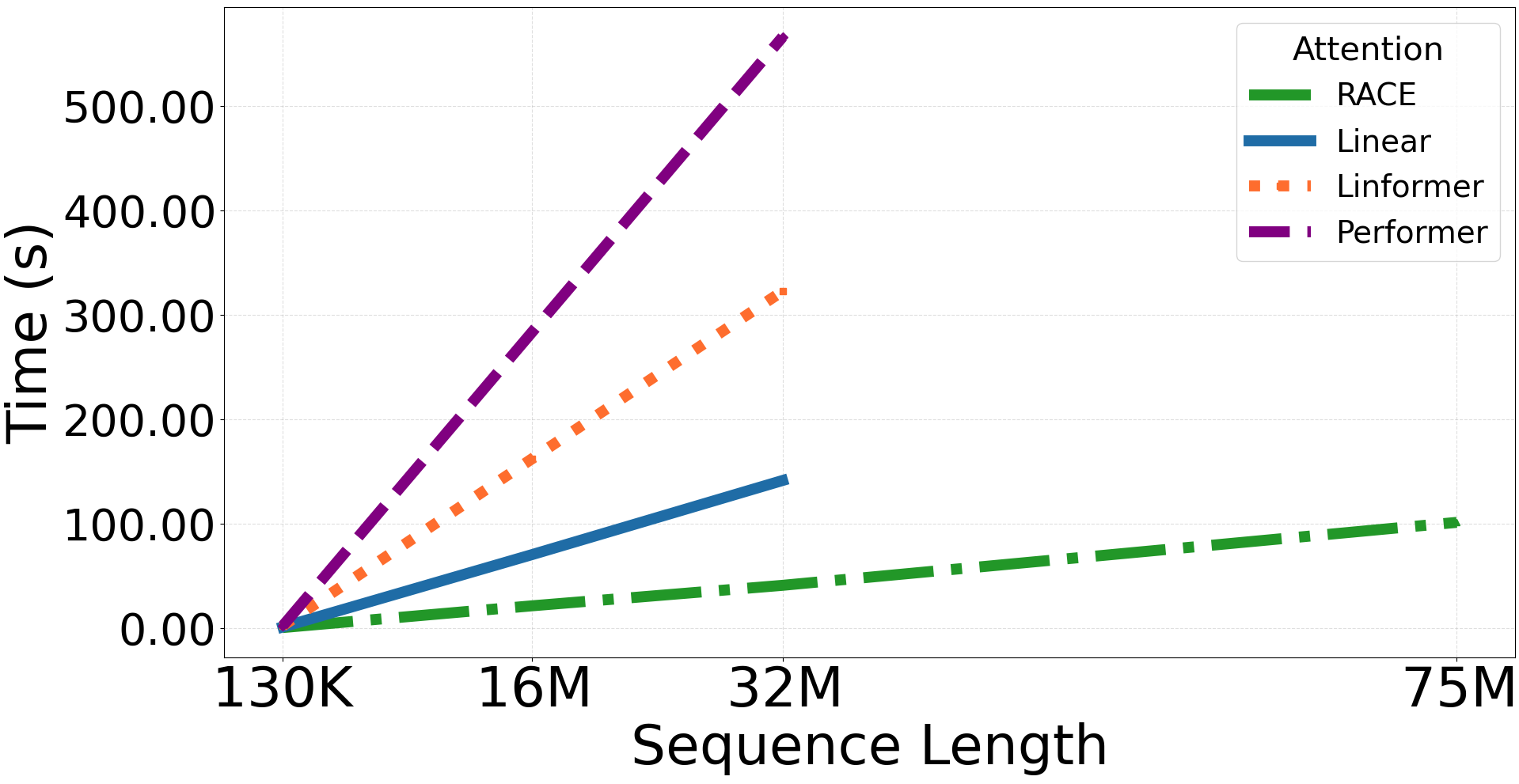}
  \caption{CPU: RACE ($P=2, L=2$)}
\end{subfigure}\hfill
\begin{subfigure}[t]{0.32\textwidth}
  \centering
  \includegraphics[width=\linewidth]{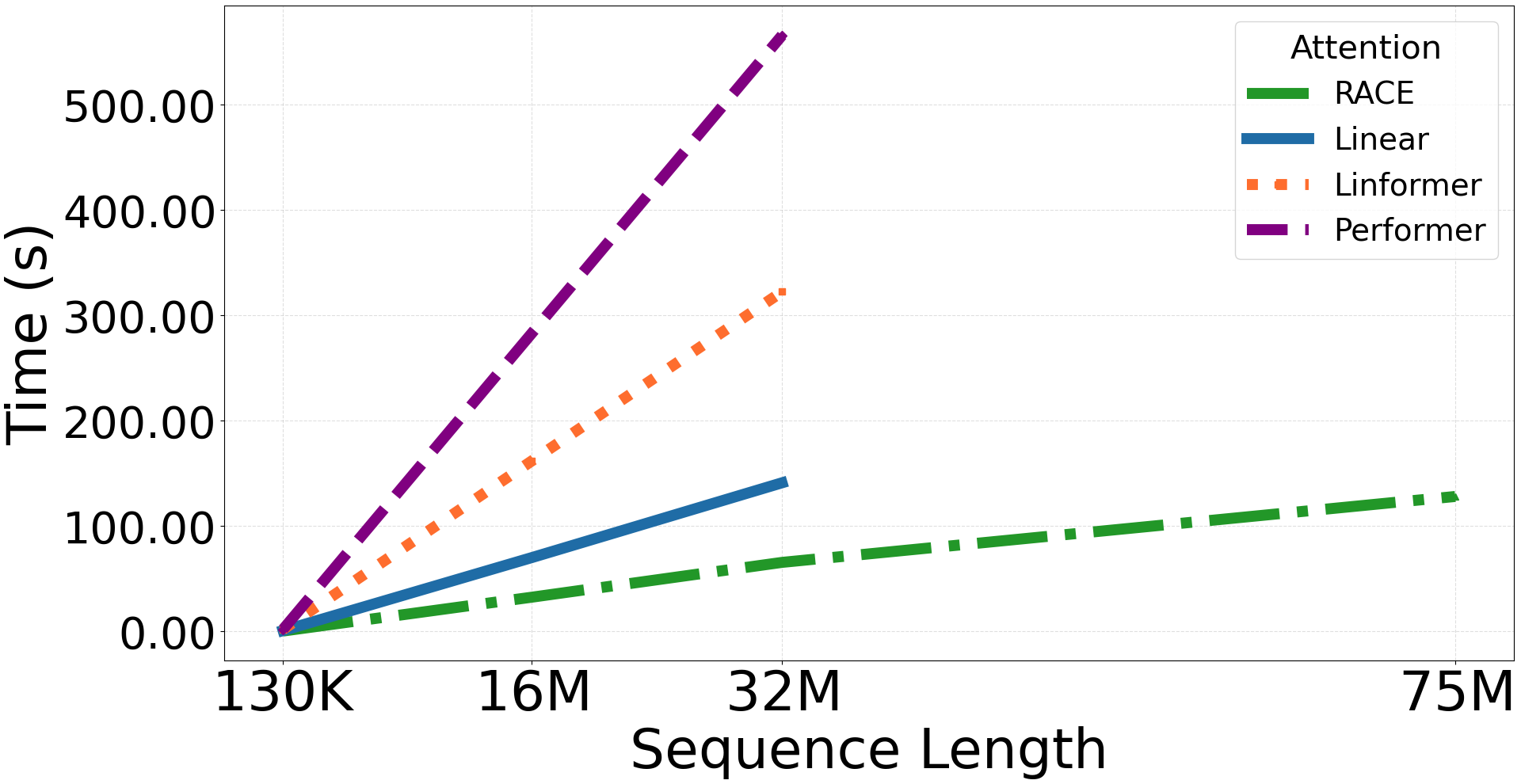}
  \caption{CPU: RACE ($P=3, L=3$)}
\end{subfigure}\hfill
\begin{subfigure}[t]{0.32\textwidth}
  \centering
  \includegraphics[width=\linewidth]{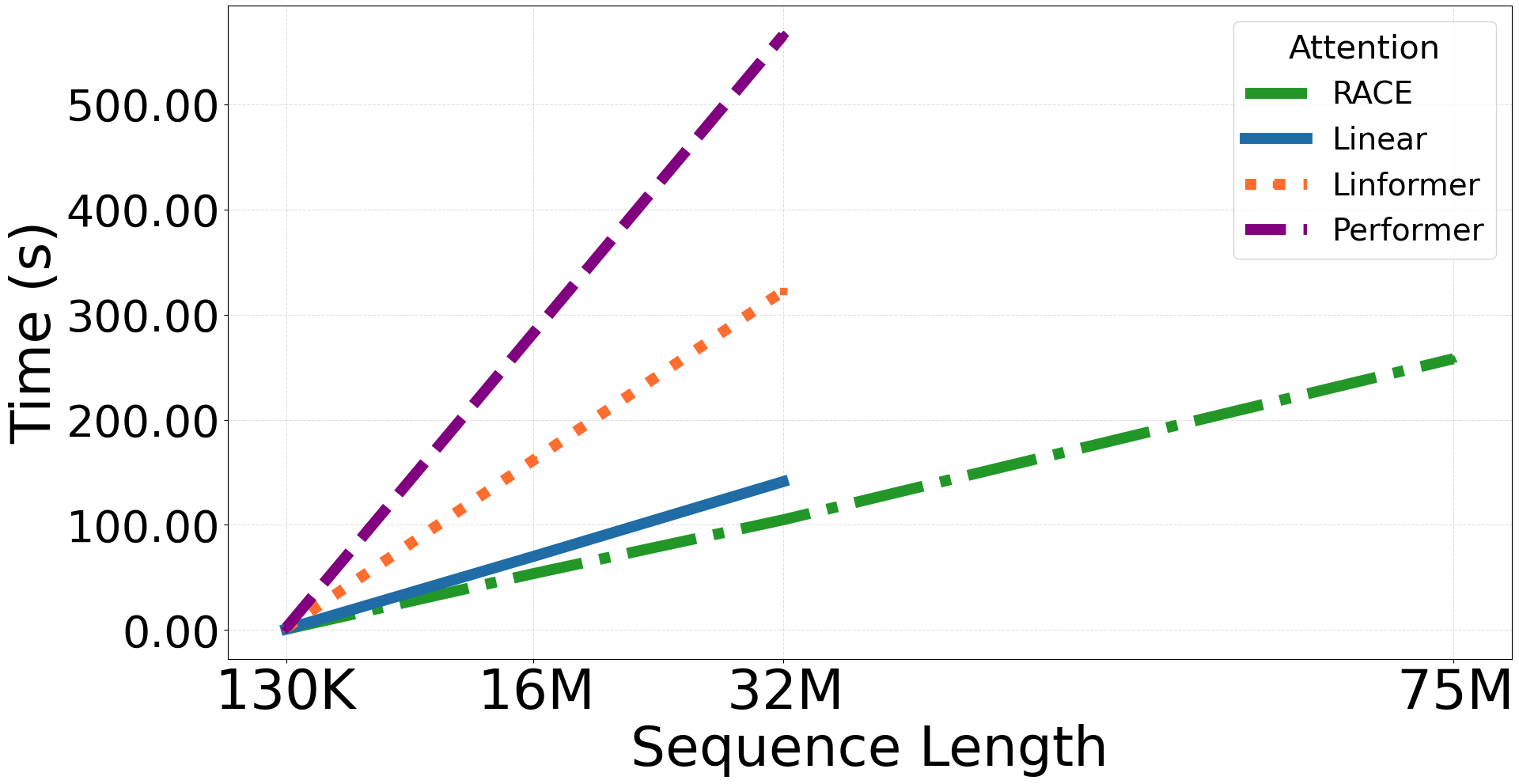}
  \caption{CPU: RACE ($P=4, L=4$)}
\end{subfigure}

\vspace{0.6em}

\begin{subfigure}[t]{0.49\textwidth}
  \centering
  \includegraphics[width=\linewidth]{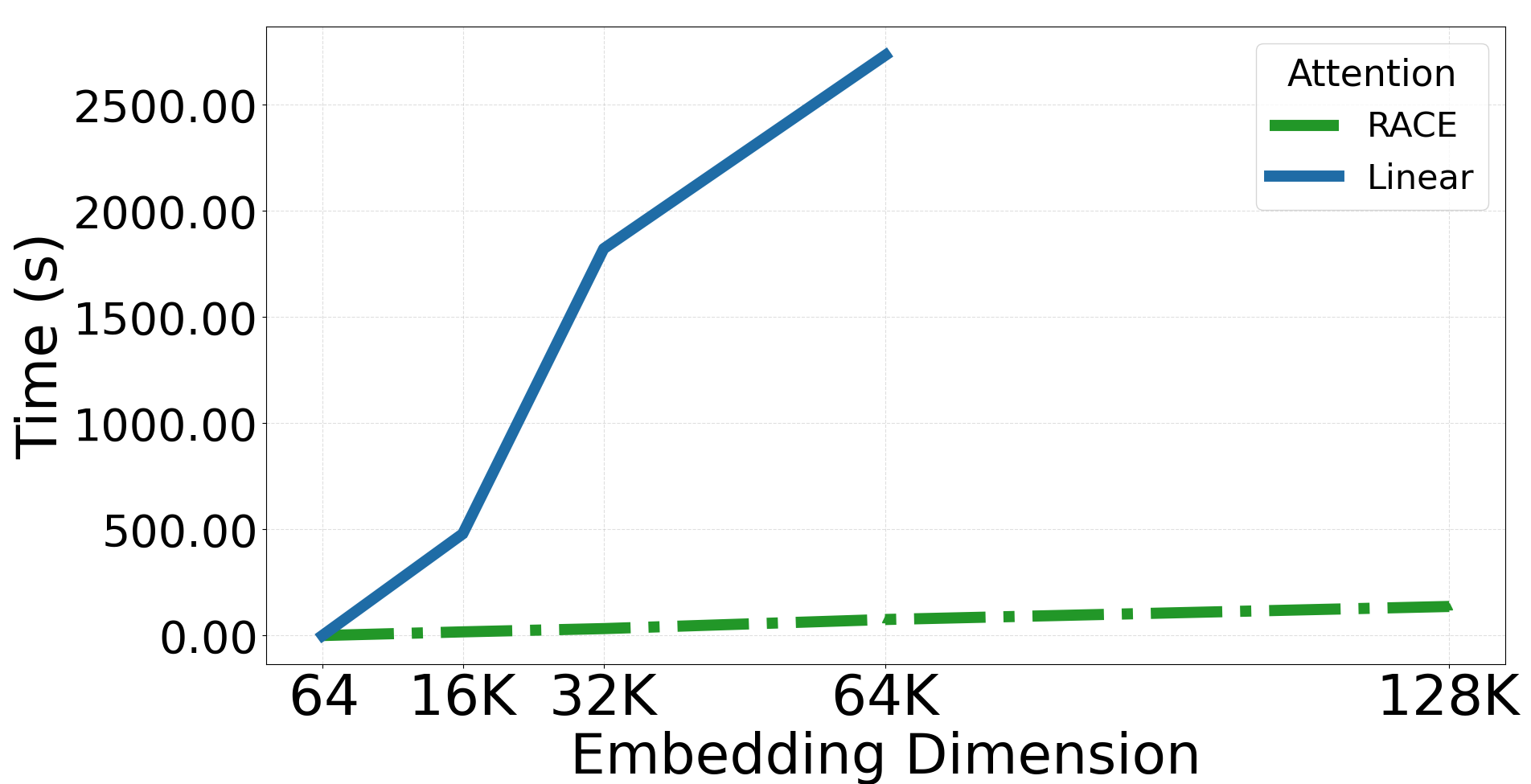}
  \caption{$d$ scaling @ $N{=}65536$}
  \label{fig:embed}
\end{subfigure}\hfill
\begin{subfigure}[t]{0.49\textwidth}
  \centering
  \includegraphics[width=\linewidth]{figs/RACE_GPU_7.png}
  \caption{GPU: RACE ($P=4, L=4$)}
\end{subfigure}

\caption{(a–c) CPU results for different $(P,L)$.
(d) Linear attention scales quadratically in $d$, while RACE scales linearly (fixed $N{=}65536$).
(e) GPU scaling under hyperparameters different from fig.~\ref{fig:scaling}.
}
\label{fig:scaling_cpu}
\end{figure*}

\newpage

\section{Proof of Theorem~\ref{thm:output-beta}}\label{app:theory}

This section provides a complete, self-contained theoretical treatment showing that our RACE Attention closely approximates Angular Attention. We give explicit high-probability bounds for (i) the kernel error, (ii) the attention matrix error with a clean separation of numerator vs.\ denominator effects, and (iii) the end-to-end output error (Theorem~\ref{thm:output-beta}). Let's also reintroduce the notations for the convenience of the reader. 

\subsection{Setup and assumptions}
\textbf{Data.} Sequence length $N$, head (per-head) dimension $d$. Queries/keys are unit vectors:
\[
Q_i, K_j \in \mathbb{R}^{d}
\quad\text{with}\quad
\|Q_i\|_2=\|K_j\|_2=1,\quad i,j\in\{1,\ldots,N\}.
\]

\noindent
\textbf{Target kernel ($P$-powered angular).}
\[
\kappa(Q_i,K_j)\;:=\;\kappa_{\mathrm{ang}}(Q_i,K_j)^P
\;=\;\Big(1-\tfrac{1}{\pi}\cos^{-1}(Q_i^\top K_j)\Big)^{P}\in[0,1],
\qquad
S\in\mathbb{R}^{N\times N}\ \text{with}\ S_{ij}=\kappa(Q_i,K_j).
\]

\noindent
\textbf{Soft RACE features.} For each ensemble $\ell=1,\dots,L$:
\begin{itemize}
    \item Draw $P$ random hyperplanes $W^{(\ell)}\in\mathbb{R}^{P\times d}$ whose rows $w_t^{(\ell)}$ are i.i.d.
    \item Corners $\mathcal V=\{\pm1\}^P$ (size $R=2^P$), with corner vectors $v_r\in\{\pm1\}^P$.
    \item Logits $s^{(\ell)}(x;r):=[\tanh(W^{(\ell)}x)]^\top v_r$, temperature $\beta>0$.
    \item Define the (probability) feature $\phi^{(\ell)}(x)$ by
\[
    [\phi^{(\ell)}(x)]_r \;=\; 
    \frac{\exp\{\beta\, s^{(\ell)}(x;r)\}}
         {\sum_{r'} \exp\{\beta\, s^{(\ell)}(x;r')\}}.
    \]
\end{itemize}

\paragraph{RACE kernel and matrices.}
For each ensemble, define the per-table kernel matrix
\[
\widehat{S}^{(\ell)}_{ij}=(\phi^{(\ell)}(Q_i))^\top(\phi^{(\ell)}(K_j)),
\qquad
\widehat S=\frac{1}{L}\sum_{\ell=1}^L \widehat{S}^{(\ell)}.
\]
Let the (single-table) bias matrix be $\tilde B:=\mathbb{E}[\widehat{S}^{(\ell)}]-S$.

\paragraph{Assumptions.}

For convenience, we restate the two assumptions from Section~\ref{sec:theory}

\begin{itemize}
    \item \textbf{(A1)} Row sums of $S$ are bounded away from zero \emph{i.e.,} $s_{\min} := \min_i (S \mathbf{1})_i \ge C_1N$ for some constant $C_1>0$, which ensures stable normalization in attention.

    \item \textbf{(A2)} Spectral norm of of $S$ is bounded \emph{i.e.,} $\|S\|_2 \le C_2 N$, which follows from $S_{ij}\in[0,1]$.
\end{itemize}

\noindent
\textbf{Notation:} We denote $\|\cdot\|_2$ as spectral norm for a matrix and Euclidean norm for a vector, $\|\cdot\|_F$ for the Frobenius norm of a matrix and for a matrix M, we denote $\|M\|_{\infty}=\max _i \sum_j |M_{ij}|$.

\subsection{Kernel construction with the bias term}

We begin by formalizing how a single hash table induces a kernel matrix via the soft RACE features. The next lemma records norm properties that will be used repeatedly.

\begin{lemma}[Bounds for a single ensemble]\label{lem:phibounds-paper}
Let $\Phi_Q^{(\ell)}\in\mathbb{R}^{N\times R}$ be the matrix with the $i$-th row $\phi^{(\ell)}(Q_i)^\top$ and $\Phi_K^{(\ell)}$ defined analogously. Then:
\begin{enumerate}
\item $\widehat{S}^{(\ell)}=\Phi_Q^{(\ell)}\big(\Phi_K^{(\ell)}\big)^{\top}$.
\item Each row of $\Phi_Q^{(\ell)}$ and $\Phi_K^{(\ell)}$ is a probability vector; hence
$\|\Phi_Q^{(\ell)}\|_F,\ \|\Phi_K^{(\ell)}\|_F \le \sqrt{N}$.
\item Consequently $\|\widehat{S}^{(\ell)}\|_F \le N$.
\end{enumerate}
\end{lemma}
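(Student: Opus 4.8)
\noindent
The plan is to verify the three claims in order, each of which follows directly from the probability-simplex structure of the soft feature map $\phi^{(\ell)}$. No concentration or randomness is needed here: this is a deterministic bookkeeping lemma whose entire content is the bound $\|p\|_2\le 1$ for a probability vector $p$.

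\noindent
For the first claim I would simply unpack the definition of matrix multiplication. The $(i,j)$ entry of $\Phi_Q^{(\ell)}\big(\Phi_K^{(\ell)}\big)^{\top}$ is the inner product of the $i$-th row of $\Phi_Q^{(\ell)}$ with the $j$-th row of $\Phi_K^{(\ell)}$, namely $\phi^{(\ell)}(Q_i)^\top\phi^{(\ell)}(K_j)$, which is exactly the defined entry $\widehat{S}^{(\ell)}_{ij}$. This is an identification, not an estimate, so it is immediate.

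\noindent
For the second claim the key observation is that each $\phi^{(\ell)}(x)$ is a softmax output and therefore lies on the probability simplex: its entries are nonnegative and sum to one. The crucial inequality is that for any probability vector $p$ one has $\|p\|_2^2=\sum_r p_r^2\le\sum_r p_r=1$, since $p_r\in[0,1]$ forces $p_r^2\le p_r$. Summing squared row norms then gives $\|\Phi_Q^{(\ell)}\|_F^2=\sum_{i=1}^N\|\phi^{(\ell)}(Q_i)\|_2^2\le N$, and identically for $\Phi_K^{(\ell)}$, which yields the claimed $\sqrt{N}$ bound.

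\noindent
For the third claim I would combine the first two. The cleanest route uses the submultiplicative inequality $\|AB\|_F\le\|A\|_F\|B\|_2\le\|A\|_F\|B\|_F$ with $A=\Phi_Q^{(\ell)}$ and $B=(\Phi_K^{(\ell)})^\top$, giving $\|\widehat{S}^{(\ell)}\|_F\le\sqrt{N}\cdot\sqrt{N}=N$. An even more elementary alternative bypasses operator-norm bookkeeping entirely: by Cauchy--Schwarz each entry satisfies $\widehat{S}^{(\ell)}_{ij}=\phi^{(\ell)}(Q_i)^\top\phi^{(\ell)}(K_j)\in[0,1]$, so $\|\widehat{S}^{(\ell)}\|_F^2=\sum_{i,j}\big(\widehat{S}^{(\ell)}_{ij}\big)^2\le N^2$. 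There is no genuine obstacle in this lemma; the only point requiring a little care is \emph{not} to assert $\|AB\|_F\le\|A\|_F\|B\|_F$ as if it were a standalone submultiplicativity rule, but to route it through $\|B\|_2\le\|B\|_F$ (or to use the elementwise argument, which avoids the issue). All three bounds are tight, attained when every $\phi^{(\ell)}(\cdot)$ collapses to a common corner, in which case $\widehat{S}^{(\ell)}$ is the all-ones matrix with Frobenius norm exactly $N$.
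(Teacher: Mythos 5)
Your proof is correct and follows essentially the same route as the paper's: identify $\widehat{S}^{(\ell)}$ as the product of the two feature matrices, bound each row's $\ell_2$ norm by $1$ using the probability-simplex structure, and conclude via $\|AB\|_F\le\|A\|_F\|B\|_F$. One small note: your caution about that last step is unnecessary --- the Frobenius norm is genuinely submultiplicative (entrywise Cauchy--Schwarz gives $\|AB\|_F\le\|A\|_F\|B\|_F$ directly), which is exactly the standalone fact the paper invokes.
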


\begin{proof}
Each $\phi^{(\ell)}(x)$ is a softmax over $R=2^P$ corners, so entries are nonnegative and sum to $1$. Item (1) is by definition of $\widehat{S}^{(\ell)}_{ij}$. For (2), every row $p$ satisfies $\|p\|_2\le \|p\|_1=1$, hence $\|\Phi_Q^{(\ell)}\|_F^2=\sum_i\|\phi^{(\ell)}(Q_i)\|_2^2\le N$ (and similarly for $\Phi_K^{(\ell)}$). Item (3) follows from $\|AB\|_F\le \|A\|_F\|B\|_F$.
\end{proof}

Having controlled the feature-induced matrix norms, we quantify the zero-mean fluctuation of one ensemble around its expectation and prepare moment bounds needed for matrix concentration. Note that the hash projections $W^{(\ell)}$ (and hence $\widehat{S}^{(\ell)}$) are independent for $\ell=1,\dots,L$.

\begin{lemma}\label{lem:boundX-paper}
Let $X^{(\ell)}:=\widehat{S}^{(\ell)}-\mathbb{E}[\widehat{S}^{(\ell)}]$ and write
\[
\Delta:=\widehat S - S = \frac{1}{L}\sum_{\ell=1}^L X^{(\ell)} + \tilde B.
\]
Then:
\begin{enumerate}
\item $\mathbb{E}[X^{(\ell)}]=0$.
\item $\|X^{(\ell)}\|_2\le 2N$.
\item With
\[
v:=\max \left\{\left\|\sum_{\ell=1}^L \mathbb{E}\!\left[\Big(\tfrac{1}{L} X^{(\ell)}\Big)\Big(\tfrac{1}{L} X^{(\ell)}\Big)^{\!\top}\right]\right\|_2,\;
\left\|\sum_{\ell=1}^L \mathbb{E}\!\left[\Big(\tfrac{1}{L} X^{(\ell)}\Big)^{\!\top}\Big(\tfrac{1}{L} X^{(\ell)}\Big)\right]\right\|_2\right\},
\]
we have $v\le 4N^2/L$.
\end{enumerate}
\end{lemma}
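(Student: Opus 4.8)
The plan is to dispatch each of the three claims using the norm bounds already established in Lemma~\ref{lem:phibounds-paper} together with elementary properties of the spectral norm, so the argument is essentially bookkeeping rather than a genuinely hard estimate. For item (1) I would simply invoke linearity of expectation: since $X^{(\ell)}=\widehat{S}^{(\ell)}-\mathbb{E}[\widehat{S}^{(\ell)}]$, taking expectations gives $\mathbb{E}[X^{(\ell)}]=\mathbb{E}[\widehat{S}^{(\ell)}]-\mathbb{E}[\widehat{S}^{(\ell)}]=0$, which requires nothing beyond the definition.

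For item (2), the key input is part (3) of Lemma~\ref{lem:phibounds-paper}, which gives the deterministic bound $\|\widehat{S}^{(\ell)}\|_F\le N$. Since the spectral norm is dominated by the Frobenius norm, this yields $\|\widehat{S}^{(\ell)}\|_2\le N$ almost surely. Applying Jensen's inequality to the convex map $M\mapsto\|M\|_2$ then controls the mean, $\|\mathbb{E}[\widehat{S}^{(\ell)}]\|_2\le\mathbb{E}\|\widehat{S}^{(\ell)}\|_2\le N$, and the triangle inequality finishes the claim:
\[
\|X^{(\ell)}\|_2\;\le\;\|\widehat{S}^{(\ell)}\|_2+\|\mathbb{E}[\widehat{S}^{(\ell)}]\|_2\;\le\;2N.
\]

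Item (3) is the only step carrying real bookkeeping, and it is the part I would be most careful about. The matrices $X^{(\ell)}(X^{(\ell)})^\top$ and $(X^{(\ell)})^\top X^{(\ell)}$ are positive semidefinite, and the deterministic bound from item (2) gives the operator dominance $X^{(\ell)}(X^{(\ell)})^\top\preceq\|X^{(\ell)}\|_2^2\,I\preceq 4N^2\,I$. Taking expectations preserves the PSD order, so $\|\mathbb{E}[X^{(\ell)}(X^{(\ell)})^\top]\|_2\le 4N^2$, and identically for the Gram matrix $(X^{(\ell)})^\top X^{(\ell)}$. Because the hyperplanes $W^{(\ell)}$ (hence the $X^{(\ell)}$) are i.i.d. and each summand carries the factor $1/L^2$, summing the $L$ identical PSD terms and using subadditivity of the spectral norm (the triangle inequality) gives
\[
\Big\|\sum_{\ell=1}^L \mathbb{E}\big[(\tfrac{1}{L}X^{(\ell)})(\tfrac{1}{L}X^{(\ell)})^\top\big]\Big\|_2 \;\le\; L\cdot\frac{1}{L^2}\cdot 4N^2 \;=\; \frac{4N^2}{L},
\]
and the same bound holds for the transpose term; taking the maximum yields $v\le 4N^2/L$. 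The only point worth double-checking is that one bounds the expectation of $\|X^{(\ell)}\|_2^2$ via the almost-sure operator dominance (rather than trying to compute the true second moment), which is exactly what the crude $2N$ bound of item (2) makes available; no sharper control is needed for the downstream matrix-concentration step.
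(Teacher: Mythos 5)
Your proposal is correct and follows essentially the same route as the paper: linearity of expectation for (1), the Frobenius bound from Lemma~\ref{lem:phibounds-paper} plus Jensen and the triangle inequality for (2), and the almost-sure bound $\|X^{(\ell)}\|_2\le 2N$ pushed through the expectation for (3). The only cosmetic difference is that in (3) you pass the bound through the expectation via PSD operator dominance ($X^{(\ell)}(X^{(\ell)})^\top\preceq 4N^2 I$ a.s.), whereas the paper uses Jensen together with submultiplicativity to bound $\|\mathbb{E}[X^{(\ell)}(X^{(\ell)})^\top]\|_2\le\mathbb{E}[\|X^{(\ell)}\|_2^2]$; both are valid and yield the identical constant.
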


\begin{proof}
\textbf{(1)} By definition, $X^{(\ell)}=\widehat{S}^{(\ell)}-\mathbb{E}[\widehat{S}^{(\ell)}]$, hence $\mathbb{E}[X^{(\ell)}]=\mathbb{E}[\widehat{S}^{(\ell)}]-\mathbb{E}[\widehat{S}^{(\ell)}]=0$.

\medskip
\noindent\textbf{(2)} By Lemma~\ref{lem:phibounds-paper}(3) we have $\|\widehat{S}^{(\ell)}\|_2 \le \|\widehat{S}^{(\ell)}\|_F \le N$. By convexity of the spectral norm,
\[
\|\mathbb{E}[\widehat{S}^{(\ell)}]\|_2 \le \mathbb{E}\big[\|\widehat{S}^{(\ell)}\|_2\big] \le N.
\]
Therefore, by the triangle inequality,
\[
\|X^{(\ell)}\|_2 \;=\; \|\widehat{S}^{(\ell)}-\mathbb{E}[\widehat{S}^{(\ell)}]\|_2 \;\le\; \|\widehat{S}^{(\ell)}\|_2 + \|\mathbb{E}[\widehat{S}^{(\ell)}]\|_2 \;\le\; 2N.
\]

\medskip
\noindent\textbf{(3)} Let $Y^{(\ell)}:=\tfrac{1}{L}X^{(\ell)}$. Then
\[
\sum_{\ell=1}^L \mathbb{E}\big[Y^{(\ell)}(Y^{(\ell)})^\top\big]
= \frac{1}{L^2} \sum_{\ell=1}^L \mathbb{E}\big[X^{(\ell)}(X^{(\ell)})^\top\big].
\]
Using subadditivity of $\|\cdot\|_2$, Jensen, and $\|AB\|_2 \le \|A\|_2\|B\|_2$,
\[
\left\|\sum_{\ell=1}^L \mathbb{E}\big[Y^{(\ell)}(Y^{(\ell)})^\top\big]\right\|_2
\le \frac{1}{L^2}\sum_{\ell=1}^L \left\| \mathbb{E}\big[X^{(\ell)}(X^{(\ell)})^\top\big]\right\|_2
\le \frac{1}{L^2}\sum_{\ell=1}^L \mathbb{E}\big[\|X^{(\ell)}\|_2^2\big]
\le \frac{1}{L^2}\sum_{\ell=1}^L (2N)^2
= \frac{4N^2}{L}.
\]
The same bound holds for $\left\|\sum_{\ell=1}^L \mathbb{E}\big[(Y^{(\ell)})^\top Y^{(\ell)}\big]\right\|_2$ by symmetry. Taking the maximum of the two yields $v\le 4N^2/L$.
\end{proof}

To convert the moment and uniform bounds above into a high-probability spectral-norm bound, we invoke a standard matrix Bernstein inequality from \cite{tropp2015matrix}, stated next for completeness.

\begin{lemma}[Matrix Bernstein]\label{lem:bernstein-paper}
If $Z^{(\ell)}\in\mathbb{R}^{m\times n}$ are independent mean-zero matrices with $\|Z^{(\ell)}\|_2\le H$ and variance proxy $v$, then for any $t>0$,
\[
\mathbb{P}\!\left(\left\|\sum_{\ell} Z^{(\ell)}\right\|_2 \ge t\right) \le (m+n)\,\exp \!\left(-\frac{t^2 / 2}{v+H t / 3}\right).
\]
\end{lemma}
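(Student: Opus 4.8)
The plan is to pass from the kernel error to the output error through an exact numerator/denominator decomposition, and to control the kernel error itself by separating a deterministic bias (the price of finite temperature $\beta$) from a stochastic fluctuation (the price of finite $L$). Writing $D_S=\operatorname{diag}(S\mathbf 1)$, $D_{\widehat S}=\operatorname{diag}(\widehat S\mathbf 1)$ and $\Delta=\widehat S-S$, the exact outputs are $O=D_S^{-1}SV$ and $\widehat O=D_{\widehat S}^{-1}\widehat S V$, and a one-line manipulation gives the identity
\[
\widehat O-O \;=\; D_{\widehat S}^{-1}\,\Delta\,V \;-\; D_{\widehat S}^{-1}\operatorname{diag}(\Delta\mathbf 1)\,O,
\]
which cleanly isolates the \emph{numerator} perturbation (first term) from the \emph{denominator/normalization} perturbation (second term). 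I would bound each in Frobenius norm and divide by $\sqrt N$ only at the very end, since the target $\|\cdot\|_{\mathrm{rms}}=\tfrac{1}{\sqrt N}\|\cdot\|_F$ gives a $\sqrt N$ of slack that the denominator term will consume.

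Next I would control $\Delta$. Using $\Delta=\tfrac1L\sum_\ell X^{(\ell)}+\tilde B$ from Lemma~\ref{lem:boundX-paper}, the stochastic part is handled by matrix Bernstein (Lemma~\ref{lem:bernstein-paper}) applied to $Z^{(\ell)}=\tfrac1L X^{(\ell)}$, for which Lemma~\ref{lem:boundX-paper} supplies $H=2N/L$ and $v\le 4N^2/L$; choosing the deviation level $t\asymp N\sqrt{\log(N/\delta)/L}$ makes the variance term dominate (the $Ht/3$ correction is lower order once $L\gtrsim\log(N/\delta)$), giving $\|\tfrac1L\sum_\ell X^{(\ell)}\|_2=\mathcal O\big(N\sqrt{\log(N/\delta)/L}\big)$ with probability $1-\delta$. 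The deterministic part $\tilde B$ is the crux. Here I would exploit that the corner logits $s^{(\ell)}(x;r)=\tanh(W^{(\ell)}x)^\top v_r=\sum_t \tanh(w_t^\top x)(v_r)_t$ are \emph{separable} across the $P$ hyperplanes, so the softmax over $\mathcal V=\{\pm1\}^P$ factorizes into $P$ independent per-coordinate two-point distributions with ``$+1$ probability'' $\sigma\!\big(2\beta\tanh(w_t^\top x)\big)$, $\sigma$ the logistic function. Consequently $\widehat S^{(\ell)}_{ij}=\prod_{t=1}^P m_t$ with $m_t$ the per-coordinate soft agreement, and taking expectation over the i.i.d.\ rows gives $\mathbb E[\widehat S^{(\ell)}_{ij}]=\mu_{ij}^{\,P}$ while $S_{ij}=(1-\theta_{ij}/\pi)^P$, where $\theta_{ij}=\cos^{-1}(Q_i^\top K_j)$ and $1-\theta_{ij}/\pi$ is exactly the SimHash collision probability recovered at $\beta=\infty$.

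The heart of the argument is then the scalar estimate $|\mu_{ij}-(1-\theta_{ij}/\pi)|=\mathcal O(1/\beta)$. I would prove this by noting that $\sigma(2\beta\tanh s)$ differs from the hard threshold $\mathbf 1\{s>0\}$ only in a window $|s|\lesssim 1/\beta$ around the origin, and that $w_t^\top x$ is Gaussian with unit variance (unit-norm $x$) hence has bounded density there; integrating the mismatch over this window contributes $\mathcal O(1/\beta)$. Since $z\mapsto z^P$ is $P$-Lipschitz on $[0,1]$, this lifts to $|\tilde B_{ij}|=\mathcal O(P/\beta)$ entrywise, whence $\|\tilde B\|_2\le\|\tilde B\|_F\le N\max_{ij}|\tilde B_{ij}|=\mathcal O(NP/\beta)$ and likewise $\max_i|(\tilde B\mathbf 1)_i|=\mathcal O(NP/\beta)$. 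For the row-sum fluctuation of the stochastic part I would apply a scalar Hoeffding bound to each coordinate of $\tfrac1L\sum_\ell X^{(\ell)}\mathbf 1$ (each $(X^{(\ell)}\mathbf 1)_i\in[-N,N]$) and union bound over the $N$ rows, giving $\max_i|(\Delta\mathbf 1)_i|=\mathcal O(N\varepsilon)$ with $\varepsilon:=P/\beta+\sqrt{\log(N/\delta)/L}$; together with the spectral bounds this also yields $\|\Delta\|_2=\mathcal O(N\varepsilon)$.

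Finally I would assemble the two terms. Assumption (A1) gives $\|D_S^{-1}\|_2\le 1/(C_1N)$, and because $\max_i|(\Delta\mathbf 1)_i|\le\tfrac12 C_1N$ once $\beta,L$ are large enough, the perturbed row sums stay $\gtrsim N$, so $\|D_{\widehat S}^{-1}\|_2=\mathcal O(1/N)$ as well. The numerator term obeys $\|D_{\widehat S}^{-1}\Delta V\|_F\le\|D_{\widehat S}^{-1}\|_2\|\Delta\|_2\|V\|_F=\mathcal O(\varepsilon)\|V\|_F$. For the denominator term I would use that $A:=D_S^{-1}S$ is row-stochastic, so $\|A\|_2\le\|A\|_F\le\sqrt N$ and hence $\|O\|_F\le\sqrt N\,\|V\|_F$; then $\|D_{\widehat S}^{-1}\operatorname{diag}(\Delta\mathbf 1)O\|_F\le\|D_{\widehat S}^{-1}\|_2\,\max_i|(\Delta\mathbf 1)_i|\,\|O\|_F=\mathcal O(\varepsilon\sqrt N)\|V\|_F$, the dominant contribution. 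Adding the two and dividing by $\sqrt N$ gives $\|\widehat O-O\|_{\mathrm{rms}}=\mathcal O(\varepsilon)\|V\|_F$, as claimed. The main obstacle is unquestionably the bias estimate of the third paragraph: the softmax factorization is clean, but turning the pointwise convergence $\sigma(2\beta\tanh s)\to\mathbf 1\{s>0\}$ into a quantitative $\mathcal O(1/\beta)$ rate—uniformly in the geometry $\theta_{ij}$—is the delicate step, and everything else is bookkeeping built on the already-established Lemmas~\ref{lem:phibounds-paper}--\ref{lem:bernstein-paper}.
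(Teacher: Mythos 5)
Your proposal does not prove the statement it was asked to prove. The statement is the Matrix Bernstein inequality itself (Lemma~\ref{lem:bernstein-paper}): a concentration bound for the spectral norm of a sum of independent, mean-zero, uniformly bounded $m\times n$ random matrices, with the dimensional factor $(m+n)$ and the variance proxy $v$ appearing in the tail. What you have written instead is a proof sketch of the end-to-end output-error guarantee (Theorem~\ref{thm:output-beta}): a numerator/denominator decomposition of $\widehat O - O$, a bias analysis of the soft-hash features in $\beta$ and $P$, row-sum control, and an assembly step. In the second paragraph you explicitly \emph{invoke} ``matrix Bernstein (Lemma~\ref{lem:bernstein-paper})'' with $H=2N/L$ and $v\le 4N^2/L$ as a black box --- that is, you assume the very statement you were supposed to establish, and prove a different (downstream) result. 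Relative to the task this is circular, even though the downstream argument you sketch is broadly in the spirit of the paper's Appendix~\ref{app:theory}.

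A genuine proof of the lemma would require entirely different machinery, none of which appears in your text: the matrix Laplace transform (Chernoff) method, control of the matrix moment generating function of a bounded mean-zero random matrix, subadditivity of matrix cumulant generating functions via Lieb's concavity theorem (or a Golden--Thompson-based variant), and the Hermitian dilation trick to pass from rectangular $m\times n$ matrices to $(m+n)\times(m+n)$ self-adjoint ones --- which is precisely where the $(m+n)$ prefactor comes from --- followed by optimizing the exponent to obtain the $t^2/2\,/\,(v+Ht/3)$ form. For what it is worth, the paper itself does not reprove this inequality either; it quotes it as a standard result from \cite{tropp2015introduction} and only verifies its hypotheses (mean-zero, $\|Z^{(\ell)}\|_2\le H$, variance proxy $v$) in Lemma~\ref{lem:boundX-paper} before applying it in Theorem~\ref{thm:kernel-deviation-paper}. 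So either cite it the same way and say so explicitly, or supply the Laplace-transform/dilation argument; your current write-up does neither and, as a proof of the stated lemma, has a complete gap.
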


Next, applying Lemma~\ref{lem:bernstein-paper} with the parameters established in Lemma~\ref{lem:boundX-paper}, we obtain the following nonasymptotic deviation bound for the kernel estimator.

\begin{theorem}[Kernel deviation with explicit constants]\label{thm:kernel-deviation-paper}
With probability at least $1-\delta$,
\[
\|\widehat S - S\|_2
\;\le\;
\|\tilde B\|_2
\;+\;
4\,\frac{N}{\sqrt{L}}\sqrt{\log\!\frac{2N}{\delta}}
\;+\;
\frac{4}{3}\,\frac{N}{L}\,\log\!\frac{2N}{\delta}.
\]
\end{theorem}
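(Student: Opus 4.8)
The plan is to decompose the deviation $\widehat S - S$ into its bias and fluctuation pieces using the identity already recorded in Lemma~\ref{lem:boundX-paper}, namely $\Delta = \widehat S - S = \frac{1}{L}\sum_{\ell=1}^L X^{(\ell)} + \tilde B$. By the triangle inequality, $\|\widehat S - S\|_2 \le \|\tilde B\|_2 + \big\|\frac{1}{L}\sum_{\ell=1}^L X^{(\ell)}\big\|_2$, so it suffices to obtain a high-probability bound on the spectral norm of the averaged fluctuation term. The entire content of the theorem is therefore the concentration of $\frac{1}{L}\sum_\ell X^{(\ell)}$, with the bias $\|\tilde B\|_2$ simply carried along as an additive term.

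To bound the fluctuation, I would apply the matrix Bernstein inequality (Lemma~\ref{lem:bernstein-paper}) to the summands $Z^{(\ell)} := \frac{1}{L} X^{(\ell)}$. These are independent and mean-zero by Lemma~\ref{lem:boundX-paper}(1), and they are square $N\times N$ matrices, so $m = n = N$ and the prefactor $m+n$ becomes $2N$. The per-summand spectral bound is $\|Z^{(\ell)}\|_2 = \frac{1}{L}\|X^{(\ell)}\|_2 \le \frac{2N}{L}$ by Lemma~\ref{lem:boundX-paper}(2), giving $H = 2N/L$; the variance proxy is $v \le 4N^2/L$ by Lemma~\ref{lem:boundX-paper}(3). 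Substituting these into the tail bound,
\[
\mathbb{P}\!\left(\Big\|\tfrac{1}{L}\sum_{\ell} X^{(\ell)}\Big\|_2 \ge t\right)
\le 2N \exp\!\left(-\frac{t^2/2}{\,4N^2/L + (2N/L)\,t/3\,}\right).
\]

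The remaining work is to invert this tail bound: set the right-hand side equal to $\delta$ and solve for $t$. Writing $u := \log(2N/\delta)$, the condition becomes $t^2/2 \ge u\,(v + Ht/3)$, i.e. a quadratic inequality in $t$. Rather than solving the quadratic exactly, I would use the standard two-regime relaxation: it suffices to take $t$ to be the sum of the two terms obtained by balancing $t^2/2$ against $vu$ (the sub-Gaussian/variance regime) and against $Hut/3$ (the sub-exponential regime) separately. This yields $t \le \sqrt{2vu} + \frac{2}{3}Hu$, and plugging in $v \le 4N^2/L$ and $H = 2N/L$ gives $\sqrt{2 \cdot 4N^2/L \cdot u} = \sqrt{8}\,\frac{N}{\sqrt L}\sqrt{u} \le 4\frac{N}{\sqrt L}\sqrt{u}$ (since $\sqrt 8 < 4$) and $\frac{2}{3}\cdot\frac{2N}{L}\cdot u = \frac{4}{3}\frac{N}{L}u$, matching the two stated variance terms exactly. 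Combining with the bias term completes the proof.

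The main obstacle is purely the tail inversion: one must be careful that the clean additive form $\sqrt{2vu} + \frac{2}{3}Hu$ is a valid upper bound on the true root of the quadratic, not merely a heuristic. The standard justification is that if $t \ge \sqrt{2vu} + \frac{2}{3}Hu$ then $t^2/2 \ge vu + Hut/3$ (using $(a+b)^2 \ge a^2 + ab$ type inequalities together with $t \ge \frac{2}{3}Hu$), so the exponent is at least $u$ and the tail is at most $2N e^{-u} = \delta$. Everything else—the norm bounds, mean-zero property, and variance proxy—is already supplied by the preceding lemmas, so no genuinely new estimate is needed; the proof is essentially a bookkeeping assembly of Lemmas~\ref{lem:boundX-paper} and~\ref{lem:bernstein-paper}.
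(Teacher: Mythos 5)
Your proposal is correct and follows essentially the same route as the paper: the same decomposition $\widehat S - S = \frac{1}{L}\sum_\ell X^{(\ell)} + \tilde B$, the same application of matrix Bernstein with $H = 2N/L$, $v \le 4N^2/L$, $m+n = 2N$, and the same tail inversion (your choice $t = \sqrt{2vu} + \tfrac{2}{3}Hu$ is a slightly tighter valid root bound than the paper's $t = 2\sqrt{vu} + \tfrac{2}{3}Hu$, but after relaxing $\sqrt{8} < 4$ both land on identical constants). No gaps.
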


\begin{proof}
First, rewrite $\widehat{S}-S$ as
\[
\widehat{S}-S
= \frac{1}{L}\sum_{\ell=1}^{L}\widehat{S}^{(\ell)}-S
= \frac{1}{L}\sum_{\ell=1}^{L}\Big(\widehat{S}^{(\ell)}-\mathbb{E}[\widehat{S}^{(\ell)}]\Big)
\;+\; \Big(\mathbb{E}[\widehat{S}^{(\ell)}]-S\Big)
= \frac{1}{L}\sum_{\ell=1}^{L}X^{(\ell)}+\tilde B.
\]
By the triangle inequality,
\[
\|\widehat{S}-S\|_2 \;\le\; \left\|\frac{1}{L}\sum_{\ell=1}^{L}X^{(\ell)}\right\|_2 \;+\; \|\tilde B\|_2.
\]
It remains to upper bound the random term with high probability.

\medskip
\noindent
Set $Z^{(\ell)}:=\frac{1}{L}X^{(\ell)}$. 
Then the $Z^{(\ell)}$ are independent, mean-zero, $N\times N$ random matrices. From Lemma~\ref{lem:boundX-paper}(2) we have $\|X^{(\ell)}\|_2\le 2N$. Therefore, $\|Z^{(\ell)}\|_2 \le H:=\frac{2N}{L}$. Similarly, Lemma~\ref{lem:boundX-paper}(3) gives $v\le\frac{4N^2}{L}$. Applying Lemma~\ref{lem:bernstein-paper} with $m=n=N$ yields
\[
\mathbb{P}\!\left(\left\|\sum_{\ell=1}^{L}Z^{(\ell)}\right\|_2 \ge t\right)
\;\le\; 2N\,\exp\!\left(-\frac{t^2}{2\,(v+Ht/3)}\right).
\]
Let $u:=\log\frac{2N}{\delta}$. To make the RHS $\le \delta$, it suffices that
\[
\frac{t^2}{2\,(v+Ht/3)} \;\ge\; u
\quad\Longleftrightarrow\quad
t^2 - \frac{2uH}{3}\,t - 2u\,v \;\ge\; 0.
\]
Choose
\[
t \;=\; 2\sqrt{v\,u}\;+\;\frac{2}{3}\,H\,u.
\]
Writing $a:=2\sqrt{vu}$ and $b:=\tfrac{2}{3}Hu$ (so $t=a+b$) gives
\[
t^2 - \frac{2uH}{3}\,t - 2uv
= (a+b)^2 - \frac{2uH}{3}(a+b) - 2uv
= (4vu - 2uv) + \Big(\tfrac{8}{3}-\tfrac{4}{3}\Big)Hu\sqrt{vu} \ge 0.
\]
Therefore,
\[
\left\|\sum_{\ell=1}^{L}Z^{(\ell)}\right\|_2
\;\le\; 2\sqrt{v\,u} \;+\; \frac{2}{3}\,H\,u
\quad\text{with probability at least }1-\delta.
\]
Plugging $v\le \frac{4N^2}{L}$ and $H=\frac{2N}{L}$ yields
\[
\left\|\sum_{\ell=1}^{L}Z^{(\ell)}\right\|_2
\;\le\; 2\sqrt{\frac{4N^2}{L}\,u} \;+\; \frac{2}{3}\cdot\frac{2N}{L}\,u
= 4\,\frac{N}{\sqrt{L}}\,\sqrt{u} \;+\; \frac{4}{3}\,\frac{N}{L}\,u.
\]

\medskip
\noindent
Since $\sum_{\ell=1}^{L}Z^{(\ell)}=\frac{1}{L}\sum_{\ell=1}^{L}X^{(\ell)}$, we conclude that
\[
\left\|\frac{1}{L}\sum_{\ell=1}^{L}X^{(\ell)}\right\|_2
\;\le\; 4\,\frac{N}{\sqrt{L}}\,\sqrt{\log\!\frac{2N}{\delta}}
\;+\; \frac{4}{3}\,\frac{N}{L}\,\log\!\frac{2N}{\delta}
\quad\text{with probability }\ge 1-\delta,
\]
and therefore
\[
\|\widehat{S}-S\|_2
\;\le\; \|\tilde B\|_2
\;+\; 4\,\frac{N}{\sqrt{L}}\,\sqrt{\log\!\frac{2N}{\delta}}
\;+\; \frac{4}{3}\,\frac{N}{L}\,\log\!\frac{2N}{\delta},
\]
as claimed.
\end{proof}

The deviation bound decomposes into a variance term and a (deterministic) bias term $\tilde B$. We now bound $\tilde B$ explicitly as a function of $\beta$ and $P$. Before stating the result, we introduce the following technical result that proves a deterministic inequality and will be required to bound $\|\tilde B\|_2$.

\begin{lemma}\label{lem:ineq}
Let $p,q\in\mathbb{R}^R$ be probability vectors with nonnegative entries summing to one.
Let $a\in\arg\max_r p_r$ and $b\in\arg\max_r q_r$. Then
\[
\bigl|p^{\top} q - \mathbf{1}\{a=b\}\bigr|
\;\le\;
(1-p_a) + (1-q_b).
\]
\end{lemma}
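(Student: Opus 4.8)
The plan is to split on whether the two argmax indices coincide, since $\mathbf{1}\{a=b\}$ takes the value $1$ when $a=b$ and $0$ when $a\neq b$, and these two regimes call for genuinely different estimates of $p^\top q$. Before branching, I would record two elementary facts about probability vectors that are the only structural properties of the simplex the argument needs: (i) $0\le p^\top q\le 1$, which follows from $p^\top q\le \|p\|_1\,\|q\|_\infty=\max_s q_s\le 1$ together with the nonnegativity of all entries; and (ii) for any two \emph{distinct} coordinates, $q_a+q_b\le\sum_r q_r=1$, so that $q_a\le 1-q_b$.

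In the case $a=b$ (so $q_b=q_a$), fact (i) makes the left-hand side equal to $1-p^\top q$. I would lower-bound $p^\top q\ge p_a q_a$ by retaining only the single nonnegative term at $r=a$, which gives $1-p^\top q\le 1-p_a q_a$. It then suffices to verify $1-p_a q_a\le (1-p_a)+(1-q_a)$, and this rearranges to $(1-p_a)(1-q_a)\ge 0$, which holds since $p_a,q_a\in[0,1]$. This closes the equal-index case.

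In the case $a\neq b$ the indicator vanishes, so the left-hand side is just $p^\top q\ge 0$. Here I would isolate the $r=a$ term and bound the remaining entries of $q$ crudely by $q_r\le 1$: $p^\top q=p_aq_a+\sum_{r\neq a}p_rq_r\le p_aq_a+(1-p_a)$. Finally I would invoke fact (ii): $p_aq_a\le q_a\le 1-q_b$, yielding $p^\top q\le (1-p_a)+(1-q_b)$, exactly the claimed bound.

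\textbf{Main obstacle.} The argument is entirely elementary, and the only real subtlety lies in the $a\neq b$ case: the trivial bound $p^\top q\le 1$ is far too weak, and one must exploit the disjointness of the two argmax coordinates to upgrade $p_aq_a$ into the term $1-q_b$. Recognizing that $q_a+q_b\le 1$ is precisely the ingredient that produces the tight right-hand side $(1-p_a)+(1-q_b)$, rather than a looser quantity, is the crux of the proof; everything else is routine manipulation of nonnegative terms summing to one.
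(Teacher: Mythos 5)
Your proof is correct and follows essentially the same route as the paper's: a case split on $a=b$ versus $a\neq b$, with the lower bound $p^\top q\ge p_a q_a$ and the identity $1-xy\le(1-x)+(1-y)$ in the first case, and a bound on the off-diagonal mass plus $q_a\le 1-q_b$ in the second. The only (immaterial) difference is that in the $a\neq b$ case you bound $\sum_{r\neq a}p_rq_r\le 1-p_a$ directly, whereas the paper keeps the slightly tighter $(1-p_a)q_b$ and then verifies the final inequality by a sign computation; both land on the same bound.
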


\begin{proof}
We consider two cases.

\textbf{Case 1:} $a=b$.  
Then $\mathbf{1}\{a=b\}=1$ and
\[
|p^{\top} q - 1| = 1 - p^{\top} q \le 1 - p_a q_a
\quad\text{since }p^{\top} q \ge p_a q_a.
\]
By the inequality $(1-x)(1-y)\ge 0 \Rightarrow 1 - xy \le (1-x) + (1-y)$,
we obtain
\[
1 - p^{\top} q \le (1-p_a) + (1-q_a) = (1-p_a) + (1-q_b).
\]

\textbf{Case 2:} $a\neq b$.  
Then $\mathbf{1}\{a=b\}=0$.  
Because $b$ maximizes $q$, we have $q_r \le q_b$ for all $r$ and 
$q_a \le 1 - q_b$ (since the total mass outside $b$ is $1 - q_b$).
Hence
\[
p^{\top} q
= p_a q_a + \sum_{r\ne a} p_r q_r
\le p_a(1-q_b) + q_b \!\!\sum_{r\ne a} p_r
= p_a(1-q_b) + (1-p_a)q_b.
\]
Finally,
\[
(1-p_a)+(1-q_b) - \bigl[p_a(1-q_b)+(1-p_a)q_b\bigr]
= 2(1-p_a)(1-q_b)\ge 0,
\]
so $p^{\top} q \le (1-p_a)+(1-q_b)$.

Combining both cases gives the desired bound.
\end{proof}

\begin{lemma}[Bounding the bias term]
\label{prop:bias-to-angP-paper-explicit}
Fix $P\ge 1$ and $\beta>0$. With $S$ and $\tilde B$ as above, let $c:=2\tanh(1)$ and
$C_1:=\frac{2}{\sqrt{2\pi}}e^{-1/2}$. Then
\[
\|\tilde B\|_2 \;\le\;
\frac{4}{\sqrt{2\pi}}\;\frac{NP}{\beta}
\;+\;
\underbrace{\Big(\tfrac{4}{\sqrt{2\pi}}e^{-1/2}\Big)}_{=\,2C_1}\, NP\,e^{-c\beta}.
\]
\end{lemma}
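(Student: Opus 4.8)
The plan is to prove the bound entrywise and then pass to the spectral norm through the Frobenius norm. The central idea is that although $\widehat S^{(\ell)}_{ij}=\phi^{(\ell)}(Q_i)^\top\phi^{(\ell)}(K_j)$ is a \emph{soft} inner product, its \emph{hard} counterpart is an exact SimHash collision indicator whose mean is precisely $S_{ij}$. Concretely, since the softmax is monotone in its logits and the logit $s^{(\ell)}(x;r)=[\tanh(W^{(\ell)}x)]^\top v_r$ is maximized over $v_r\in\{\pm1\}^P$ at $v_a=\operatorname{sign}(W^{(\ell)}x)$, the argmax corner $a_i$ of $\phi^{(\ell)}(Q_i)$ equals the SimHash signature $\operatorname{sign}(W^{(\ell)}Q_i)$, and likewise $b_j=\operatorname{sign}(W^{(\ell)}K_j)$. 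By the Goemans--Williamson identity for random hyperplanes and independence across the $P$ rows, $\Pr[a_i=b_j]=\big(1-\tfrac1\pi\cos^{-1}(Q_i^\top K_j)\big)^P=S_{ij}$, so $\mathbb E[\mathbf 1\{a_i=b_j\}]=S_{ij}$. Writing $\tilde B_{ij}=\mathbb E[\widehat S^{(\ell)}_{ij}-\mathbf 1\{a_i=b_j\}]$ and applying Jensen together with Lemma~\ref{lem:ineq} (with $p=\phi^{(\ell)}(Q_i)$, $q=\phi^{(\ell)}(K_j)$) gives the separable bound $|\tilde B_{ij}|\le u_i+w_j$, where $u_i:=\mathbb E[1-[\phi^{(\ell)}(Q_i)]_{a_i}]$ and $w_j:=\mathbb E[1-[\phi^{(\ell)}(K_j)]_{b_j}]$.

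Next I would exploit that the softmax over the full hypercube $\{\pm1\}^P$ factorizes, because the logit is additive across coordinates. Setting $g=\tanh(W^{(\ell)}x)$, one gets $[\phi^{(\ell)}(x)]_r=\prod_{t=1}^P e^{\beta g_t (v_r)_t}/(2\cosh(\beta g_t))$, so the top-corner mass is $[\phi^{(\ell)}(x)]_a=\prod_{t=1}^P (1+e^{-2\beta|g_t|})^{-1}$. A union bound $1-\prod_t(1-\epsilon_t)\le\sum_t\epsilon_t$ with $\epsilon_t=e^{-2\beta|g_t|}/(1+e^{-2\beta|g_t|})\le e^{-2\beta|g_t|}$ then yields $1-[\phi^{(\ell)}(x)]_a\le\sum_{t=1}^P e^{-2\beta|g_t|}$. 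Taking expectations and using that each row is isotropic Gaussian with $\|x\|_2=1$, so $z:=(w^{(\ell)}_t)^\top x\sim\mathcal N(0,1)$ and $|g_t|=\tanh(|z|)$, reduces the whole problem to the single scalar integral $\mathbb E_{z}[e^{-2\beta\tanh(|z|)}]$, giving $u_i,w_j\le P\,\mathbb E_z[e^{-2\beta\tanh(|z|)}]$.

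The analytic core is then to bound $I(\beta):=\mathbb E_z[e^{-2\beta\tanh(|z|)}]$ by $\tfrac{2}{\sqrt{2\pi}\,\beta}+C_1 e^{-c\beta}$. I would split the integral at $|z|=1$. On $\{|z|<1\}$, concavity of $\tanh$ on $[0,1]$ gives the chord bound $\tanh(|z|)\ge\tanh(1)\,|z|$, so bounding the Gaussian density by $\tfrac{1}{\sqrt{2\pi}}$ and extending to the whole line yields $\int_{|z|<1}\le \tfrac{1}{\sqrt{2\pi}}\int_{\mathbb R}e^{-2\beta\tanh(1)|z|}\,dz=\tfrac{1}{\sqrt{2\pi}\,\tanh(1)\,\beta}\le\tfrac{2}{\sqrt{2\pi}\,\beta}$, using $\tanh(1)\ge\tfrac12$. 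On $\{|z|\ge1\}$, monotonicity gives $\tanh(|z|)\ge\tanh(1)$, hence $e^{-2\beta\tanh(|z|)}\le e^{-c\beta}$ with $c=2\tanh(1)$, and the Mills-ratio tail bound $\Pr(|z|\ge1)\le 2\varphi(1)=\tfrac{2}{\sqrt{2\pi}}e^{-1/2}=C_1$ controls the remaining mass. Combining, $u_i,w_j\le \tfrac{2}{\sqrt{2\pi}}\tfrac{P}{\beta}+C_1 P e^{-c\beta}=:\tau$, so $|\tilde B_{ij}|\le 2\tau$, and finally $\|\tilde B\|_2\le\|\tilde B\|_F\le 2N\tau=\tfrac{4}{\sqrt{2\pi}}\tfrac{NP}{\beta}+2C_1 NP e^{-c\beta}$, as claimed.

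The main obstacle I anticipate is the scalar Gaussian integral $I(\beta)$: securing both the correct $1/\beta$ rate near the origin (where $\tanh$ is essentially linear and the integrand is non-negligible only on a window of width $O(1/\beta)$) and a genuinely exponentially small tail term with clean constants requires the right split point together with the two complementary estimates ($\tanh$ concavity near $0$, saturation plus a Gaussian tail bound away from $0$). A secondary point to state carefully is the passage from the soft inner product to the hard indicator: Lemma~\ref{lem:ineq} must be invoked after conditioning on $W^{(\ell)}$ (so that $a_i,b_j$ are deterministic argmaxes), and only then averaged, which is precisely what licenses identifying $\mathbb E[\mathbf 1\{a_i=b_j\}]$ with $S_{ij}$.
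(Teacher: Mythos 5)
Your proposal is correct and follows essentially the same route as the paper's proof: reduce to the hard SimHash collision indicator via the deterministic inequality of Lemma~\ref{lem:ineq} (applied conditionally on $W^{(\ell)}$), factorize the softmax over hypercube corners to get the per-bit product form, bound the off-argmax mass by $\sum_t e^{-2\beta|u_t|}$, and control the scalar Gaussian integral by splitting at $|z|=1$. The only cosmetic differences are your chord bound $\tanh(|z|)\ge\tanh(1)\,|z|$ in place of the paper's $|\tanh z|\ge|z|/2$ (both yield the same $\tfrac{2}{\sqrt{2\pi}\beta}$ constant) and your explicit tracking of the tail constant $C_1$, which actually recovers the stated $2C_1$ factor more transparently than the paper's $\mathcal{O}(\cdot)$ shortcut.
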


\begin{proof}
Let us denote the inner product similarity by \( \rho := Q_i^\top K_j \), and recall that the angular kernel is \( \kappa_{\mathrm{ang}}(Q_i,K_j) := 1 - \frac{1}{\pi}\cos^{-1}(\rho) \) 
From standard LSH theory, for $P$ i.i.d. Gaussian hyperplanes $W^{(\ell)} \in \mathbb{R}^{P \times d}$, the probability that all $P$ bits match is exactly:
\[
\mathbb{P}\left(h_P(Q_i) = h_P(K_j)\right) = \kappa_{\mathrm{ang}}(Q_i, K_j)^P = S_{ij}.
\]

Now define the softmax sketch feature for the hash table \( \ell \):
\[
s^{(\ell)}(x; r) := \tanh(W^{(\ell)} x)^\top v_r,
\qquad
[\phi^{(\ell)}(x)]_r := \frac{e^{\beta s^{(\ell)}(x; r)}}{\sum_{r'} e^{\beta s^{(\ell)}(x; r')}},
\]
where \( v_r \in \{\pm1\}^P \) denotes the binary corner vectors of length \( P \), and \( R = 2^P \).
Let \( \widehat{S}^{(\ell)} \in \mathbb{R}^{N \times N} \) be the kernel matrix for a single hash table:
\[
\mathbb{E}(\widehat{S}^{(\ell)}_{ij}) := \mathbb{E}\left[  \left(\phi^{(\ell)}(Q_i)\right)^\top \left(\phi^{(\ell)}(K_j)\right) \right],
\qquad
S_{ij} := \kappa_{\mathrm{ang}}(Q_i, K_j)^P,
\]
and recall the bias matrix is \( \tilde{B} := \mathbb{E}[\widehat{S}^{(\ell)}] - S \).
Our goal is to bound \(\|\tilde{B}\|_2 \). To do this, fix any pair \( (i, j) \) and note:
\[
|\mathbb{E}(\widehat{S}^{(\ell)}_{ij}) - S_{ij}| = \left| \mathbb{E}\left[  \left(\phi^{(\ell)}(Q_i)\right)^\top \left(\phi^{(\ell)}(K_j)\right) \right] - \mathbb{P}[h_P(Q_i) = h_P(K_j)] \right|.
\]

\noindent
Next, let $r^\star(x)=\arg\max_r s^{(\ell)}(x;r)=\operatorname{sign}(u(x))$ denote the maximizing corner for $x$, where $u_t(x)=\tanh(w_t^\top x)$. 
The function $s^{(\ell)}(x;r)=\sum_{t=1}^P u_t(x)\,r_t$ is a linear form over the binary corners $r\in\{\pm1\}^P$. 
To evaluate the normalization term in the softmax, note that the exponentials factorize across coordinates because $r_t$ appears only in the term $u_t(x)r_t$. 
Hence,
\begin{flalign*}
\sum_{r\in\{\pm1\}^P} e^{\beta s^{(\ell)}(x;r)}
  &= \sum_{r_1=\pm1}\!\cdots\!\sum_{r_P=\pm1} \left(e^{\beta \sum_t u_t(x)r_t}\right)
  = \sum_{r_1=\pm1}\!\cdots\!\sum_{r_P=\pm1} \prod_{t=1}^P e^{\beta u_t(x)r_t}\nonumber\\
  &= \prod_{t=1}^P \left(\sum_{r_t=\pm1} e^{\beta u_t(x)r_t}\right)
  = \prod_{t=1}^P \!\big(e^{\beta u_t(x)} + e^{-\beta u_t(x)}\big)
  = \prod_{t=1}^P 2\cosh(\beta|u_t(x)|),
\end{flalign*}
where the last equality uses the evenness of the hyperbolic cosine, $\cosh(z)=\cosh(|z|)$. Therefore, the softmax probability assigned to the dominant corner $r^\star(x)$ can be written in closed form as
\[
[\phi^{(\ell)}(x)]_{r^\star(x)}
  = \frac{e^{\beta\sum_t |u_t(x)|}}{\prod_t 2\cosh(\beta|u_t(x)|)}
  = \prod_{t=1}^P \frac{e^{\beta|u_t(x)|}}{e^{\beta|u_t(x)|}+e^{-\beta|u_t(x)|}}
  = \prod_{t=1}^P \sigma\!\big(2\beta|u_t(x)|\big),
\]
where $\sigma(z)=1/(1+e^{-z})$ denotes the logistic function.  
This factorization reveals that each bit contributes independently to the model’s confidence in selecting its sign, and the total probability mass on $r^\star(x)$ is the product of these per-bit probabilities.

To bound the total probability mass outside the dominant corner, we use the inequality $1-\prod_t(1-a_t)\le \sum_t a_t$ for $a_t\in[0,1]$, which we apply to $a_t = 1-\sigma(2\beta|u_t(x)|)$. 
Hence,
\begin{align}
1-[\phi^{(\ell)}(x)]_{r^\star(x)}
  &= 1-\prod_{t=1}^P \sigma(2\beta|u_t(x)|)
   \le \sum_{t=1}^P \big(1-\sigma(2\beta|u_t(x)|)\big)
   \le \sum_{t=1}^P e^{-2\beta|u_t(x)|}, \label{eq:tanhphi}
\end{align}
where the final inequality follows from the standard logistic bound $1-\sigma(z)\le e^{-z}$ for all $z\ge0$.  
Intuitively, this means that the total softmax probability mass outside the most likely corner decays exponentially with the scaled activation strength $\beta|u_t(x)|$ along each coordinate.
Now, for each bit, $u_t(x)=\tanh(w_t^\top x)$ with $w_t^\top x\sim\mathcal N(0,1)$. Let $Z:=w_t^\top x$. Then
\[
\mathbb{E}[e^{-2\beta|u_t(x)|}] = \mathbb{E}[e^{-2\beta|\tanh(Z)|}].
\]

We split into two regions.  
(1) On $|Z|\le 1$, we use $|\tanh z| \ge \frac{|z|}{2}$, so $e^{-2\beta|\tanh(Z)|}\le e^{-\beta|Z|}$. Hence
\[
\mathbb{E}[e^{-2\beta|\tanh(Z)|}\mathbf{1}_{|Z|\le 1}]
\le \frac{2}{\sqrt{2\pi}}\int_0^1 e^{-\beta z}e^{-z^2/2}\,dz
\le \frac{2}{\sqrt{2\pi}}\cdot\frac{1}{\beta}.
\]

(2) On $|Z|>1$, we use $\tanh z \ge \tanh(1)$, so $e^{-2\beta|\tanh(Z)|}\le e^{-2\beta\tanh(1)}$. Thus
\[
\mathbb{E}[e^{-2\beta|\tanh(Z)|}\mathbf{1}_{|Z|>1}]
\le e^{-2\beta\tanh(1)}\mathbb{P}(|Z|>1)
=2 e^{-2\beta\tanh(1)}\,\mathbb{P}(Z>1)\leq e^{-2\beta\tanh(1)}\,\sqrt{\frac{2}{ \pi}} e^{-1 / 2}\le e^{-c\beta}\,,
\]
where the second last bound is due to Mill's inequality and the last inequality follows from the fact that $\sqrt{\frac{2}{ \pi}} e^{-1 / 2}\le1$. Here $c=2\tanh(1)$.
Combining the two expectations we get,
\[
\mathbb{E}[e^{-2\beta|u_t(x)|}]
\le \frac{2}{\sqrt{2\pi}\,\beta} + e^{-c\beta}, \quad c=2\tanh(1).
\]

Substituting back into Eq.~\eqref{eq:tanhphi}, we obtain
\[
\mathbb{E}\big[1 - [\phi^{(\ell)}(x)]_{r^\star(x)}\big]
\;\le\; \frac{2P}{\sqrt{2\pi}\,\beta} \;+\; \mathcal{O}(P e^{-c\beta}).
\]

Next, let $p := \phi^{(\ell)}(Q_i)$ and $q := \phi^{(\ell)}(K_j)$ denote the softmax feature vectors for $Q_i$ and $K_j$, and let 
$a := r^\star(Q_i)$, $b := r^\star(K_j)$ be their respective dominant corners. 
By the deterministic inequality in Lemma~\ref{lem:ineq}
\[
|p^{\top} q - \mathbf{1}\{a=b\}| \;\le\; (1-p_a) + (1-q_b),
\]
valid for any probability vectors $p,q$ and indices $a,b$, we have
\[
|\mathbb{E}[\widehat{S}^{(\ell)}_{ij}] - S_{ij}|
\;=\;
\big|\mathbb{E}[p^{\top} q] - \mathbb{P}[a=b]\big|
\;\le\;
\mathbb{E}[1 - p_a] + \mathbb{E}[1 - q_b].
\]

Using the bound on the expected softmax tail probability for both $Q_i$ and $K_j$ then gives
\[
|\mathbb{E}[\widehat{S}^{(\ell)}_{ij}] - S_{ij}|
\;\le\;
2\left(\frac{2P}{\sqrt{2\pi}\,\beta} + \mathcal{O}(P e^{-c\beta})\right)
\;=\;
\frac{4P}{\sqrt{2\pi}\,\beta} + \mathcal{O}(P e^{-c\beta}).
\]

Therefore,
\[
\|\tilde B\|_2 \le \|\tilde B\|_F \le N \sup_{i,j}|\tilde B_{ij}|
\;\le\; N\left(\frac{4P}{\sqrt{2\pi}\,\beta} + \mathcal{O}(P e^{-c\beta})\right).
\]
This proves the claim.
\end{proof}

We now propagate the kernel-level error into the attention matrix. This requires controlling the normalization (row sums) and its inverse, which we address next.

\subsection{From kernels to attention (numerator vs.\ denominator)}

Let $\widehat S=S+\Delta$, $D=\operatorname{diag}(S \mathbf{1})$, and $\widehat D=\operatorname{diag}(\widehat S \mathbf{1})=D+E$. Define the attention matrices
\[
A:=D^{-1} S,\qquad \widehat A:=\widehat D^{-1} \widehat S.
\]

The following lemma ties the row-sum perturbation $E$ to $\Delta$ and gives a simple invertibility condition for $\widehat D$.

\begin{lemma}[Row-sum and inverse diagonal control]\label{lem:rowsum-paper}
Recall that $s_{\min}=\min_i D_{ii}>0$. Then
\begin{enumerate}
\item $\|E\|_2 \le \|\Delta\|_{\infty} \le \sqrt{N}\,\|\Delta\|_2$.
\item If $\|E\|_2 \le s_{\min} / 2$, then $\|\widehat D^{-1}\|_2 \le 2 / s_{\min}$.
\end{enumerate}
\end{lemma}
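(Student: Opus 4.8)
The plan is to exploit that both $D$ and $E$ are diagonal, so every spectral-norm evaluation reduces to a maximum over diagonal entries. First I would observe that
\[
E = \widehat D - D = \operatorname{diag}\!\big((\widehat S - S)\mathbf{1}\big) = \operatorname{diag}(\Delta\,\mathbf{1}),
\]
so the $i$-th diagonal entry of $E$ is the signed row sum $\sum_j \Delta_{ij}$. Since the spectral norm of a diagonal matrix equals the largest absolute diagonal entry,
\[
\|E\|_2 = \max_i \Big|\sum_j \Delta_{ij}\Big| \le \max_i \sum_j |\Delta_{ij}| = \|\Delta\|_\infty,
\]
using the paper's convention $\|M\|_\infty = \max_i \sum_j |M_{ij}|$. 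This gives the first half of item (1).

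For the second inequality in item (1), I would bound each absolute row sum by the corresponding Euclidean row norm via Cauchy--Schwarz, $\sum_j |\Delta_{ij}| \le \sqrt{N}\,\|e_i^\top \Delta\|_2$, and then note $\|e_i^\top \Delta\|_2 = \|\Delta^\top e_i\|_2 \le \|\Delta^\top\|_2 = \|\Delta\|_2$. Taking the maximum over $i$ yields $\|\Delta\|_\infty \le \sqrt{N}\,\|\Delta\|_2$, completing item (1).

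For item (2), I would use that $\widehat D = D + E$ is itself diagonal with entries $\widehat D_{ii} = D_{ii} + E_{ii}$. By the definition $s_{\min} = \min_i D_{ii}$ we have $D_{ii} \ge s_{\min}$ for every $i$, whereas $|E_{ii}| \le \|E\|_2 \le s_{\min}/2$ under the stated hypothesis. Hence $\widehat D_{ii} \ge D_{ii} - |E_{ii}| \ge s_{\min} - s_{\min}/2 = s_{\min}/2 > 0$, so $\widehat D$ is invertible and, again since it is diagonal, $\|\widehat D^{-1}\|_2 = 1/\min_i \widehat D_{ii} \le 2/s_{\min}$.

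I do not expect any real obstacle here; the only points needing care are the norm-convention bookkeeping---specifically the identity $\|\operatorname{diag}(v)\|_2 = \|v\|_\infty$ and the $\|\cdot\|_\infty$-versus-$\|\cdot\|_2$ comparison---and making sure the passage from the signed row sum $\sum_j \Delta_{ij}$ to the absolute row sum is performed before bounding, so that the final estimate genuinely uses $\|\Delta\|_\infty$ and not a signed quantity.
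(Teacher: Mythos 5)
Your proof is correct and follows essentially the same route as the paper's: identify $E=\operatorname{diag}(\Delta\mathbf{1})$, bound its spectral norm by the maximum absolute row sum of $\Delta$, pass to $\sqrt{N}\,\|\Delta\|_2$ via Cauchy--Schwarz row by row, and control $\widehat D^{-1}$ entrywise using $\widehat D_{ii}\ge s_{\min}-\|E\|_2\ge s_{\min}/2$. No gaps.
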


\begin{proof}
\textbf{(1) Row-sum control.}
Since $\widehat S=S+\Delta$ and $\widehat D=\operatorname{diag}(\widehat S\mathbf 1)$, we rewrite $$ E \;:=\; \widehat D-D \;=\; \operatorname{diag}\big((\widehat S-S)\mathbf 1\big)
\;=\; \operatorname{diag}(\Delta \mathbf 1).$$

Hence each diagonal entry is $E_{ii}=(\Delta\mathbf 1)_i=\sum_{j=1}^N \Delta_{ij}$, so
\[
\|E\|_2 \;=\; \max_i |E_{ii}|
\;=\; \max_i \big|(\Delta \mathbf 1)_i\big|
\;\le\; \max_i \sum_{j=1}^N |\Delta_{ij}|
\;=\; \|\Delta\|_{\infty}.
\]

For the second inequality, by Cauchy--Schwarz on each row $i$,
\[
\sum_{j=1}^N |\Delta_{ij}|
\;\le\; \sqrt{N}\,\Big(\sum_{j=1}^N \Delta_{ij}^2\Big)^{1/2}
\;=\; \sqrt{N}\,\|\Delta_{i,\cdot}\|_2.
\]
Moreover,
\[
\max_i \|\Delta_{i,\cdot}\|_2
= \max_i \|\Delta^\top e_i\|_2
\;\le\; \|\Delta^\top\|_2 \|e_i\|_2
\;=\; \|\Delta\|_2.
\]
Taking the maximum over $i$ yields
\[
\|\Delta\|_{\infty}
\;=\; \max_i \sum_j |\Delta_{ij}|
\;\le\; \sqrt{N}\,\max_i \|\Delta_{i,\cdot}\|_2
\;\le\; \sqrt{N}\,\|\Delta\|_2.
\]

\noindent
\textbf{(2) Inverse diagonal control.}
Because $\widehat D = D+E$ is diagonal, its smallest diagonal entry satisfies
\[
\min_i \widehat D_{ii}
\;=\; \min_i (D_{ii}+E_{ii})
\;\ge\; \min_i D_{ii} - \max_i |E_{ii}|
\;=\; s_{\min} - \|E\|_2.
\]
If $\|E\|_2 \le s_{\min}/2$, then $\min_i \widehat D_{ii}\ge s_{\min}/2>0$, so $\widehat D$ is invertible and
\[
\|\widehat D^{-1}\|_2
\;=\; \max_i \frac{1}{\widehat D_{ii}}
\;\le\; \frac{1}{s_{\min}-\|E\|_2}
\;\le\; \frac{1}{s_{\min}-s_{\min}/2}
\;=\; \frac{2}{s_{\min}}.
\]
\end{proof}

Assumption (A1) ensures $D$ has diagonals of order $N$, but we must still control $E$. The next lemma shows that the condition $\|E\|_2\le s_{\min}/2$ holds with high probability once $L$ is moderately large.

\begin{lemma}[Concentration bound for $E$]\label{lem:E-concentration}
Under assumption~(A1), with probability at least $1-\delta$,
\[
\|E\|_2 \;\le\; \tfrac12\,s_{\min}
\]
provided that
\[
L \;\ge\; \frac{2}{C_1^2}\,\log\!\frac{2N^2}{\delta}.
\]
\end{lemma}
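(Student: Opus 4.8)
The plan is to reduce the spectral-norm bound on the diagonal matrix $E$ to a uniform scalar bound on its entries, and then control those entries by a per-coordinate Hoeffding inequality combined with a union bound. Since $E=\widehat D-D=\operatorname{diag}(\Delta\mathbf 1)$ is diagonal (Lemma~\ref{lem:rowsum-paper}(1)), we have $\|E\|_2=\max_i|E_{ii}|$ with $E_{ii}=\sum_{j=1}^N\Delta_{ij}=\sum_{j=1}^N(\widehat S_{ij}-S_{ij})$. Thus it suffices to bound each row sum $|E_{ii}|$ uniformly over $i$.

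First I would set up the scalar concentration. For fixed $(i,j)$, the entry $\widehat S_{ij}=\tfrac1L\sum_{\ell=1}^L \widehat S^{(\ell)}_{ij}$ is an average of $L$ independent terms $\widehat S^{(\ell)}_{ij}=\phi^{(\ell)}(Q_i)^\top\phi^{(\ell)}(K_j)$, each lying in $[0,1]$ because it is an inner product of two probability vectors (nonnegative entries, row sums one). Independence across $\ell$ follows from the independent draws $W^{(\ell)}$. Centering the average at $S_{ij}=\mathbb E[\widehat S^{(\ell)}_{ij}]$ (exact for the underlying RACE collision estimator; the soft-assignment bias $\tilde B$ is an additive term handled separately), Hoeffding's inequality gives
\[
\mathbb P\!\left(|\widehat S_{ij}-S_{ij}|\ge t\right)\le 2\exp(-2Lt^2).
\]

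Next I would union-bound over all $N^2$ pairs $(i,j)$ and pick $t$. Taking $t=\sqrt{\tfrac{1}{2L}\log\tfrac{2N^2}{\delta}}$ drives the total failure probability below $\delta$, so with probability $\ge 1-\delta$ we have $|\Delta_{ij}|\le t$ simultaneously for every $i,j$. The triangle inequality then yields, for every $i$,
\[
|E_{ii}|\le\sum_{j=1}^N|\Delta_{ij}|\le N t = N\sqrt{\tfrac{1}{2L}\log\tfrac{2N^2}{\delta}}.
\]
Finally, invoking assumption (A1) in the form $s_{\min}\ge C_1 N$, the target bound $\|E\|_2=\max_i|E_{ii}|\le \tfrac12 s_{\min}$ is implied by $Nt\le\tfrac12 C_1 N$, i.e.\ $t\le C_1/2$, which rearranges to exactly $L\ge \tfrac{2}{C_1^2}\log\tfrac{2N^2}{\delta}$.

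The routine steps are Hoeffding and the union bound; the one genuine subtlety is the treatment of the mean. The clean conclusion requires centering the average at $S_{ij}$, whereas the soft features have mean $S_{ij}+\tilde B_{ij}$, so the main obstacle is to argue that this bias does not enter the denominator-stability condition. Either one works in the unbiased idealization used to isolate the variance contribution, or one notes that $\|\operatorname{diag}(\tilde B\mathbf 1)\|_2\le N\max_{ij}|\tilde B_{ij}|=\mathcal{O}(NP/\beta)$ by Lemma~\ref{prop:bias-to-angP-paper-explicit}, which is absorbed once $\beta$ is taken large. A secondary point is that the crude bound $|E_{ii}|\le Nt$ discards cancellation across $j$; it is nonetheless exactly what produces the stated $\log(2N^2/\delta)$ dependence, so no sharper row-level argument is needed.
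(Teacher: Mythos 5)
Your proof follows essentially the same route as the paper's: reduce $\|E\|_2$ to $\max_i|E_{ii}|$ via the diagonal structure, apply Hoeffding to each entry $\widehat S_{ij}$ as an average of $L$ independent $[0,1]$-valued terms, union-bound over $N^2$ pairs, pass to row sums by the triangle inequality, and invoke (A1) to convert $t\le C_1/2$ into the stated condition on $L$. Your remark about the centering is well taken — the paper's own proof simply asserts that $\widehat S^{(\ell)}_{ij}$ has mean $S_{ij}$ even though its definition of $\tilde B$ implies the mean is $S_{ij}+\tilde B_{ij}$, so your explicit note that the bias must either be absorbed (it is $\mathcal O(NP/\beta)$ after summing over a row, hence controlled for large $\beta$) or handled separately is a point the paper elides rather than a flaw in your argument.
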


\begin{proof}
Recall $E=\widehat D-D=\operatorname{diag}(\Delta\mathbf 1)$ with 
$\Delta=\widehat S-S$. Hence
\[
\|E\|_2 \;=\; \max_i |(\Delta\mathbf 1)_i|
= \max_i \Big|\sum_{j=1}^N (\widehat S_{ij}-S_{ij})\Big|.
\]

Each entry $\widehat S_{ij}$ is the average of $L$ i.i.d.\ bounded random variables
$\widehat S^{(\ell)}_{ij}\in[0,1]$ with mean $S_{ij}$. 
By Hoeffding’s inequality,
\[
\Pr\!\big(|\widehat S_{ij}-S_{ij}|>\epsilon\big) 
\;\le\; 2\exp(-2L\epsilon^2).
\]
A union bound over all $N^2$ pairs $(i,j)$ gives
\[
\Pr\!\Big(\max_{i,j}|\widehat S_{ij}-S_{ij}|>\epsilon\Big) 
\;\le\; 2N^2 \exp(-2L\epsilon^2).
\]
Thus with probability at least $1-\delta$,
\[
\max_{i,j}|\widehat S_{ij}-S_{ij}| 
\;\le\; \sqrt{\tfrac{1}{2L}\,\log\!\tfrac{2N^2}{\delta}}.
\]

For any row $i$,
\[
\Big|\sum_{j=1}^N (\widehat S_{ij}-S_{ij})\Big|
\;\le\; N\,\max_j|\widehat S_{ij}-S_{ij}|,
\]
so with probability $\ge 1-\delta$,
\[
\|E\|_2 
\;\le\; N \sqrt{\tfrac{1}{2L}\,\log\!\tfrac{2N^2}{\delta}}.
\]

By (A1), $s_{\min} \ge C_1 N$. 
Therefore $\|E\|_2 \le \tfrac12 s_{\min}$ whenever
\[
N \sqrt{\tfrac{1}{2L}\,\log\!\tfrac{2N^2}{\delta}}
\;\le\; \tfrac12 C_1 N,
\]
which simplifies to the claimed condition
$L \ge \tfrac{2}{C_1^2}\,\log\!\tfrac{2N^2}{\delta}$.
\end{proof}

Now, with row-sums controlled, we relate $\widehat A$ and $A$ exactly through a decomposition that isolates the contributions of $\Delta$ in both the numerator and denominator.

\begin{lemma}[Exact perturbation identity and bound]\label{lem:perturb-paper}
\[
\widehat A - A \;=\; \widehat D^{-1}\Delta \;+\; (\widehat D^{-1}-D^{-1})S.
\]
Moreover, whenever $\|E\|_2 < s_{\min}$,
\[
\|\widehat A - A\|_2 
\;\le\;
\frac{\|\Delta\|_2}{s_{\min}-\|E\|_2}
\;+\;
\frac{\|S\|_2\,\|E\|_2}{s_{\min}(s_{\min}-\|E\|_2)}.
\]
\end{lemma}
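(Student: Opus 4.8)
The plan is to first derive the stated identity by pure algebra and then bound each of the two resulting terms in spectral norm using submultiplicativity together with the diagonal structure of $D$ and $\widehat D$.

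First I would establish the identity. Writing $\widehat S = S + \Delta$ and substituting into $\widehat A = \widehat D^{-1}\widehat S$ gives $\widehat A = \widehat D^{-1} S + \widehat D^{-1}\Delta$. Subtracting $A = D^{-1} S$ and grouping the two terms that both right-multiply $S$ yields exactly $\widehat A - A = \widehat D^{-1}\Delta + (\widehat D^{-1} - D^{-1}) S$. This step requires nothing beyond invertibility of $\widehat D$, which is guaranteed by the hypothesis $\|E\|_2 < s_{\min}$, since $\widehat D = D+E$ is diagonal with smallest entry at least $s_{\min} - \|E\|_2 > 0$.

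Next I would pass to spectral norms via the triangle inequality, giving $\|\widehat A - A\|_2 \le \|\widehat D^{-1}\Delta\|_2 + \|(\widehat D^{-1}-D^{-1})S\|_2$. For the first term, submultiplicativity gives $\|\widehat D^{-1}\Delta\|_2 \le \|\widehat D^{-1}\|_2\,\|\Delta\|_2$, and the diagonal structure gives $\|\widehat D^{-1}\|_2 = 1/\min_i \widehat D_{ii} \le 1/(s_{\min}-\|E\|_2)$, producing the first summand. For the second term, the key algebraic step is the resolvent identity $\widehat D^{-1} - D^{-1} = -\,\widehat D^{-1} E\, D^{-1}$, which follows from $\widehat D - D = E$. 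Then submultiplicativity yields $\|(\widehat D^{-1}-D^{-1})S\|_2 \le \|\widehat D^{-1}\|_2\,\|E\|_2\,\|D^{-1}\|_2\,\|S\|_2$, and using $\|D^{-1}\|_2 = 1/s_{\min}$ (from the diagonal structure and assumption (A1)) together with the bound on $\|\widehat D^{-1}\|_2$ gives the second summand. Combining the two produces the claimed inequality.

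The computation is elementary; the only points requiring care are (i) confirming that $\widehat D$ is invertible and correctly bounding $\min_i \widehat D_{ii} \ge s_{\min} - \|E\|_2$ under $\|E\|_2 < s_{\min}$, which reuses the diagonal reasoning already established in Lemma~\ref{lem:rowsum-paper}, and (ii) applying the resolvent identity in the correct operator order so that the diagonal inverse norms factor cleanly. I expect the resolvent-identity step to be the main conceptual step, since every other inequality is just submultiplicativity of the spectral norm.
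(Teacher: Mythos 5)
Your proposal is correct and follows essentially the same route as the paper's proof: the same algebraic decomposition for the identity, the resolvent identity $\widehat D^{-1}-D^{-1}=-\widehat D^{-1}E D^{-1}$ (the paper writes it as $\widehat D^{-1}(D-\widehat D)D^{-1}$), and the same diagonal-norm bounds $\|D^{-1}\|_2=1/s_{\min}$ and $\|\widehat D^{-1}\|_2\le 1/(s_{\min}-\|E\|_2)$. No gaps.
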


\begin{proof}
Using $\widehat S = S + \Delta$ and $\widehat D = D + E$,  
\[
\widehat A - A
= \widehat D^{-1}\widehat S - D^{-1}S
= \widehat D^{-1}\Delta + (\widehat D^{-1} - D^{-1})S.
\]

For the bound, we apply the submultiplicativity property of norms.  
When $\|E\|_2 < s_{\min}$, we have $\|D^{-1}\|_2 = 1/s_{\min}$ and  
$\|\widehat D^{-1}\|_2 \le 1/(s_{\min} - \|E\|_2)$.  
Moreover,
\[
\|\widehat D^{-1} - D^{-1}\|_2
= \|\widehat D^{-1}(D - \widehat D)D^{-1}\|_2
\le \|\widehat D^{-1}\|_2\,\|E\|_2\,\|D^{-1}\|_2
\le \frac{\|E\|_2}{s_{\min}(s_{\min} - \|E\|_2)}.
\]
Hence,
\[
\|\widehat A - A\|_2
\le \|\widehat D^{-1}\|_2\,\|\Delta\|_2
+ \|\widehat D^{-1} - D^{-1}\|_2\,\|S\|_2
\le
\frac{\|\Delta\|_2}{s_{\min}-\|E\|_2}
+ \frac{\|S\|_2\,\|E\|_2}{s_{\min}(s_{\min}-\|E\|_2)}.
\]
\end{proof}

Specializing Lemma~\ref{lem:perturb-paper} to the regime $\|E\|_2\le s_{\min}/2$ (guaranteed w.h.p.\ by Lemma~\ref{lem:E-concentration}), we obtain a concise spectral bound for $\|\widehat A-A\|_2$ in terms of $\|\Delta\|_2$.

\begin{lemma}[Attention deviation]\label{thm:attn-deviation-paper}
If $\|E\|_2 \le s_{\min} / 2$, then
\[
\|\widehat A-\!A\|_2 \le \frac{2\|\Delta\|_2}{s_{\min}}
+ \frac{2\|S\|_2}{s_{\min}^2}\,\sqrt{N}\,\|\Delta\|_2.
\]
\end{lemma}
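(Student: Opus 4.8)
The plan is to obtain this as a direct specialization of the exact perturbation bound in Lemma~\ref{lem:perturb-paper}, using the hypothesis $\|E\|_2 \le s_{\min}/2$ to simplify the denominators and Lemma~\ref{lem:rowsum-paper}(1) to eliminate the explicit dependence on $\|E\|_2$ in favor of $\|\Delta\|_2$. No new machinery is needed; the analytic content already resides in the two preceding lemmas, so this step is essentially algebraic bookkeeping.

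First I would invoke Lemma~\ref{lem:perturb-paper}, whose hypothesis $\|E\|_2 < s_{\min}$ is satisfied since $\|E\|_2 \le s_{\min}/2 < s_{\min}$. This gives
\[
\|\widehat A - A\|_2 \;\le\; \frac{\|\Delta\|_2}{s_{\min}-\|E\|_2} \;+\; \frac{\|S\|_2\,\|E\|_2}{s_{\min}(s_{\min}-\|E\|_2)}.
\]
Next I would use the hypothesis to control the common denominator factor: $\|E\|_2 \le s_{\min}/2$ yields $s_{\min}-\|E\|_2 \ge s_{\min}/2$, hence $\tfrac{1}{s_{\min}-\|E\|_2} \le \tfrac{2}{s_{\min}}$. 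Substituting this into the first term bounds it by $2\|\Delta\|_2/s_{\min}$, matching the first term of the claim, and substituting into the second term gives $\frac{2\|S\|_2\,\|E\|_2}{s_{\min}^2}$.

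The final step is to replace $\|E\|_2$ in this second term using Lemma~\ref{lem:rowsum-paper}(1), namely $\|E\|_2 \le \|\Delta\|_\infty \le \sqrt N\,\|\Delta\|_2$. This turns $\frac{2\|S\|_2\,\|E\|_2}{s_{\min}^2}$ into $\frac{2\|S\|_2}{s_{\min}^2}\sqrt N\,\|\Delta\|_2$, which is exactly the second term of the stated bound. Summing the two contributions yields the claimed inequality. I do not anticipate any genuine obstacle here, since every inequality used is either a hypothesis of the lemma or already established earlier; the only thing to be careful about is applying the $\|E\|_2 \le \sqrt N\,\|\Delta\|_2$ estimate only in the numerator of the second term (where a larger $\|E\|_2$ makes the bound weaker) while using the lower bound $s_{\min}-\|E\|_2 \ge s_{\min}/2$ in the denominators, so that both replacements preserve the direction of the inequality.
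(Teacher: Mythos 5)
Your proposal is correct and follows exactly the paper's own argument: specialize Lemma~\ref{lem:perturb-paper} using $s_{\min}-\|E\|_2 \ge s_{\min}/2$ and then replace $\|E\|_2$ by $\sqrt{N}\,\|\Delta\|_2$ via Lemma~\ref{lem:rowsum-paper}(1). No differences worth noting.
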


\begin{proof}
From Lemma~\ref{lem:perturb-paper},
\[
\|\widehat A - A\|_2
\le
\frac{\|\Delta\|_2}{s_{\min}-\|E\|_2}
+ \frac{\|S\|_2\,\|E\|_2}{s_{\min}(s_{\min}-\|E\|_2)}.
\]

Since $\|E\|_2 \le s_{\min}/2$, it follows that
\[
\frac{1}{s_{\min}-\|E\|_2} \;\le\; \frac{1}{s_{\min}/2}
= \frac{2}{s_{\min}}.
\]

Substituting this bound gives
\[
\|\widehat A-A\|_2
\;\le\; \frac{2\|\Delta\|_2}{s_{\min}}
+ \frac{2\|S\|_2}{s_{\min}^2}\,\|E\|_2
\]

By Lemma~\ref{lem:rowsum-paper}(1), $\|E\|_2 \le \|\Delta\|_{\infty} \le \sqrt{N}\,\|\Delta\|_2$. Therefore,
\[
\|\widehat A-A\|_2
\;\le\; \frac{2\|\Delta\|_2}{s_{\min}}
+ \frac{2\|S\|_2}{s_{\min}^2}\,\sqrt{N}\,\|\Delta\|_2,
\]
which proves the claim.
\end{proof}

Finally, we translate attention deviation into end-to-end output deviation by a single multiplication with the value matrix $V$, yielding the main finite-sample guarantee.

\begin{theorem}[End-to-end output error]\label{thm:output-paper}
Let $V\in\mathbb{R}^{N\times d}$ be the value matrix.  
With probability at least $1-\delta$, if $\|E\|_2 \le s_{\min}/2$ then
\[
\|\widehat O - O\|_F
\;\le\;
\Big(\tfrac{2}{s_{\min}}+\tfrac{2\|S\|_2 \sqrt{N}}{s_{\min}^2}\Big)
\left(
\frac{4}{\sqrt{2\pi}}\,\frac{NP}{\beta}
+ \mathcal{O}\big(NP e^{-c\beta}\big)
+ 4\,\frac{N}{\sqrt{L}}\,\sqrt{\log\!\tfrac{2N}{\delta}}
+ \tfrac{4}{3}\,\frac{N}{L}\,\log\!\tfrac{2N}{\delta}
\right)\,\|V\|_F,
\]
where $c=2\tanh(1)$. 
\end{theorem}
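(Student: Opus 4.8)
The plan is to exploit the fact that both the exact and approximate outputs are \emph{linear} images of the value matrix $V$ under their respective row-normalized attention matrices. Writing $O = AV$ with $A = D^{-1}S$ and $\widehat O = \widehat A V$ with $\widehat A = \widehat D^{-1}\widehat S$ (these identities follow directly from Eq.~\eqref{eq:oi} and the construction in Section~\ref{sec:final-alg}), the output discrepancy factors cleanly as $\widehat O - O = (\widehat A - A)V$. The first step is therefore to apply submultiplicativity,
\[
\|\widehat O - O\|_F \;=\; \|(\widehat A - A)V\|_F \;\le\; \|\widehat A - A\|_2\,\|V\|_F,
\]
which isolates the attention-matrix error $\|\widehat A - A\|_2$ as the only quantity left to bound and already produces the trailing factor $\|V\|_F$ of the claim.

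Next I would invoke the deterministic perturbation bound of Lemma~\ref{thm:attn-deviation-paper}, valid on the event $\|E\|_2 \le s_{\min}/2$ that the theorem assumes. This gives $\|\widehat A - A\|_2 \le \big(\tfrac{2}{s_{\min}} + \tfrac{2\|S\|_2\sqrt{N}}{s_{\min}^2}\big)\,\|\Delta\|_2$, so the prefactor of the stated inequality emerges automatically and the remaining task reduces to controlling the kernel error $\|\Delta\|_2 = \|\widehat S - S\|_2$. For this I would substitute the high-probability matrix-Bernstein bound of Theorem~\ref{thm:kernel-deviation-paper}, contributing the variance terms $4\tfrac{N}{\sqrt L}\sqrt{\log(2N/\delta)} + \tfrac{4}{3}\tfrac{N}{L}\log(2N/\delta)$ together with the residual bias $\|\tilde B\|_2$, and then replace $\|\tilde B\|_2$ using Lemma~\ref{prop:bias-to-angP-paper-explicit}, which supplies $\tfrac{4}{\sqrt{2\pi}}\tfrac{NP}{\beta} + \Ocal(NP e^{-c\beta})$ with $c = 2\tanh(1)$. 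Chaining these three substitutions and multiplying through by the prefactor and $\|V\|_F$ reproduces the displayed bound verbatim.

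The genuine subtlety is bookkeeping of the probabilistic event rather than any hard estimate: the deterministic chain output $\to$ attention $\to$ kernel is exact once $\|E\|_2 \le s_{\min}/2$ holds, and the theorem conditions on precisely this event, so the $1-\delta$ confidence is inherited entirely from Theorem~\ref{thm:kernel-deviation-paper}. In a fully self-contained version one must also certify $\|E\|_2 \le s_{\min}/2$ with high probability—this is exactly Lemma~\ref{lem:E-concentration} under (A1) and the requirement $L \ge \tfrac{2}{C_1^2}\log(2N^2/\delta)$—so the cleanest presentation takes a union bound over the kernel-deviation event and the row-sum event, splitting the budget (say $\delta/2$ each). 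I expect the main obstacle to be nothing deeper than ensuring this conditioning and union bound are handled consistently, so that a single $1-\delta$ guarantee simultaneously covers invertibility of $\widehat D$ (via Lemma~\ref{lem:rowsum-paper}) and the spectral control of $\Delta$; the norm manipulations themselves are routine submultiplicativity and triangle-inequality steps.
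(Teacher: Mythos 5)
Your proposal follows exactly the paper's own argument: factor the output error as $\|(\widehat A - A)V\|_F \le \|\widehat A - A\|_2\|V\|_F$, apply Lemma~\ref{thm:attn-deviation-paper} on the event $\|E\|_2 \le s_{\min}/2$, then chain in Theorem~\ref{thm:kernel-deviation-paper} for $\|\Delta\|_2$ and Lemma~\ref{prop:bias-to-angP-paper-explicit} for $\|\tilde B\|_2$. Your closing remark about certifying $\|E\|_2 \le s_{\min}/2$ via Lemma~\ref{lem:E-concentration} and a union bound is a reasonable refinement, but the theorem as stated simply conditions on that event, so the paper's proof (and yours) needs no further probabilistic bookkeeping.
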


\begin{proof}
By the estimator identity, $\widehat O=\widehat A V$ and $O=AV$, hence using submultiplicativity of the Frobenius norm, 
\[
\|\widehat O-O\|_F=\|(\widehat A-A)V\|_F\le \|\widehat A-A\|_2\,\|V\|_F.
\]
Under the condition $\|E\|_2\le s_{\min}/2$, Lemma~\ref{thm:attn-deviation-paper} (Attention deviation) gives
\[
\|\widehat A-A\|_2\;\le\; \Big(\tfrac{2}{s_{\min}}+\tfrac{2\|S\|_2\sqrt{N}}{s_{\min}^2}\Big)\,\|\widehat S-S\|_2.
\]
Applying Theorem~\ref{thm:kernel-deviation-paper} (Kernel deviation) yields, with probability at least $1-\delta$,
\[
\|\widehat S-S\|_2\;\le\;\|\tilde B\|_2\;+\;4\,\frac{N}{\sqrt{L}}\,\sqrt{\log\!\frac{2N}{\delta}}
\;+\;\frac{4}{3}\,\frac{N}{L}\,\log\!\frac{2N}{\delta}.
\]
Finally, substitute the explicit bias bound from Lemma~\ref{prop:bias-to-angP-paper-explicit}:
\[
\|\tilde B\|_2 \;\le\; \frac{4}{\sqrt{2\pi}}\,\frac{NP}{\beta} \;+\; \mathcal{O}\big(NP e^{-c\beta}\big),\qquad c=2\tanh(1).
\]
Combining the three equations proves the stated inequality:
\begin{flalign}
\|\widehat O - O\|_F
\;\le\;
\Big(\tfrac{2}{s_{\min}}+\tfrac{2\|S\|_2 \sqrt{N}}{s_{\min}^2}\Big)
\left(
\frac{4}{\sqrt{2\pi}}\,\frac{NP}{\beta}
+ \mathcal{O}\big(NP e^{-c\beta}\big)
+ 4\,\frac{N}{\sqrt{L}}\,\sqrt{\log\!\tfrac{2N}{\delta}}
+ \tfrac{4}{3}\,\frac{N}{L}\,\log\!\tfrac{2N}{\delta}
\right)\,\|V\|_F.\label{eq:output}   
\end{flalign}
\qedhere
\end{proof}

\begin{proof}[Proof of Theorem~\ref{thm:output-beta}]
Under assumptions (A1) and (A2), we have $s_{\min}\ge C_1N$ and $\|S\|_2\le C_2 N$. Therefore, the prefactor on the r.h.s. of Eq.~\eqref{eq:output} boils down to
\begin{flalign}
\frac{2}{s_{\min}}+\frac{2\|S\|_2\sqrt{N}}{s_{\min}^2}
\;\le\;
\frac{2}{C_1 N}+\frac{2C_2 N\sqrt{N}}{C_1^2 N^2}
\;=\;
\mathcal{O}\Big(\frac{1}{\sqrt{N}}\Big).\label{eq:simplify}
\end{flalign}

Therefore, combining eqs.~\eqref{eq:output} and \eqref{eq:simplify}  yields
\begin{flalign}
\|\widehat O - O\|_F
=&
\mathcal{O}\Big(\tfrac{1}{\sqrt{N}}\Big)\!
\Big(
\tfrac{N P}{\beta}
+ \tfrac{N}{\sqrt{L}}\sqrt{\log\tfrac{2N}{\delta}}
+ \tfrac{N}{L}\log\tfrac{2N}{\delta}
+ N P e^{-c\beta}
\Big)\|V\|_F\nonumber\\
=&\mathcal{O}\Big(
\tfrac{P\sqrt{N}}{\beta}
+ \sqrt{\tfrac{N}{L}\log\tfrac{2N}{\delta}}
+ \tfrac{\sqrt N}{L}\log\tfrac{2N}{\delta}
+ P\sqrt N\,e^{-c\beta}
\Big)\|V\|_F.\label{eq:intermediate}
\end{flalign}
Dividing both sides of Eq.~\eqref{eq:intermediate} by $\sqrt{N}$ to express the bound in terms of the per-token RMS error gives
\[
\|\widehat O - O\|_{\mathrm{rms}}
\;\le\;
\mathcal{O}\Big(
\tfrac{P}{\beta}
+ \sqrt{\tfrac{\log(2N/\delta)}{L}}
+ \tfrac{1}{L}\log\tfrac{2N}{\delta}
+ P e^{-c\beta}
\Big)\|V\|_F.
\]
To compare the last two variance terms, observe that
\[
\frac{\tfrac{1}{L}\log\tfrac{2N}{\delta}}
{\sqrt{\tfrac{\log(2N/\delta)}{L}}}
\;=\;
\sqrt{\tfrac{\log(2N/\delta)}{L}}.
\]
Hence, whenever $L \ge \log(2N/\delta)=\Theta(\log N)$, the $(1/L)\log(2N/\delta)$ term is asymptotically dominated by $\sqrt{\log(2N/\delta)/L}$ and can therefore be absorbed into it. The exponentially small correction $P e^{-c\beta}$ is also negligible for moderate values of~$\beta$. Absorbing all constants and the mild difference between $\log(2N/\delta)$ and $\log(N/\delta)$ into the Big-$\mathcal{O}$, we obtain
\[
\|\widehat{O} - O\|_{\mathrm{rms}}
\;=\;
\mathcal{O}\!\Big(
\tfrac{P}{\beta}
+ \sqrt{\tfrac{\log(N/\delta)}{L}}
\Big)\|V\|_F,
\]
with probability at least $1-\delta$. This completes the proof.
\end{proof}

\subsection{Causal Race Attention}\label{sec:causal}

\begin{algorithm}[H]
\caption{RACE Attention (causal)}
\label{algo:soft-race-causal}
\textbf{Input:} $Q,K,V\in\mathbb{R}^{N\times d}$; number of hash tables $L$; 
number of hyperplanes $P$; temperature $\beta>0$.\\
\textbf{Output:} $\widehat O \in \mathbb{R}^{N\times d}$.
\begin{algorithmic}[1]

\For{$\ell = 1,\dots,L$}
  \State Draw $W^{(\ell)} \in \mathbb{R}^{P\times d}$ \hfill // $P$ random hyperplanes
  \State Define the corner set $\mathcal V=\{\pm1\}^P$ ($R=2^P$) with $v_r\in\{\pm1\}^P$ \hfill // $R$ corners
  \State Build $\Phi_Q^{(\ell)},\Phi_K^{(\ell)}\in\mathbb{R}^{N\times R}$ with rows
  \[
    [\phi^{(\ell)}(x)]_r=\frac{\exp\{\beta(\tanh(W^{(\ell)}x))^\top v_r\}}
    {\sum_{r'}\exp\{\beta(\tanh(W^{(\ell)}x))^\top v_{r'}\}},
    \quad x\in\{Q_i,K_j\}.
  \]
  \State Initialize cumulative bucket statistics: $A_{\mathrm{cum}}^{(\ell)} \gets \mathbf{0}_R \in \mathbb{R}^{R}$, 
    $B_{\mathrm{cum}}^{(\ell)} \gets \mathbf{0}_{R\times d} \in \mathbb{R}^{R\times d}$.

  \For{$t = 1,\dots,N$}
    \State $\Phi_K^{(\ell)}[t,:] \in \mathbb{R}^{R}$, \quad $V_t \in \mathbb{R}^{d}$
    \State $A_{\mathrm{cum}}^{(\ell)} \gets A_{\mathrm{cum}}^{(\ell)} + (\Phi_K^{(\ell)}[t,:])^\top$ 
          \hfill // $\mathbb{R}^{R}$
    \State $B_{\mathrm{cum}}^{(\ell)} \gets B_{\mathrm{cum}}^{(\ell)} + (\Phi_K^{(\ell)}[t,:])^\top V_t$ 
          \hfill // $\mathbb{R}^{R\times d}$
    \State $\Phi_Q^{(\ell)}[t,:] \in \mathbb{R}^{R}$
    \State $\texttt{num}_t^{(\ell)} \gets \Phi_Q^{(\ell)}[t,:]\;B_{\mathrm{cum}}^{(\ell)}$ 
          \hfill // $(1\times R)\cdot(R\times d)=\mathbb{R}^{d}$
    \State $\texttt{den}_t^{(\ell)} \gets \Phi_Q^{(\ell)}[t,:]\;A_{\mathrm{cum}}^{(\ell)}$ 
          \hfill // $(1\times R)\cdot(R)=\mathbb{R}$
  \EndFor
\EndFor

\State For each $t=1,\dots,N$:
\[
\texttt{Num}_t \;=\; \frac{1}{L}\sum_{\ell=1}^L \texttt{num}_t^{(\ell)} \in \mathbb{R}^{d}, 
\qquad
\texttt{Den}_t \;=\; \frac{1}{L}\sum_{\ell=1}^L \texttt{den}_t^{(\ell)} \in \mathbb{R},
\qquad
\widehat O_t \;=\; \frac{\texttt{Num}_t}{\texttt{Den}_t} \in \mathbb{R}^{d}.
\]

\State Return: $\widehat O=\begin{bmatrix}\widehat O_1^\top\\ \vdots\\ \widehat O_N^\top\end{bmatrix}\in\mathbb{R}^{N\times d}$.
\end{algorithmic}
\end{algorithm}

We implemented the causal version (see Algorithm~\ref{algo:soft-race-causal}) efficiently using OpenMP/CUDA-based parallelization rather than a naive nested-loop approach. 
Each hash table is processed in a separate thread with its own cumulative bucket arrays, and updates are performed incrementally in a single left-to-right scan. This avoids redundant recomputation at every step using torch.cumsum() and enables CPU/GPU-level parallel execution with negligible synchronization overhead.

\begin{figure}
    \centering
    \includegraphics[width=0.9\linewidth]{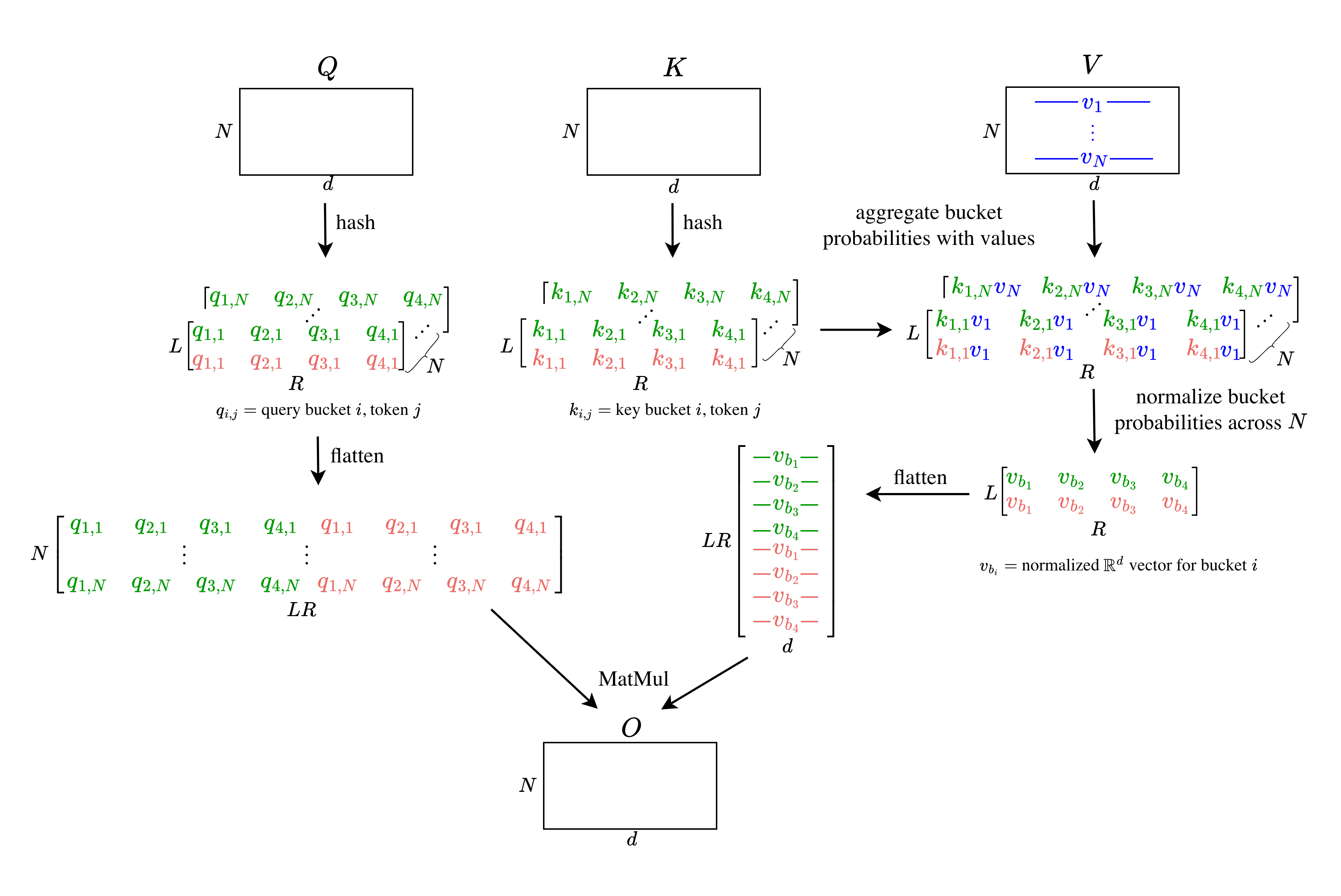}
    \caption{RACE Attention pipeline from the inputs $Q, K, V \in R^{N\times d}$ to the output $O \in R^{N\times d}$: queries/keys are soft-hashed into $R$ buckets across $L$ tables, keys/values form per-bucket summaries, and each query mixes the bucket summaries to produce $O$.}
    \label{fig:complete-figure}
\end{figure}

\begin{figure}
    \centering
    \includegraphics[width=0.9\linewidth]{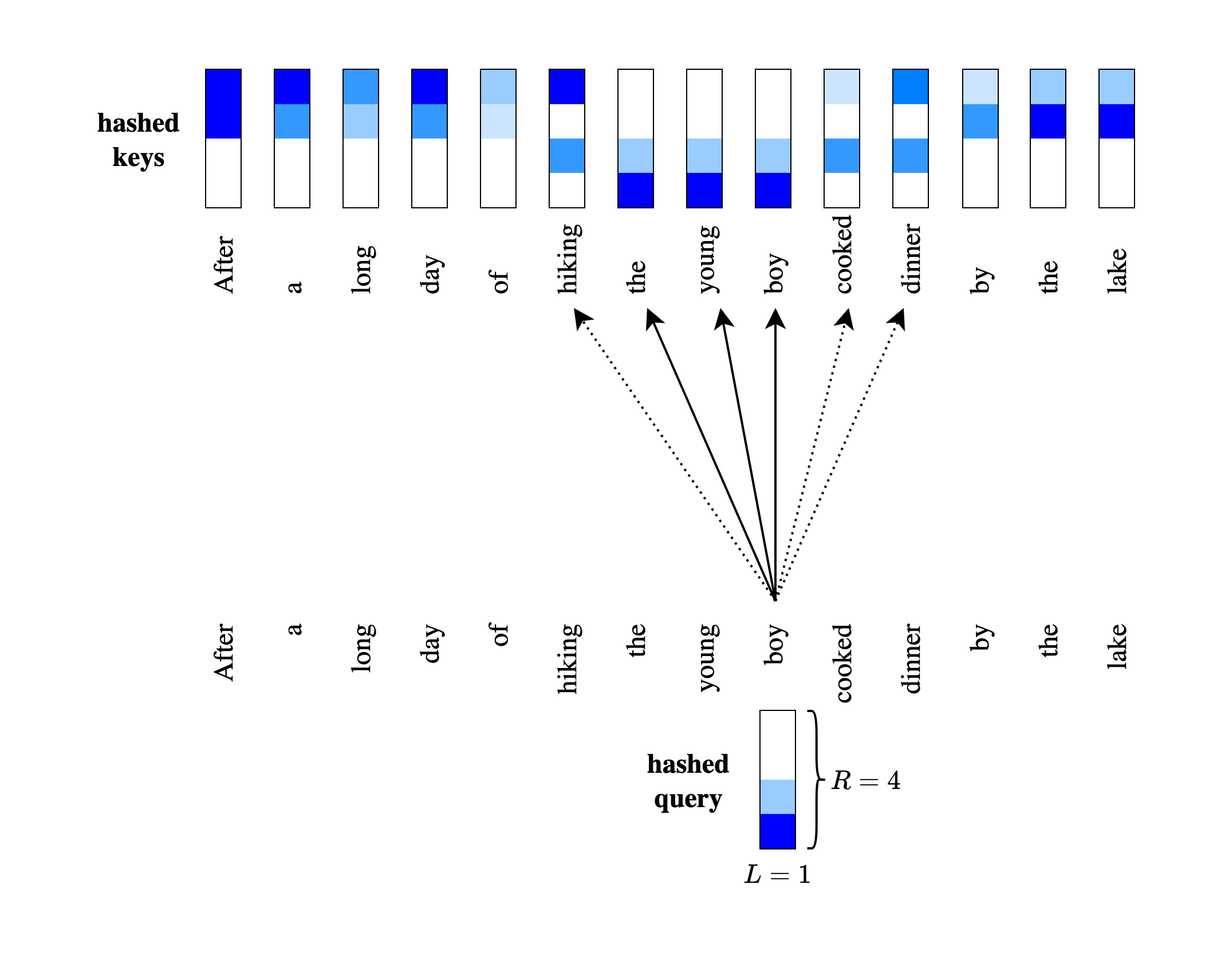}
    \caption{An intuitive schematic of how RACE Attention runs with $L$ hash tables and $R$ buckets per table. Similarity between Queries and Keys is highest if they both assign a higher mass to same buckets across all hash tables.}
    \label{fig:intuition}
\end{figure}

\end{document}